\def\eqref#1{equation~\ref{#1}}
\def\1{\bm{1}}
\DeclareMathAlphabet{\mathsfit}{\encodingdefault}{\sfdefault}{m}{sl}
\SetMathAlphabet{\mathsfit}{bold}{\encodingdefault}{\sfdefault}{bx}{n}
\definecolor{DarkBlue}{RGB}{22,54,93}
\theoremstyle{plain}
\newtheorem{theorem}{Theorem}
\newtheorem{proposition}{Proposition}
\newtheorem{lemma}{Lemma}
\newtheorem{corollary}{Corollary}
\theoremstyle{definition}
\newtheorem{definition}{Definition}
\newtheorem{assumption}{Assumption}
\theoremstyle{remark}
\newcommand{\Pb}{\mathbb{P}}
\newcommand{\Rb}{\mathbb{R}}
\newcommand{\hP}{\hat{P}}
\newcommand{\hPb}{\hat{\mathbb{P}}}
\newcommand{\hphi}{\hat{\phi}}
\newcommand{\hmu}{\hat{\mu}}
\newcommand{\tphi}{\tilde{\phi}}
\newcommand{\sphi}{\phi^\star}
\newcommand{\bpi}{\bar{\pi}}
\newcommand{\Eb}{\mathbb{E}}
\newcommand{\hEb}{\hat{\mathbb{E}}}
\newcommand{\Ps}{{P^\star}}
\newcommand{\Pshi}{\phi^\star}
\newcommand{\Mc}{\mathcal{M}}
\newcommand{\Uc}{\mathcal{U}}
\newcommand{\Sc}{\mathcal{S}}
\newcommand{\Ac}{\mathcal{A}}
\newcommand{\Dc}{\mathcal{D}}
\newcommand{\Nc}{\mathcal{N}}
\newcommand{\hb}{\hat{b}}
\newcommand{\ph}{{h^\prime}}
\newcommand{\Fc}{\mathcal{F}}
\newcommand{\Lc}{\mathcal{L}}
\newcommand{\hV}{{\hat{V}}}
\newcommand{\Ec}{\mathcal{E}}
\newcommand{\norm}[1]{\left\lVert#1\right\rVert}
\newcommand{\RM}[1]{\left(\romannumeral#1\right)}
\newcommand{\tw}{\widetilde{w}}
\newenvironment{Proof}[1]{\medskip\par\noindent
	{\bf Proof:\,}\,#1}{{\mbox{\,$\blacksquare$}\par}}
\newcommand{\yl}[1]{{\color{red}(YL: #1)}}
\title{Improved Sample Complexity for Reward-free Reinforcement Learning under Low-rank MDPs}
\author{Yuan Cheng\thanks{Equal contribution}\\ 
University of Science and Technology of China\\
{\tt   cy16@mail.ustc.edu.cn}\\
	\And
	Ruiquan Huang$^{*}$\\
 The Pennsylvania State University\\
 {\tt   rzh5514@psu.edu}\\
 \AND Jing Yang\\
 The Pennsylvania State University\\
{\tt  yangjing@psu.edu }
	\And
	Yingbin Liang\\
 The Ohio State University\\
 {\tt   liang.889@osu.edu}
}
\begin{document}
	\maketitle
\begin{abstract}

In reward-free reinforcement learning (RL), an agent explores the environment first without any reward information, in order to achieve certain learning goals afterwards for any given reward. In this paper we focus on reward-free RL under low-rank MDP models, in which both the representation and linear weight vectors are unknown. Although various algorithms have been proposed for reward-free low-rank MDPs, the corresponding sample complexity is still far from being satisfactory. In this work, we first provide the first known sample complexity lower bound that holds for any algorithm under low-rank MDPs. This lower bound implies it is strictly harder to find a near-optimal policy under low-rank MDPs than under linear MDPs. We then propose a novel model-based algorithm, coined RAFFLE, and show it can both find an $\epsilon$-optimal policy and achieve an $\epsilon$-accurate system identification via reward-free exploration, with a sample complexity significantly improving the previous results. Such a sample complexity matches our lower bound in the dependence on $\epsilon$, as well as on $K$ {in the large $d$ regime}, where $d$ and $K$ respectively denote the representation dimension and action space cardinality. Finally, we provide a planning algorithm (without further interaction with true environment) for RAFFLE to learn a near-accurate representation, which is the first known representation learning guarantee under the same setting.  
\end{abstract}

\section{Introduction}\label{intro}

Reward-free reinforcement learning, recently formalized by \citet{jin2020provably}, arises as a powerful framework to accommodate diverse demands in sequential learning applications. Under the reward-free RL framework, an agent first explores the environment without reward information during the exploration phase, with the objective to achieve certain learning goals later on for any given reward function during the planning phase. Such a learning goal can be to find an $\epsilon$-optimal policy, to achieve an $\epsilon$-accurate system identification, etc. 
	The reward-free RL paradigm may find broad application in many real-world engineering problems. For instance, reward-free exploration can be efficient when various reward functions are taken into consideration over a single environment, such as safe RL \citep{ miryoosefi2021simple,huang2022safe}, multi-objective RL~\citep{wu2021accommodating}, multi-task RL~\citep{agarwal2022provable,DBLP:journals/corr/abs-2206-05900}, etc. Studies of reward-free RL on the theoretical side have been largely focused on characterizing the sample complexity to achieve a learning goal under various MDP models. 
	Specifically, reward-free tabular RL has been studied in \citet{pmlr-v119-jin20d,menard2021fast,kaufmann2021adaptive,zhang2020nearly}.  
	For reward-free RL with function approximation, \citet{wang2020reward} studied \textit{linear MDPs} introduced by \citet{jin2020provably}, 
	where both the transition and the reward are linear functions of a given feature extractor, \citet{zhang2021reward} studied \textit{linear mixture MDPs} introduced by \citet{ayoub2020model}, and
	\citet{zanette2020provably} considered a classes of MDPs with \textit{low inherent Bellman error} introduced by \citet{zanette2020learning}. 

	In this paper, we focus on reward-free RL under low-rank MDPs, where the transition kernel admits a decomposition into two embedding functions that map to low dimensional spaces. Compared with linear MDPs, the feature functions (i.e., the representation) under low-rank MDPs are unknown, hence the design further requires representation learning and becomes more challenging.  Reward-free RL under low-rank MDPs was first studied by \citet{NEURIPS2020_e894d787}, and the authors introduced a provably efficient algorithm FLAMBE, 
	which achieves the learning goal of system identification 
	with a sample complexity of $\tilde{O}(\frac{H^{22}K^9d^7}{\epsilon^{10}})$. Here $d$, $H$ and $K$ respectively denote the representation dimension, episode horizon, and action space cardinality. Later on, \citet{modi2021model} proposed a model-free algorithm MOFFLE for reward-free RL under low-nonnegative-rank MDPs (where feature functions are non-negative), for which the sample complexity for finding an $\epsilon$-optimal policy scales as $\tilde{O}(\frac{H^5K^5d^3_{LV}}{\epsilon^2\eta})$ (which is rescaled under the condition of $\sum_{h=1}^{H}r_h \leq 1$ for fair comparison). Here, $d_{LV}$ denotes the non-negative rank of the transition kernel, which may be exponentially larger than $d$ as shown in \citet{NEURIPS2020_e894d787}, and $\eta$ denotes the positive reachability probability to all states, where $1/\eta$ can be as large as $\sqrt{d_{LV}}$ as shown in \citet{DBLP:conf/iclr/UeharaZS22}. Recently, a reward-free algorithm called RFOLIVE has been proposed under non-linear MDPs with low Bellman Eluder dimension \citep{DBLP:journals/corr/abs-2206-10770}, which can be specialized to low-rank MDPs. However, RFOLIVE is computationally more costly and considers a special reward function class, making their complexity result not directly comparable to other studies on reward-free low-rank MDPs.
	
This paper investigates reward-free RL under low-rank MDPs to address the following important open questions:
\begin{list}{$\bullet$}{\topsep=0.1ex \leftmargin=0.15in \rightmargin=0.1in \itemsep =0.01in}

\item For low-rank MDPs, none of previous studies establishes a lower bound on the sample complexity showing a necessary sample complexity requirement for near-optimal policy finding. 

 

\item The sample complexity of previous algorithms in \citet{NEURIPS2020_e894d787,modi2021model} on reward-free low-rank MDP is polynomial in the involved parameters, but still much higher than desirable. It is vital to improve the algorithm to further reduce the sample complexity. 
		
		
\item Previous studies on low-rank MDPs did not provide estimation accuracy guarantee on the learned representation (only on the transition kernels). However, such a representation learning guarantee can be very beneficial to reuse the learned representation in other RL environment.
		
		
\end{list}
	

	\vspace{-0.022in}
	\subsection{Main Contributions} \label{subsec: contribution}
	\vspace{-0.022in}
	We summarize our main contributions in this work below.
\begin{list}{$\bullet$}{\topsep=0.1ex \leftmargin=0.15in \rightmargin=0.1in \itemsep =0.01in}

\item {\bf Lower bound:} We provide the first-known lower bound $\tilde{\Omega}(\frac{HdK}{\epsilon^2})$ on the sample complexity that holds for any algorithm under the same low-rank MDP setting. Our proof lies in a novel construction of hard MDP instances that capture the necessity of the cardinality of the action space on the sample complexity. Interestingly, comparing this lower bound for low-rank MDPs with the upper bound for linear MDPs in \citet{wang2020reward} further implies that it is strictly more challenging to find near-optimal policy under low-rank MDPs than linear MDPs. 
  
\item {\bf Algorithm:} We propose a new model-based reward-free RL algorithm under low-rank MDPs. The central idea of RAFFLE lies in the construction of a novel exploration-driven reward, whose corresponding value function serves as an upper bound on the model estimation error. Hence, such a pseudo-reward encourages the exploration to collect samples over those state-action space where the model estimation error is large so that later stage of the algorithm can further reduce such an error based on those samples. Such reward construction is new for low-rank MDPs, and serve as the key reason for our improved sample complexity.

		
		
		
\item {\bf Sample complexity:} We show that our algorithm can both find an $\epsilon$-optimal policy and achieve an $\epsilon$-accurate system identification via reward-free exploration, with a sample complexity of $\tilde{O}(\frac{H^3d^2K(d^2+K)}{\epsilon^2})$, which matches our lower bound in terms of the dependence on $\epsilon$ as well as on $K$ {in the large $d$ regime}. Our result significantly improves that of $\tilde{O}(\frac{H^{22}K^9d^7}{\epsilon^{10}})$ in \citet{NEURIPS2020_e894d787} to achieve the same goal. Our result also improves the sample complexity of $\tilde{O}(\frac{H^5K^5d^3_{LV}}{\epsilon^2\eta})$ in \citet{modi2021model} in three aspects: order on $K$ is reduced; $d$ can be exponentially smaller than $d_{LV}$ as shown in \citet{NEURIPS2020_e894d787}; and no introduction of $\eta$, where $1/\eta$ can be as large as $\sqrt{d_{LV}}$. Further, our result on reward-free RL naturally achieves the goal of reward-known RL, 
which improves that of $\tilde{O}\left(\frac{H^5d^4K^2}{\epsilon^2}\right)$ in~\citet{DBLP:conf/iclr/UeharaZS22} by $ \Theta(H^2)$. 

\item {\bf Near-accurate representation learning:}
We design a planning algorithm that exploits the exploration phase of RAFFLE to further learn a provably near-accurate representation of the transition kernel without requiring further interaction with the environment. To the best of our knowledge, this is the first theoretical guarantee on representation learning for low-rank MDPs.

		
\end{list}


\section{Preliminaries and Problem Formulation}

\textbf{Notation.} For any $H\in\mathbb{N}$, we denote $[H]:=\{1,\dots,H\}$. For any vector $x$ and symmetric matrix $A$, we denote $\norm{x}_2$ as the $\ell_2$ norm of $x$ and $\|x\|_A : = \sqrt{x^\top Ax}$. For any matrix $A$, we denote $\norm{A}_F$ as its Frobenius norm and let $\sigma_i(A)$ be its $i$-th largest singular value. For two probability measures $P, Q\in \Omega$, we use $\|P-Q\|_{TV}$ to denote their total variation distance.  
\vspace{-0.1in}
\subsection{Episodic MDPs}
We consider an episodic Markov decision process (MDP) $\Mc = (\Sc,\Ac, H, P, r)$, where $\Sc$ can be an arbitrarily large state space; $\Ac$ is a finite action space with cardinality $K$; $H$ is the number of steps in each episode; $P: \Sc \times \Ac \times \Sc\rightarrow [0,1]$ is the time-dependent transition kernel, where $P_h(s_{h+1}|s_h,a_h)$ denotes the transition probability from the state-action pair $(s_h,a_h)$ at step $h$ to state $s_{h+1}$ in the next step; $r_h: \Sc \times \Ac \rightarrow [0,1]$ denotes the deterministic
	reward function  at step $h$; We further normalize the summation of reward function as $\sum_{h}^H r_h\leq 1$.
	A policy $\pi$ is a set of mappings $\{\pi_h : \Sc \rightarrow \Delta(\Ac)\}_{h\in[H]}$, where $\Delta(\Ac)$ is the set of all probability distributions over the action space $\Ac$. 
	Further, $a \sim \Uc(\Ac)$ indicates the uniform selection of an action $a$ from $\Ac$.
	
	In each episode of the MDP, we assume that a fixed initial state $s_1$ is drawn. Then, at each step $h \in [H]$. the agent observes state $s_h 
	\in \Sc$, takes an action $a_h \in \Ac$ under a policy $\pi_h$, and receives a reward $r_h(s_h,a_h)$ (in the reward-known setting), and then the system transits to the next state $s_{h+1}$ with probability $P_h(s_{h+1}|s_h,a_h)$. The episode ends after $H$ steps.
	
	As standard in the literature, we use $s_h \sim (P, \pi)$ to denote a state sampled by executing the policy $\pi$ under the transition kernel $P$ for $h-1$ steps. If the previous state-action pair $(s_{h-1},a_{h-1})$ is given, we use $s_h\sim P$ to denote that $s_h$ follows the distribution $P_h(\cdot|s_{h-1},a_{h-1})$. We use the notation $\Eb_{(s_h, a_h) \sim (P, \pi)}\left[\cdot\right]$ to denote the expectation over states $s_h \sim (P,\pi)$ and actions $a_h \sim \pi$. 
	
	For a given policy $\pi$ and an MDP $\mathcal{M} = (\Sc,\Ac, H, P, r)$, we  denote the value function starting from state $s_h$ at step $h$ as $V^\pi_{h,P,r}(s_h): = \Eb_{(s_{h^\prime},a_{h^\prime}) \sim (P,\pi)}\left[\sum_{h^\prime=h}^{H}r_{h^\prime}(s_{h^\prime},a_{h^\prime})|s_h\right].$
	We use $V^\pi_{P,r}$ to denote $V^\pi_{1,P,r}(s_1)$ for simplicity.
	Similarly, we denote the action-value function starting from state action pair $(s_h,a_h)$ at step $h$ as  $Q^\pi_{h,P,r}(s_h,a_h):= r_h(s_h,a_h)+ \Eb_{s_{h+1} \sim P}\left[V^{\pi}_{h+1,P,r}(s_{h+1})|s_h,a_h\right].$
	
	We use $P^\star$ to denote the transition kernel of the true environment and for simplicity, denote $\Eb_{(s_h,a_h) \sim (P^\star,\pi)}[\cdot]$ as $\Eb_{\pi}^\star[\cdot]$. Given a reward function $r$, there always exists an optimal policy $\pi^\star$ that yields the optimal value $V^\star_{P^\star,r}=\sup_{\pi} V_{P^\star,r}^{\pi}$. 
	
	
\vspace{-0.05in}	
	\subsection{Low-rank MDPs}
 \vspace{-0.05in}
	This paper focuses on the low-rank MDPs \citep{NEURIPS2020_e894d787} defined as follows.
	\begin{definition}\label{definition: Low_rank}
		{\rm (Low-rank MDPs).} A transition probability $P^\star_h: \Sc \times \Ac \rightarrow \Delta(\Ac)$ admits a low-rank decomposition with dimension $d \in \mathbb{N}$ if there exists two embedding functions $\phi^\star_h: \Sc \times \Ac \rightarrow \Rb^d$ and $\mu^\star_h: \Sc \rightarrow \Rb^d$ such that $P^\star_h(s^\prime|s,a)=\left<\phi^\star_h(s,a),\mu^\star_h(s^\prime)\right>, \forall s, s^\prime \in \Sc, a \in \Ac.$

		For normalization, we assume $\|\phi^\star_h(s,a)\|_2\leq 1$ for all $(s,a)$, and for any function $g: \Sc \rightarrow [0,1], \|\int \mu^\star_h(s)g(s)ds \|_2\leq\sqrt{d}$. An MDP $\Mc$ is a low-rank MDP with dimension $d$ if for each $h \in [H]$, $P_h$ admits a low-rank decomposition with dimension $d$. We use $\phi^\star=\{\phi^\star_h\}_{h \in [H]}$ and $\mu^\star=\{\mu^\star_h\}_{h \in [H]}$ to denote the embeddings for $P^\star$. 
	\end{definition}

	We remark that when $\phi_h$ is revealed to the agent, low-rank MDPs specialize to linear MDPs \citep{wang2020reward, jin2020provably}. Essentially, low-rank MDPs do not assume that the features $\{\phi_h\}_h$ are known a priori. The lack of knowledge on features in fact invokes a nonlinear structure, which makes the model strictly harder than linear MDPs or tabular models. Since it is impossible to learn a model in polynomial time if there is no assumption on features $\phi_h$ and $\mu_h$, we adopt the following conventional assumption from the recent studies on low-rank MDPs.

	
	\begin{assumption}\label{assumption: realizability}(Realizability).
		A learning agent can access to a model class $ \{(\Phi,\Psi)\}$ that contains the true model, i.e., the embeddings  $\phi^\star \in \Phi, \mu^\star \in \Psi$. 
	\end{assumption}
	
	
	While we assume the cardinality of the function classes to be finite for simplicity, extensions to infinite classes with bounded statistical complexity (such as bounded covering number) are not difficult \citep{ sun2019model,NEURIPS2020_e894d787}. 
	
\vspace{-0.05in}	
	\subsection{Reward-free RL and Learning Objectives}
 \vspace{-0.05in}
	
	
	Reward-free RL typically has two phases: exploration and planning. In the {\em exploration} phase, an agent explores the state space via interaction with the true environment and can collect samples over multiple episodes, but without access to the reward information. In the {\em planning} phase, the agent is no longer allowed to interact with the environment, and for any given reward function, is required to achieve certain learning goals (elaborated below) based on the outcome of the exploration phase.
	
	
	
	The planning phase may require the agent to achieve different learning goals. In this paper, we focus on three of such goals. The most popular goal in reward-free RL is to find a near-optimal policy that achieves the best value function under the true environment with $\epsilon$-accuracy, as defined below.
\begin{definition}\label{definition: best policy}($\epsilon$-optimal policy). Fix $\epsilon > 0$. For any given reward function $r$, a learned policy $\pi$ is $\epsilon$-optimal if it satisfies
		$V^{\pi^\star}_{\Ps,r}-V^{\pi}_{\Ps,r} \leq \epsilon$.
\end{definition}

For model-based learning, \citet{NEURIPS2020_e894d787} proposed  \textit{system identification} as another useful learning goal, defined as follows. 
\begin{definition}[$\epsilon$-accurate system identification]\label{definition: system identification} Fix $\epsilon > 0$. Given a model class $ {(\Phi,\Psi})$, a learned model $ (\hat{\phi},\hmu)$ is said to achieve $\epsilon$-accurate system identification if it uniformly approximates the true model $P^\star$, i.e., $\forall\pi, h \in \left[H\right]$, $\Eb^\star_{\pi}\left[\left\|\left<\hphi_h(s_h,a_h),\hmu_h(\cdot)\right>-P_h^\star(\cdot|s_h,a_h)\right\|_{TV}\right] \leq \epsilon$.
	
\end{definition}
\vspace{-0.1in}

{Besides those two common learning goals, we also propose an additional goal on near-accurate representation learning. Towards that,}
we introduce the following divergence-based metric to quantify distance between two representations, which has been used in supervised learning~\citep{DBLP:conf/iclr/DuHKLL21}.
\begin{definition}(Divergence between two representations). \label{def:divergence}
Given a distribution $q$ over $\Sc \times \Ac$
	and two representations $\phi,\phi^\prime \in \Phi$, define the covariance between $\phi$ and $\phi^\prime$ w.r.t $q$ as $\Sigma_{(s,a) \sim q}(\phi,\phi^\prime)=\Eb\left[\phi(s,a)\phi^\prime(s,a)^\top\right]$. Then, the divergence between $\phi$ and $\phi^\prime$ with respect to $q$ is defined as
	\begin{align*}
		& D_q(\phi,\phi^\prime)= \Sigma_{(s,a) \sim q}(\phi^\prime,\phi^\prime) -
		\Sigma_{(s,a) \sim q}(\phi^\prime,\phi)
		\left(\Sigma_{(s,a) \sim q}(\phi,\phi)\right)^\dagger
		\Sigma_{(s,a) \sim q}(\phi,\phi^\prime).
	\end{align*}
\end{definition}
It can be verified that $D_q(\phi,\phi^\prime) \succeq 0$ (i.e., positive semidefinite) and $D_q(\phi,\phi) = 0$ for any $\phi,\phi^\prime \in \Phi$.


\section{Lower Bound on Sample Complexity}

In this section, we provide a lower bound on the sample complexity that all reward-free RL algorithms must satisfy under low-rank MDPs. The detailed proof can be found in \Cref{sec:lower_bound}. 
\begin{theorem}[Lower bound]\label{th:lowerbound}
	For any algorithm that can output an $\epsilon$-optimal policy (as \Cref{definition: best policy}), if $H>\max(24\epsilon,4),S\geq6,K\geq 3$ and $\delta<1/16$, then there exists a low-rank MDP model ${\Mc}$ such that the number of trajectories sampled by the algorithm is at least $\Omega\left(\frac{HdK}{\epsilon^2}\right)$.
\end{theorem}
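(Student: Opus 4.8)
The plan is to establish the lower bound $\tilde{\Omega}(HdK/\epsilon^2)$ via a reduction to a multi-armed-bandit–style hypothesis-testing argument, embedding many hard bandit instances into a single low-rank MDP so that learning an $\epsilon$-optimal policy forces the algorithm to distinguish them. First I would build the hard instance. The state space will consist of a small "gadget" of states whose transitions depend on a hidden parameter. A natural construction: at each step $h$ there is a set of $\Theta(d)$ "branching" states, and from each such state the $K$ actions lead to one of two absorbing-type outcomes (a "good" state yielding future reward vs. a "bad" state yielding none), where exactly one action per (state, step) is secretly good. Because the transition from each branching state is a distribution over only two successor states, the per-step transition matrix has rank $O(d)$, so the MDP is genuinely low-rank with dimension $d$; one must check the normalization conditions of \Cref{definition: Low_rank} ($\|\phi_h^\star\|_2\le 1$ and the $\sqrt d$ bound on $\mu_h^\star$), which holds because each $\phi_h^\star(s,a)$ can be taken to be a scaled indicator vector.

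The core information-theoretic step is the standard change-of-measure / Assouad-type bound. Since the agent gets no reward during exploration, the only information it collects is the realized transitions. I would index the hidden instance by a vector $\theta$ with one coordinate per (branching state, step) pair, $\Theta(Hd)$ coordinates, each taking one of $K$ values, with transition probabilities perturbed by $O(\epsilon/H)$ (so that identifying all coordinates is necessary and sufficient for an $\epsilon$-optimal policy, using $\sum_h r_h\le 1$ and $H>24\epsilon$ to control the value gap). For a single coordinate, distinguishing which of the $K$ actions is good, given observations only of transitions out of that state, is exactly a $K$-armed problem with gap $O(\epsilon/H)$; by a KL / Le Cam argument, $\Omega(K H^2/\epsilon^2)$ visits to that state are needed. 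Summing over the $\Theta(Hd)$ independent coordinates, and noting that a single trajectory of length $H$ can visit at most $O(1)$ of the $\Theta(Hd)$ relevant gadget-states per step — so a trajectory provides useful data for only $O(H)$ of the coordinates while there are $\Theta(Hd)$ of them to pin down — a counting/pigeonhole argument converts the per-coordinate bound into a total of $\Omega(HdK/\epsilon^2)$ trajectories. The $\delta<1/16$ hypothesis gives the usual constant-probability-of-error slack in Le Cam's / Fano's inequality.

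Concretely, the steps in order: (1) specify the MDP family (states, the $\Theta(Hd)$ gadgets, the two-successor transitions, the $\epsilon/H$ perturbation, the reward-at-the-end structure) and verify low-rankness with dimension $d$ and the normalization bounds; (2) show that any $\epsilon$-optimal policy must correctly identify a constant fraction of the hidden coordinates $\theta$, translating the policy-suboptimality requirement into a coordinate-recovery requirement; (3) bound per-coordinate, per-visit information via KL divergence between the perturbed transition distributions (of order $(\epsilon/H)^2$), so that recovering one coordinate with probability $>1/2$ needs $\Omega(KH^2/\epsilon^2)$ visits; (4) use a visitation/counting argument — each of the $n$ trajectories touches at most $O(H)$ gadgets — to conclude $n\cdot H \gtrsim Hd \cdot KH^2/\epsilon^2$, i.e. $n = \Omega(HdK/\epsilon^2)$; (5) assemble via an averaging (Yao's principle) or Assouad argument, invoking $S\ge 6$, $K\ge 3$, $H>\max(24\epsilon,4)$ to make the construction well-defined and the constants work out.

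The main obstacle I expect is step (2)/(4): making precise the claim that an $\epsilon$-optimal policy genuinely requires pinning down $\Theta(Hd)$ separate coordinates rather than, say, $O(H)$ of them — the value function must be designed so that errors on different gadgets do not "cancel" but rather additively degrade the value, which forces a careful layering of the reward so that each hidden coordinate contributes an $\Omega(\epsilon/(Hd))\cdot(\text{something})$ independent term, while simultaneously each gadget is reachable with non-negligible probability under some policy (so the algorithm cannot ignore it) yet only $O(H)$ gadgets are reachable along any single trajectory. Balancing "reachable enough to matter" against "not all reachable at once" is the delicate part; the $d$ factor in the bound lives entirely in this tension, and getting the reachability probabilities and reward weights to simultaneously satisfy both sides is where the construction must be engineered carefully. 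The $\epsilon$-dependence and the $K$-dependence, by contrast, are routine bandit lower-bound bookkeeping.
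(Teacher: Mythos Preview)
Your approach has a genuine gap in the perturbation accounting, and with it the $H$ factor disappears. Take the most favorable resolution of the tension you flag in your last paragraph: make the between-step transitions uniform over the $\Theta(d)$ branching states, so that the value of any policy genuinely averages over all $\Theta(Hd)$ gadgets. The reward normalization $\sum_h r_h\le 1$ then forces per-step rewards to be $O(1/H)$, and the value contribution of correctly solving gadget $(h,i)$ is at most $(1/d)\cdot O(1/H)\cdot\epsilon'$, where $\epsilon'$ is the transition perturbation and $1/d$ is the visitation probability. Summed over all $\Theta(Hd)$ gadgets the maximal value gap is $\Theta(\epsilon')$, so making an $\epsilon$-optimal policy require a constant fraction of the gadgets to be solved forces $\epsilon'=\Theta(\epsilon)$, \emph{not} $\epsilon/H$. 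With perturbation $\Theta(\epsilon)$, each gadget needs only $\Omega(K/\epsilon^2)$ visits for best-arm identification, and your step-(4) counting yields $n\ge\Omega(dK/\epsilon^2)$ --- no $H$. (Separately, the arithmetic in step (4) as written, $n\cdot H\gtrsim Hd\cdot KH^2/\epsilon^2$, gives $n\ge H^2dK/\epsilon^2$, not $HdK/\epsilon^2$; this is moot once $\epsilon'$ is corrected.) If instead the agent chooses which of the $d$ states to visit, the policy's value depends on only $H$ gadgets and you lose the $d$ factor; if good/bad are literally absorbing you visit only one gadget per trajectory and the Assouad premise collapses entirely.

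The paper's construction avoids the Assouad route. There is a \emph{single} hidden triple $(h^*,\ell^*,a^*)$ and a single perturbation of size $2\epsilon$; the argument is a needle-in-a-haystack change-of-measure against a null MDP, following \citet{domingues2020episodic}. The $H$ factor enters through a \emph{waiting state} $s_w$ at which the agent may idle for up to $\bar H=\Theta(H)$ steps before entering a binary tree of depth $D$ with $L=2^{D-1}=\Theta(d)$ leaves; tree transitions are deterministic, so each trajectory reaches exactly one leaf at exactly one step and takes one action there. Explicit $S$-dimensional features (with $d=S$) are written down so the MDP is genuinely rank $d$, and the action set is split into two tree-navigation actions plus $K-3$ extra actions that matter only at the leaf --- this decouples $K$ from $d$ and is what puts $K$ (rather than $\log K$) into the bound. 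With $\Theta(\bar H L K)=\Theta(HdK)$ candidate instances and one informative leaf-visit per trajectory, summing the per-instance change-of-measure bound gives $\Omega(HdK/\epsilon^2)$ directly; the obstacle you flag never has to be resolved because only one gadget matters per instance.
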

To the best of our knowledge, \Cref{th:lowerbound} establishes the first lower bound for learning low-rank MDPs {in the reward-free setting}. More importantly, \Cref{th:lowerbound} shows that it is strictly more costly in terms of sample complexity to find near-optimal policies under {\em low-rank} MDPs (which have unknown representations) than {\em linear} MDPs (which have known representations) by at least a factor of $\Omega(K)$. This can be seen as the lower bound in \Cref{th:lowerbound} for low-rank MDPs has an additional term $K$ compared to the upper bound $\tilde{O}\left(\frac{d^3H^4}{\epsilon^2}\right)$ provided in \citet{wang2020reward} for linear MDPs. This can be explained intuitively as follows. In linear models, all the representations $\phi: \Sc\times \Ac \rightarrow \mathbb{R}^d$ are known. Then, it requires at most $O(d)$ actions with linearly independent features to realize all transitions $\langle\phi,\mu\rangle$. However, learning low-rank MDPs requires the agent to further select $O(K)$ actions to access the unknown features, leading to a dependence on $K$. 

Our proof of the new lower bound mainly features the following two novel ingredients in the construction of hard MDP instances. a) We divide the actions into two types. The first type of actions is mainly used to form a large state space through a tree structure. The second type of actions is mainly used to distinguish different MDPs. Such a construction allows us to separately treat the state space and the action space, so that both state space and action space can be arbitrarily large. b) We explicitly define the feature vectors for all state-action pairs, and more importantly, the dimension is less than or equal to the number of states. {These two ingredients together guarantee that the number of actions $K$ can be arbitrarily large and independent with other parameters $d$ and $S$, which indicates that the dependence on the number of actions $K$ is unavoidable.  }


\section{The RAFFLE Algorithm}\label{sec:algorithm}


In this section, we propose RAFFLE (see \Cref{Algorithm: RAFFLE}) for reward-free RL under low-rank MDPs. 

{\bf Summary of design novelty:} The central idea of RAFFLE lies in the construction of a novel exploration-driven reward, which is desirable because its corresponding value function serves as an upper bound on the model estimation error during the exploration phase. Hence, such a pseudo-reward encourages the exploration to collect samples over those state-action space where the model estimation error is large so that later stage of the algorithm can further reduce such an error based on those samples. Such reward construction are new for low-rank MDPs, and serve as the key enabler for our improved sample complexity. They also necessitate various new ingredients in other steps of algorithms, as elaborated below.

\textbf{Exploration and MLE model estimation.} In each iteration $n$ during the exploration phase, for each $h\in[H]$, the agent executes the exploration policy $\pi_{n-1}$ (defined in the previous iteration) up to step $h-1$, after which it takes two uniformly selected actions, and stops after step $h+1$. 
Different from FLAMBE \citep{NEURIPS2020_e894d787} that collects a large number of samples for each episode, our algorithm uses each exploration policy to collect only one sample trajectory at each episode, indexed by $(n,h)$. 
Hence, the sample complexity of RAFFLE is much smaller than that of FLAMBE. In fact, such an efficient sampling together with our new termination idea introduced later benefit sample complexity.

Then, the agent estimates the low-rank components $\hphi_h^{(n)}$ and $\hmu_h^{(n)}$ via the MLE oracle  
with given model class $(\Phi, \Psi)$ and a dataset $\Dc_h^n$ as follows 
\begin{align*}
	&\textstyle\mathrm{MLE} (\Dc_h^n) : = \arg\max_{\phi \in \Phi, \mu \in \Psi}\sum_{(s, a, s^\prime)\in\Dc_h^n} {\rm log}\left<\phi_h(s,a),\mu_h(s^\prime)\right>.
\end{align*}
\textbf{Design of exploration reward.} 
The agent updates the empirical covariance matrix $\hat{U}_h^{(n)}$ as
\begin{align}
\setlength{\abovedisplayskip}{4pt}
\setlength{\belowdisplayskip}{4pt}
	&\textstyle\hat{U}_h^{(n)} = \lambda_n I +  \sum_{\tau=1}^{n} {\hphi}_h^{(n)}(s_h^{(\tau,h+1)},a_h^{(\tau,h+1)})  (\hphi_h^{(n)}(s_h^{(\tau,h+1)},a_h^{(\tau,h+1)}))^{\top} , \label{eq:u}
\end{align}
where 
$\{s_h^{(\tau,h+1)},a_h^{(\tau,h+1)}, s_{h+1}^{(\tau,h+1)}\}$ is collected at iteration $\tau$, episode $(h+1)$, and step $h$.

Next, the agent uses both $\hphi_h^{(n)}$ and $\hat{U}_h^{(n)}$ to construct an {\em exploration-driven reward} function as 
\begin{equation}
\setlength{\abovedisplayskip}{4pt}
\setlength{\belowdisplayskip}{4pt}
	\textstyle\hat{b}_h^{(n)}(s,a) = \min\{\hat{\alpha}_n\|\hphi_h^{(n)}(s,a)\|_{(\hat{U}_h^{(n)})^{-1}},1\},\label{eq:bn}
\end{equation}
where $\hat{\alpha}_n$ is a pre-determined parameter. We note that although individual $\hat{b}_h^{(n)}(s,a)$ for each step may not represent point-wise uncertainty as indicated in \citet{DBLP:conf/iclr/UeharaZS22}, we find its total cumulative version $\hV_{\hP^{(n)},\hat{b}^{(n)}}^{\pi}$ can serve as a \textit{trajectory-wise} uncertainty measure to select exploration policy. 
To see this, it can be shown that for any $\pi$ and $h$, $\Eb^\star_{\pi}\left[\left\|\left<\hphi^{(n)}_h(s_h,a_h),\hmu^{(n)}_h(\cdot)\right>-P_h^\star(\cdot|s_h,a_h)\right\|_{TV}\right]\leq c^\prime \hV^{\pi_n}_{\hP^{(n)},\hb^{(n)}}+\sqrt{\frac{c_n}{n}}$, where $c^\prime$ is a constant and $c_n = O(\log n)$. As iteration number $n$ grows, the second term diminishes to zero, which indicates that $\hV_{\hP^{(n)},\hat{b}^{(n)}}^{\pi_n}$ (under the reward of $\hat{b}^{(n)}$) serves as a good upper bound on the estimation error for the true transition kernel. Hence, exploration guided by maximizing $\hV_{\hP^{(n)},\hat{b}^{(n)}}^{\pi}$ will collect more trajectories over which the learned transition kernels are not estimated well. This will help to reduce the model estimation error in the future.

\begin{algorithm}[H]
	\caption{{\textbf {RAFFLE} (\textbf{R}ew\textbf{A}rd-\textbf{F}ree \textbf{F}eature \textbf{LE}arning)} \label{Algorithm: RAFFLE}}
	
	\begin{algorithmic}[1]

		\STATE {\bfseries Input:} {$\hat{\alpha}_n$, $\zeta_n$, $\epsilon > 0$, $\delta \in (0,1)$,  regularizer $\lambda_n$, model classes $ \{(\mu,\phi): \mu \in \Psi, \phi \in \Phi \}$}.
		
		\STATE Initialize $\pi_0(\cdot|s)$ to be uniform; set $\mathcal{D}_h^0=\emptyset$.\;
		
		\STATE \textbf{Phase \uppercase\expandafter{\romannumeral1}: Exploration Phase}
		
		\FOR{$n=1,\ldots$}
		
		\FOR{$h=1,\ldots, H$}
		
		\STATE Use $\pi_{n-1}$: roll into $s_{h-1}$, uniformly choose $a_{h-1},a_{h}$, enter into $s_{h},s_{h+1}$.
		
		\STATE Collect data {\small $s_1^{(n,h)}, a_1^{(n,h)},\ldots, s_h^{(n,h)}, a_h^{(n,h)}, s_{h+1}^{(n,h)}$.}
		
		\STATE Add the triple $(s_h^{(n,h)}, a_h^{(n,h)}, s_{h+1}^{(n,h)})$ to the dataset $\Dc_h^n = \Dc_h^{n-1} \cup \{(s_h^{(n,h)}, a_h^{(n,h)}, s_{h+1}^{(n,h)})\}$.
		
		\STATE Learn $(\hphi_h^{(n)},\hmu_h^{(n)}) = \mbox{MLE}(\Dc_h^n)$.
		
		\STATE Update transition dynamics $\hP^{(n)}$ as $\hP_h^{(n)}(s'|s,a) = \langle \hphi_h^{(n)}(s,a), \hmu_h^{(n)}(s')\rangle$.
		
		\ENDFOR
		
		\STATE Update empirical covariance matrix $\hat{U}_h^{(n)}$ as in \Cref{eq:u}.
		

		\STATE Define exploration-driven reward function $\hat{b}_h^{(n)}$ as in \Cref{eq:bn}.
		\STATE Define an estimated value function $\hV_{{\hP}^{(n)}, \hat{b}^{(n)}}^{\pi}$ based on ${\hP}^{(n)}$ and $\hat{b}^{(n)}$ as in \Cref{ineq: def of hV}.  
		\STATE Find exploration policy $\pi_n = \arg\max_{\pi} \hV_{{\hP}^{(n)}, \hat{b}^{(n)}}^{\pi}$.
		
		\IF{$2\hat{V}_{{\hP}^{(n)}, \hat{b}^{(n)}}^{\pi_n} + 2\sqrt{K\zeta_n}\leq \epsilon$}
		
		\STATE Terminate \textbf{Phase \uppercase\expandafter{\romannumeral1}: Exploration Phase} and set $\hP^{\epsilon} = {\hP}^{(n)}, \hb^{\epsilon} =  \hat{b}^{(n)}, \pi_{\epsilon}= \pi_n, n_\epsilon = n$.
		
		\ENDIF
		
		\ENDFOR
		
		\STATE \textbf{Phase \uppercase\expandafter{\romannumeral2}: Planning  Phase}
		
		\STATE \textbf{Option 1} (learn near-optimal policy): Receive reward function $r=\{r_h\}_{h=1}^{H}$, and compute policy $\bar{\pi}=\arg\max_{\pi} V_{{\hP^{\epsilon}}, r}^{\pi}$.

           \STATE \textbf{Option 2} (system identification): let $\hat{P}=\hP^{\epsilon}$.

           \STATE \textbf{Option 3} (learn near-accurate representation): call \Cref{alg: feature} of RepLearn and obtain $\tphi$.
		
		
	\STATE	\textbf{Output:} policy $\bar{\pi}$, learned transition dynamics $\hP^{\epsilon}$, learned representation $\tphi$.
		
	\end{algorithmic}
\end{algorithm}

\textbf{Design of exploration policy.} The agent defines a {\em truncated value function} iteratively using the estimated transition kernel and the exploration-driven reward as follows:
\begin{align}
	\textstyle&\hat{Q}_{h,\hP^{(n)},\hat{b}^{(n)}}^{\pi}(s_h,a_h) = \min\left\{1, \hat{b}^{(n)}_h(s_h,a_h) + \hP_h^{(n)}\hV_{h+1,\hP^{(n)},\hat{b}^{(n)}}^{\pi}(s_h,a_h)\right\},\nonumber\\
	&\hat{V}_{h,\hP^{(n)},\hat{b}^{(n)}}^{\pi}(s_h) =  \mathop{\Eb}_{\pi}\left[\hat{Q}^{\pi}_{h,\hP^{(n)},\hat{b}^{(n)}}(s_h,a_h)\right]\label{ineq: def of hV}.
\end{align}
The truncation technique here is important for the improvement of the sample complexity on the dependence of $H$.
%
The agent finally finds an optimal policy maximizing $\hV^\pi_{\hP^{(n)},\hb^{(n)}}$, and uses this policy as the exploration policy for the next iteration.

\textbf{Novel termination criterion.} RAFFLE does not require a pre-determined maximum number of iterations as its input. Instead, it will \textbf{terminate} and output the current estimated model if the optimal value function $\hV^\pi_{\hP^{(n)},\hb^{(n)}}$ plus a minor term is below a threshold. Such a termination criterion essentially guarantees that the value functions under the estimated and true models are close to each other under any reward and policy, hence the exploration can be terminated in finite steps. Such a termination criterion enables our algorithm to identify an accurate model and later find a near-optimal policy with fewer sample collections than FLAMBE in \citet{NEURIPS2020_e894d787} as we discuss in the exploration phase. Additionally, our termination criterion provides strong performance guarantees on the output policy and estimator from the last iteration. On the contrary, \citet{DBLP:conf/iclr/UeharaZS22} can only provide guarantees on a random mixture of the policies obtained over all iterations.  


\textbf{Planning phase.} Given any reward function $r$, the agent finds a near-optimal policy by planning with the learned transition dynamics $\hP^\epsilon$ and the given reward $r$. Note that such planning with a known low-rank MDP is computationally efficient by assumption.

\vspace{-0.1in}
\section{Upper Bounds on Sample Complexity}\label{sec:upperbound}
\vspace{-0.03in}
In this section, we first show that the policy returned by RAFFLE  is an $\epsilon$-optimal policy with respect to any given reward $r$ in the planning phase. The detailed proof can be found in \Cref{sec: A}.

\begin{theorem}[$\epsilon$-optimal policy] \label{thm1: reward-free sample complexity}
	Assume $\Mc$ is a low-rank MDP with dimension $d$, and Assumption \ref{assumption: realizability} holds. Given any $\epsilon,\delta \in (0,1)$, and any reward function $r$, let $\bar{\pi}$ and $\hP^{\epsilon}$ be the output of RAFFLE 
	and $\pi^\star:= \arg\max_{\pi}V^\pi_{P^\star,r}$ be the optimal policy under the true model $P^\star$. Set  $\hat{\alpha}_n=\tilde{O}(\sqrt{K+d^2})$ and $\lambda_n=\tilde{O}(d)$. Then, with probability at least $1-\delta$, we have $V^{\pi^\star}_{P^\star,r}-V^{\bar{\pi}}_{P^\star,r}\leq \epsilon$, and the total number of trajectories collected by RAFFLE is upper bounded by
		$\tilde{O}(\frac{H^3d^2K(d^2+K)}{\epsilon^2})$.

\end{theorem}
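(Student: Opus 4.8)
The plan is to run the usual reward-free two-phase argument---first certify that the returned model $\hP^{\epsilon}$ is \emph{uniformly} accurate over all policies and all normalized rewards, then deduce that the greedy planner $\bpi$ is $\epsilon$-optimal---while separately bounding the number $n_\epsilon$ of exploration iterations by an elliptical-potential estimate tailored to the exploration-driven reward $\hat{b}^{(n)}$ of \Cref{eq:bn}. The two statistical ingredients I would establish first, holding with probability at least $1-\delta$ for all $n$ and $h$ simultaneously, are: (a) a conditional-density MLE concentration bound over the finite class $(\Phi,\Psi)$, stating $\frac1n\sum_{\tau\le n}\Eb_{(s,a)\sim\nu^{(h)}_{\tau}}\big[\|\hP^{(n)}_h(\cdot|s,a)-P^\star_h(\cdot|s,a)\|_{TV}^2\big]\le\zeta_n=\tilde{O}(1/n)$, where $\nu^{(h)}_{\tau}$ is the step-$h$ data law at iteration $\tau$ (roll $\pi_{\tau-1}$ to step $h$, then play a uniform action); and (b) a covariance concentration bound obtained by a union bound over $\phi\in\Phi$ together with a matrix Freedman inequality, showing $\hat{U}_h^{(n)}$ of \Cref{eq:u} is within a constant factor of $\lambda_n I+\sum_{\tau\le n}\Sigma_{\nu^{(h)}_{\tau}}(\hphi_h^{(n)})$.

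The core step is the \emph{uncertainty-domination} estimate: for every policy $\pi$, step $h$, and iteration $n$,
\[
\Eb^\star_\pi\big[\|\hP^{(n)}_h(\cdot|s_h,a_h)-P^\star_h(\cdot|s_h,a_h)\|_{TV}\big]\ \le\ \Eb^\star_\pi\big[\hat{b}^{(n)}_h(s_h,a_h)\big]+\tilde{O}\!\left(\sqrt{K\zeta_n}\right).
\]
I would prove it by stacking the learned and true features, $\bar\phi_h(s,a):=(\hphi_h^{(n)}(s,a);\,\phi^\star_h(s,a))\in\Rb^{2d}$, so that for any test function $g$ with $\|g\|_\infty\le1$ the signed mass $\int g\,(\hP^{(n)}_h-P^\star_h)(\cdot|s,a)$ is \emph{linear} in $\bar\phi_h(s,a)$ with coefficient vector $\theta_g$ of $\ell_2$-norm $O(\sqrt d)$ (using the normalization in \Cref{definition: Low_rank}). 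Cauchy--Schwarz in the metric of the augmented empirical covariance bounds that mass by $\|\hphi_h^{(n)}(s,a)\|_{(\hat{U}_h^{(n)})^{-1}}$ times the covariance-norm of $\theta_g$, which ingredients (a)--(b) control by $\tilde{O}(\sqrt{\lambda_n\cdot d+n\zeta_n})=\tilde{O}(\sqrt{d^2+K})$---exactly the prescribed $\hat\alpha_n$, whose $d^2$ is the $\lambda_n\|\theta_g\|_2^2$ contribution and whose $K$ comes from converting the uniform-action MLE error into a per-action statement. Taking the supremum over $g$ and then $\min\{\cdot,1\}$ on both sides recovers $\hat{b}_h^{(n)}$ plus the residual. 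Summing over $h$ and unrolling through the simulation lemma---crucially using the truncation in \Cref{ineq: def of hV} so every value function stays in $[0,1]$ and no spurious factor $H$ is collected---gives, for any reward $r$ with $\sum_h r_h\le1$ and any $\pi$, $|V^{\pi}_{P^\star,r}-V^{\pi}_{\hP^{(n)},r}|\le\hat{V}^{\pi}_{\hP^{(n)},\hat{b}^{(n)}}+\tilde{O}(\sqrt{K\zeta_n})\le\hat{V}^{\pi_n}_{\hP^{(n)},\hat{b}^{(n)}}+\tilde{O}(\sqrt{K\zeta_n})$, the last inequality by optimality of $\pi_n$. Hence when the test $2\hat{V}^{\pi_{n_\epsilon}}_{\hP^{(n_\epsilon)},\hat{b}^{(n_\epsilon)}}+2\sqrt{K\zeta_{n_\epsilon}}\le\epsilon$ fires we have $\sup_{\pi,r}|V^{\pi}_{P^\star,r}-V^{\pi}_{\hP^{\epsilon},r}|\le\epsilon/2$, and the three-term split
\[
V^{\pi^\star}_{P^\star,r}-V^{\bpi}_{P^\star,r}=\big(V^{\pi^\star}_{P^\star,r}-V^{\pi^\star}_{\hP^{\epsilon},r}\big)+\big(V^{\pi^\star}_{\hP^{\epsilon},r}-V^{\bpi}_{\hP^{\epsilon},r}\big)+\big(V^{\bpi}_{\hP^{\epsilon},r}-V^{\bpi}_{P^\star,r}\big)
\]
is at most $\epsilon/2+0+\epsilon/2=\epsilon$, the middle term being nonpositive since $\bpi$ is greedy for $\hP^{\epsilon}$.

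To bound $n_\epsilon$ I would change measure from $\pi_n$ to the next-iteration data law $\nu^{(h)}_{n+1}$ (same step-$h$ state marginal as $(P^\star,\pi_n)$, differing only by a uniform last action, cost $K$) and telescope the elliptical potential against $\log\det\hat{U}_h^{(n)}$. This telescoping needs care because $\hat{U}_h^{(n)}$ is rebuilt from the \emph{current} representation $\hphi_h^{(n)}$ each iteration, so I would instead run it against the \emph{fixed} true feature $\phi^\star_h$ and absorb, via ingredient (a), the discrepancy between the $\hphi^{(n)}$- and $\phi^\star$-covariances; this yields $\sum_{n\le N}\Eb_{\nu^{(h)}_{n+1}}[\min\{1,\|\hphi_h^{(n)}\|^2_{(\hat{U}_h^{(n)})^{-1}}\}]=\tilde{O}(d^2)$ and hence $\sum_{n\le N}\Eb^\star_{\pi_n}[(\hat{b}_h^{(n)})^2]=\tilde{O}(K\hat\alpha_N^2 d^2)$. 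Cauchy--Schwarz over the $N$ iterations and a sum over $h$ then bound $\sum_{n\le N}\hat{V}^{\pi_n}_{\hP^{(n)},\hat{b}^{(n)}}$ (up to the simulation-lemma slack from the previous step) by $\tilde{O}(H\sqrt{N}\,d\sqrt{K}\,\hat\alpha_N)$. On the other hand, not terminating through iteration $N$ forces $\hat{V}^{\pi_n}_{\hP^{(n)},\hat{b}^{(n)}}\ge\epsilon/4$ for all $n$ past a logarithmic burn-in, so $N\epsilon\lesssim H\sqrt{N}\,d\sqrt{K}\,\hat\alpha_N$, i.e. $n_\epsilon=\tilde{O}(H^2d^2K\hat\alpha_N^2/\epsilon^2)=\tilde{O}(H^2d^2K(d^2+K)/\epsilon^2)$; since each iteration collects $H$ trajectories, this gives the claimed $\tilde{O}(H^3d^2K(d^2+K)/\epsilon^2)$.

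I expect the main obstacle to be precisely this moving-target issue in the potential argument: because $\hat{U}_h^{(n)}$ uses the re-estimated $\hphi_h^{(n)}$ rather than a fixed feature map, monotone $\log\det$ telescoping fails outright, and one must transfer the argument onto the fixed feature $\phi^\star_h$ while using the MLE bound itself to show the two covariances are comparable---doing this uniformly in $n$ without a circular dependence on the accuracy being proven, and verifying it costs only the $d^2$ factor rather than something worse, is the delicate part. A secondary technical point is the bilinear Cauchy--Schwarz in the domination step, where one must check the augmented-covariance norm of $\theta_g$ is genuinely $\tilde{O}(\sqrt{d^2+K})$ and does not grow with $n$.
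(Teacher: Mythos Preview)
Your high-level structure---prove a uniform simulation bound $|V^{\pi}_{P^\star,r}-V^{\pi}_{\hP^{(n)},r}|\le \hat V^{\pi_n}_{\hP^{(n)},\hb^{(n)}}+\sqrt{K\zeta_n}$, deduce $\epsilon$-optimality of $\bpi$ by the three-term split, and bound $n_\epsilon$ by an elliptical-potential estimate---matches the paper exactly, and your final arithmetic is right. The gap is in the mechanism you propose for the ``uncertainty-domination'' step and, relatedly, for the potential argument.

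Your stacked-feature idea does not give what you claim. With $\bar\phi_h=(\hphi_h^{(n)};\phi^\star_h)$ and $\theta_g=(\int g\,\hmu_h^{(n)};-\int g\,\mu^\star_h)$, Cauchy--Schwarz in \emph{any} augmented metric $\bar M$ produces $\|\bar\phi_h(s,a)\|_{\bar M^{-1}}$, not $\|\hphi_h^{(n)}(s,a)\|_{(\hat U_h^{(n)})^{-1}}$. If $\bar M$ is block-diagonal you pick up an additional $\|\phi^\star_h(s,a)\|_{B^{-1}}$ term that the algorithm's bonus does not contain; trading it off against $\|\theta_g\|_{\bar M}$ gives no useful bound (large $B$ kills one factor but inflates the other by $\sqrt{d\cdot\lambda_{\max}(B)}$). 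This is precisely why, as the paper notes, $\hat b_h^{(n)}(s,a)$ is \emph{not} a pointwise uncertainty for $f_h^{(n)}(s,a)$ at the same step.

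The paper's device is a \emph{one-step-back} inequality: for any bounded $g$ and any low-rank kernel $P_{h-1}=\langle\phi_{h-1},\mu_{h-1}\rangle$,
\[
\Eb_{s_h\sim P_{h-1},\,a_h\sim\pi}\!\big[g(s_h,a_h)\,\big|\,s_{h-1},a_{h-1}\big]\ \le\ \|\phi_{h-1}(s_{h-1},a_{h-1})\|_{M_{h-1,\phi}^{-1}}\cdot\sqrt{nK\,\Eb_{\text{data}}[g^2]+\lambda_n dB^2+nB^2\Eb_{\text{data}}[f_{h-1}^2]}\,.
\]
Applied with $g=f_h^{(n)}$ and $P_{h-1}=\hP_{h-1}^{(n)}$, the MLE bound (your ingredient (a)) makes the square-root $O(\hat\alpha_n)$, so $\Eb_{\hP^{(n)},\pi}[f_h^{(n)}\mid s_{h-1},a_{h-1}]\le \alpha_n\|\hphi_{h-1}^{(n)}(s_{h-1},a_{h-1})\|_{(U_{h-1,\hphi}^{(n)})^{-1}}\le \hb_{h-1}^{(n)}(s_{h-1},a_{h-1})$. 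The bonus at step $h-1$ controls the TV at step $h$; the $\sqrt{K\zeta_n}$ residual is exactly the leftover $h=1$ term. An induction with the truncated $\hat Q$ then yields $\hat V^{\pi}_{\hP^{(n)},f^{(n)}}\le \hat V^{\pi}_{\hP^{(n)},\hb^{(n)}}+\sqrt{K\zeta_n}$.

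The same trick dissolves your ``moving-target'' obstacle without ever comparing $\hphi^{(n)}$- and $\phi^\star$-covariances. Applying the one-step-back under the \emph{true} kernel $P^\star_{h-1}$ with $g=\hb_h^{(n)}$ (respectively $g=f_h^{(n)}$) gives $\Eb^\star_{\pi_n}[\hb_h^{(n)}]\le \gamma_n\,\Eb^\star_{\pi_n}\|\phi^\star_{h-1}\|_{(W_{h-1,\phi^\star}^{(n)})^{-1}}$ and $\Eb^\star_{\pi_n}[f_h^{(n)}]\le \alpha_n\,\Eb^\star_{\pi_n}\|\phi^\star_{h-1}\|_{(U_{h-1,\phi^\star}^{(n)})^{-1}}$; the matrices $W^{(n)}_{h,\phi^\star}$ and $U^{(n)}_{h,\phi^\star}$ are built from the \emph{fixed} feature $\phi^\star$, so the elliptical-potential lemma applies directly to $\sum_{n}\Eb^\star_{\pi_n}\|\phi^\star_h\|^2_{(\cdot)^{-1}}$ and the telescoping is clean. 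Summing over $h$ and Cauchy--Schwarz over $n$ then gives $\sum_{n\le N}\hat V^{\pi_n}_{\hP^{(n)},\hb^{(n)}}+\sqrt{K\zeta_n}\le \tilde O(Hd\sqrt{K(K+d^2)N})$, from which termination and the sample bound follow as you wrote. In short: replace the augmented-feature Cauchy--Schwarz with the index-shifted one-step-back inequality, and both the domination step and the potential step go through without the covariance-comparison you flagged as delicate.
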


We note that  the upper bound in \Cref{thm1: reward-free sample complexity}  matches the lower bound in \Cref{th:lowerbound} in terms of the dependence on $\epsilon$ as well as on $K$ {in the large $d$ regime}. 

Compared with MOFFLE \citep{modi2021model}, which also finds an $\epsilon$-optimal policy in reward-free RL, our result improves their sample complexity of $\tilde{O}\left(\frac{H^5K^5d^3_{LV}}{\epsilon^2\eta}\right)$ in three aspects. First, the order on $K$ is reduced. Second, the dimension $d_{LV}$ of the
underlying function class in MOFFLE can be exponentially larger than $d$ as shown in \citet{NEURIPS2020_e894d787}. Finally, MOFFLE requires reachability assumption, leading to a factor $1/\eta$ in the sample complexity, which can be as large as $\sqrt{d_{LV}}$. 
Further, \Cref{thm1: reward-free sample complexity} naturally achieves the goal of reward-known RL with the same sample complexity, which improves that of $\tilde{O}\left(\frac{H^5d^4K^2}{\epsilon^2}\right)$ in~\citet{DBLP:conf/iclr/UeharaZS22} by a factor of $O(H^2)$. 

Proceeding to the learning objective of system identification, the learned transition kernel output by \Cref{Algorithm: RAFFLE} achieves the goal of $\epsilon$-accurate system identification with the same sample complexity as follows. The detailed proof can be found in \Cref{subsec:System_proof}.

\begin{theorem}[$\epsilon$-accurate system identification] \label{Theorem: system identification sample complexity}
	Under the same condition of \Cref{thm1: reward-free sample complexity} and let $\hP^{\epsilon} = \{\hphi_h^{\epsilon},\hmu_h^{\epsilon}\}$ be the output of RAFFLE.
	Then, with probability at least $1-\delta$, $\hP^{\epsilon}$ achieves $\epsilon$-accurate system identification, i.e. for any $\pi$ and $h$:
	$\Eb^\star_{\pi}\left[\left\|\left<\hphi^{\epsilon}_h(s_h,a_h),\hmu^{\epsilon}_h(\cdot)\right>-P_h^\star(\cdot|s_h,a_h)\right\|_{TV}\right] \leq \epsilon$,
	and the number of trajectories collected by RAFFLE is upper bounded by
		$\tilde{O}(\frac{H^3d^2K(d^2+K)}{\epsilon^2})$.
\end{theorem}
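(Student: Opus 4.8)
The plan is to reuse essentially all of the machinery already built for \Cref{thm1: reward-free sample complexity}: the exploration phase of RAFFLE, its termination rule, and the parameter choices $\hat\alpha_n=\tilde O(\sqrt{K+d^2})$, $\lambda_n=\tilde O(d)$ are identical regardless of which planning option is run, so the sample-complexity bound carries over verbatim. Only one new thing is needed here — converting the guarantee that triggers termination (a bound on the exploration value function $\hV^{\pi_{n}}_{\hP^{(n)},\hb^{(n)}}$) into the pointwise, policy-uniform total-variation bound demanded by \Cref{definition: system identification} for the output $\hP^\epsilon=\{\hphi^\epsilon_h,\hmu^\epsilon_h\}=\hP^{(n_\epsilon)}$.

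First I would record the standard in-distribution MLE guarantee under realizability (Assumption \ref{assumption: realizability}): with probability at least $1-\delta$, for every iteration $n$ and step $h$, the pair $(\hphi_h^{(n)},\hmu_h^{(n)})=\mathrm{MLE}(\Dc_h^n)$ has squared-Hellinger (hence squared-TV) risk $O(\log(|\Phi||\Psi|Hn/\delta))$ summed over the points of $\Dc_h^n$. Since the triple added to $\Dc_h^n$ at episode $(h{+}1)$ is generated by rolling in with $\pi_{n-1}$ and then taking two uniformly random actions, this converts into an expected one-step error $\Eb^\star_{\pi_{n-1}}[\,\|\langle\hphi^{(n)}_h,\hmu^{(n)}_h\rangle(\cdot|s_h,a_h)-P_h^\star(\cdot|s_h,a_h)\|_{TV}\,]=O(\sqrt{K\zeta_n/n})$ with $\zeta_n$ the log-covering parameter, the extra factor $K$ being the importance-weighting cost of passing from a uniform action to the one actually chosen by $\pi_{n-1}$.

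Next I would establish the central inequality previewed in \Cref{sec:algorithm}: for every policy $\pi$ and every $h$,
\begin{align*}
\Eb^\star_{\pi}\!\left[\left\|\left\langle\hphi^{(n)}_h(s_h,a_h),\hmu^{(n)}_h(\cdot)\right\rangle-P_h^\star(\cdot|s_h,a_h)\right\|_{TV}\right]\ \le\ c'\,\hV^{\pi}_{\hP^{(n)},\hb^{(n)}}+\sqrt{\tfrac{c_n}{n}},
\end{align*}
with $c_n=O(\log n)$ absorbing $K\zeta_n$. The mechanism is bonus domination: the reward $\hb_h^{(n)}$ of \Cref{eq:bn}, built from the elliptical norm $\|\hphi_h^{(n)}(s,a)\|_{(\hat U_h^{(n)})^{-1}}$ with $\hat U_h^{(n)}$ as in \Cref{eq:u} and $\hat\alpha_n=\tilde O(\sqrt{K+d^2})$, upper-bounds — up to the in-distribution MLE residual transported by the self-normalized/elliptical-potential bound — the one-step TV estimation error at arbitrary $(s,a)$; the $d^2$ inside $\hat\alpha_n$ is the dimension of the rank-$d$ matrix being estimated. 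The truncation in \eqref{ineq: def of hV} keeps each per-step contribution at most $1$, so the $H$ one-step errors telescope via the standard simulation lemma without an extra factor of $H$. Since $\pi_n=\arg\max_\pi\hV^{\pi}_{\hP^{(n)},\hb^{(n)}}$, replacing $\pi$ by $\pi_n$ on the right-hand side only increases it, so the bound holds uniformly over $\pi$ with $\hV^{\pi_n}$ in place of $\hV^{\pi}$. Now combine with termination: when RAFFLE exits Phase I at $n_\epsilon$ it has certified $2\hV^{\pi_{n_\epsilon}}_{\hP^{(n_\epsilon)},\hb^{(n_\epsilon)}}+2\sqrt{K\zeta_{n_\epsilon}}\le\epsilon$; folding $c'$ into the scaling of $\hat\alpha_n$ (equivalently of $\hb^{(n)}$) and identifying $c_n/n$ with $K\zeta_n$ makes the right-hand side collapse to $\hV^{\pi_{n_\epsilon}}_{\hP^{(n_\epsilon)},\hb^{(n_\epsilon)}}+\sqrt{K\zeta_{n_\epsilon}}\le\epsilon/2<\epsilon$ for all $\pi$ and $h$, which is exactly \Cref{definition: system identification} for $\hP^\epsilon$. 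For the sample complexity nothing changes relative to \Cref{thm1: reward-free sample complexity}: as long as the test fails, $\sum_h\Eb^\star_{\pi_n}[\|\hphi_h^{(n)}(s_h,a_h)\|^2_{(\hat U_h^{(n)})^{-1}}]$ stays bounded away from $0$, while the elliptical potential lemma caps $\sum_n\|\hphi_h^{(n)}\|^2_{(\hat U_h^{(n)})^{-1}}=\tilde O(d)$ per $h$; balancing these with $\hat\alpha_n^2=\tilde O(K+d^2)$ over $H$ steps forces $n_\epsilon=\tilde O(H^2d^2K(d^2+K)/\epsilon^2)$ iterations, each collecting $H$ trajectories, for the stated total $\tilde O(H^3d^2K(d^2+K)/\epsilon^2)$.

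I expect the main obstacle to be the bonus-domination step: showing that the single scalar $\hat\alpha_n\|\hphi_h^{(n)}(s,a)\|_{(\hat U_h^{(n)})^{-1}}$ genuinely dominates $\|\langle\hphi_h^{(n)}(s,a),\hmu_h^{(n)}\rangle-P_h^\star(\cdot|s,a)\|_{TV}$ up to an in-distribution residual. This requires testing the TV distance against a worst-case $g:\Sc\to[-1,1]$, writing the gap as $\langle\hphi_h^{(n)}(s,a),\int\hmu_h^{(n)}(s')g(s')ds'\rangle-\Eb^\star_{s'\sim P_h^\star}[g(s')]$, and controlling it through the MLE normal equations together with the definition of $\hat U_h^{(n)}$; the delicate points are the regularization/missing-mass term $\lambda_n$, the $\sqrt d$ bound on $\int\mu^\star_h g$, and the fact (flagged in \Cref{sec:algorithm}) that $\hb_h^{(n)}$ is not a valid \emph{pointwise} bonus — only its cumulative version is — so the domination must be argued at the level of trajectories rather than individual $(s,a)$ pairs.
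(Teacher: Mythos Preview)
Your high-level strategy is correct, and the sample-complexity argument does carry over verbatim from \Cref{thm1: reward-free sample complexity}. The gap is precisely in the step you flag as the obstacle, and your proposed attack on it would not go through as written.

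You try to dominate $f_h^{(n)}(s,a):=\|\hP_h^{(n)}(\cdot\mid s,a)-P_h^\star(\cdot\mid s,a)\|_{TV}$ by $\hb_h^{(n)}(s,a)$ at the \emph{same} step $h$, expressing the TV gap as $\langle\hphi_h^{(n)}(s,a),\int\hmu_h^{(n)}g\rangle-\Eb^\star_{s'}[g(s')]$ and controlling it through $\hat U_h^{(n)}$. The problem is that the second term involves $P_h^\star=\langle\phi_h^\star,\mu_h^\star\rangle$, so $f_h^{(n)}(s,a)$ is not (even approximately) a linear functional of $\hphi_h^{(n)}(s,a)$; there is no vector $v$ with $f_h^{(n)}(s,a)\approx\langle\hphi_h^{(n)}(s,a),v\rangle$ that the elliptical norm $\|\hphi_h^{(n)}(s,a)\|_{(\hat U_h^{(n)})^{-1}}$ could capture. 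This is the content of the remark you cite from \citet{DBLP:conf/iclr/UeharaZS22}: $\hb_h^{(n)}$ is not a valid pointwise uncertainty for $f_h^{(n)}$, and no amount of ``trajectory-level'' averaging under $P^\star$ fixes this directly, because rolling in under $P^\star$ produces marginals over $(s_h,a_h)$ that factor through $\phi_{h-1}^\star$, not $\hphi_{h-1}^{(n)}$.

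The paper's proof supplies the two missing moves. First, a \emph{one-step-back} inequality: if you roll in under the \emph{estimated} model, then $\Eb_{s_h\sim\hP_{h-1}^{(n)},a_h\sim\pi}[f_h^{(n)}(s_h,a_h)\mid s_{h-1},a_{h-1}]$ \emph{is} linear in $\hphi_{h-1}^{(n)}(s_{h-1},a_{h-1})$, and is dominated by $\hb_{h-1}^{(n)}(s_{h-1},a_{h-1})$ (note the index shift $h\to h-1$); chaining this into the truncated $Q$-function gives $\Eb_{(\hP^{(n)},\pi)}[f_h^{(n)}]\le\hV_{\hP^{(n)},\hb^{(n)}}^{\pi}$. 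Second, a \emph{change of measure} from $(P^\star,\pi)$ to $(\hP^{(n)},\pi)$: since $|f_h^{(n)}|\le1$, the discrepancy $\big|\Eb^\star_\pi[f_h^{(n)}]-\Eb_{(\hP^{(n)},\pi)}[f_h^{(n)}]\big|$ is bounded by the value-function difference already controlled in \Cref{prop:step1}, i.e.\ by $\hV_{\hP^{(n)},\hb^{(n)}}^{\pi}+\sqrt{K\zeta_n}$. Summing the two pieces and maximizing over $\pi$ gives $\Eb^\star_\pi[f_h^{(n)}]\le2\hV_{\hP^{(n)},\hb^{(n)}}^{\pi_n}+2\sqrt{K\zeta_n}$, which is exactly the quantity in the termination test. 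Your proposal does not include either the step-back (you match $\hb_h$ with $f_h$ at the same index) or the change-of-measure, and without them the bound under $\Eb^\star_\pi$ cannot be closed.
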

\Cref{Theorem: system identification sample complexity} 
significantly improves the sample complexity of $\tilde{O}(\frac{H^{22}K^9d^7}{\epsilon^{10}})$ in \citet{NEURIPS2020_e894d787} on the dependence of all involved parameters for achieving $\epsilon$-accurate system identification.

\section{Near-Accurate Representation Learning}\label{sec: feature learning}

In low-rank MDPs, it is of great interest to learn the representation $\phi$ accurately, because other similar RL environments can very likely share the same representation~\citep{DBLP:journals/corr/RusuRDSKKPH16,DBLP:journals/corr/abs-2009-07888,DBLP:journals/neco/Dayan93a} and hence such learned representation can be directly reused in those environments. Thus, the third objective of RAFFLE in the planning phase is to provide an accurate estimation of $\phi$. We note that although RAFFLE provides an estimation of $\phi$ during its execution, such an estimation does not come with an accuracy guarantee. Besides, \Cref{Theorem: system identification sample complexity} on system identification does not provide the guarantee on the representation $\hat\phi$, but only on the entire transition kernel $\hat P$. Further, none of previous studies of reward-free RL under low-rank MDPs~\citep{NEURIPS2020_e894d787,modi2021model,DBLP:conf/iclr/UeharaZS22} established the guarantee on $\hat\phi$. 

\subsection{The RepLearn Algorithm}
{In this section, we present the following algorithm of RepLearn, which 
exploits the learned transition kernel from RAFFLE and learns a near-accurate representation without additional interaction with the environment. The formal version of RepLearn, \Cref{alg: feature}, is delayed in \Cref{sec: app feature learning}. We explain the main idea of \Cref{alg: feature} as follows.} First, for each $h \in [H], t\in [T]$, where $T$ is the number of rewards, $N_f$ pairs of state-action $(s_h,a_h)$ are generated based on distribution $q_h$.
Note that the agent does not interact with the true environment during such data generation. Then, for any $h$, if we set the reward $r$ at step $h$ to be zero, $Q^{\pi}_{P^{\star},h, r}(s_h,a_h)$ can have a linear structure in terms of the true representation $\sphi_h(s_h,a_h)$. Namely, there exists a $w_h$ decided by $r$, $\Ps$ and $\pi$ such that $Q^{\pi}_{P^{\star},h, r}(s_h,a_h)=\langle\sphi_h(s_h,a_h),w_h\rangle$. Then, with the estimated transition kernel $\hP$ that RAFFLE provides and the well-designed rewards $r^{h,t}$, $Q^{\pi^t}_{\hP,h, r^{h,t}}(s_h,a_h)$ can be computed efficiently and serve as a target to learn the representation via the following regression problem:
\begin{align}     \textstyle \mathop{\arg\min}_{\phi_h \in \Phi,w_h^t\in \Rb^d}\sum_{t\in[T]}\sum_{(s_h,a_h) \in \Dc_h^t} (Q^{\pi^t}_{\hP,h, r^{h,t}}(s_h,a_h)-\langle\phi_h(s_h,a_h),w_h^t\rangle)^2 \label{ineq: alg-linear regression}.
\end{align}

The main difference between our algorithm and that in \citet{lu2021power} for representation learning is that, our data generation is based on $\hP$ from RAFFLE, which carries a natural estimation error but requires no interaction with the environment, whereas their algorithm assumes a generative model to collect data from the ground-truth transition kernel.


\subsection{Guarantee on Accuracy}

In order to guarantee the learned representation by \Cref{alg: feature} is sufficiently close to the ground truth, we need to employ the following two somewhat necessary assumptions.



First, for the distributions of the state-action pairs $\{q_h\}_{h=1}^{H}$ in \Cref{alg: feature}, it is desirable to have $\hP(\cdot|s,a)$ approximates true $\Ps(\cdot|s,a)$ well over those distributions, so that $Q^{\pi}_{\hP,h, r}(s_h,a_h)$ can approximate the ground truth well. Intuitively, if some state-action pairs can hardly be visited under any policy, the output $\hP(\cdot|s,a)$ of \Cref{Algorithm: RAFFLE} cannot approximate true $\Ps(\cdot|s,a)$ well over these state-action pairs. Hence, we assume reachability type assumptions for MDPs as following so that all state-action pairs is likely to be visited by certain policy. Such reachability type assumption is common in relevant literature ~\citep{modi2021model,NEURIPS2020_e894d787}.  

As discussed in \Cref{sec: feature learning}, intuitively, if some state action pairs can be hardly visited by any policy, the output of \Cref{Algorithm: RAFFLE} $\hP(\cdot|s,a)$ can not approximate true $\Ps(\cdot|s,a)$ well over these state action pairs, so a standard reachability type assumption is necessary so that all states can be visited. 
\begin{assumption}[Reachability]\label{assumption: reachability}
	For the true transition kernel $P^{\star}$, there exists a policy $\pi^0$ such that $\min_{s\in\Sc} \Pb^{\pi^0}_h(s)\geq \eta_{\mathrm{min}}$, where $\Pb^{\pi^0}_h(\cdot): \Sc \rightarrow \Rb$ is the density function over $\Sc$ using policy $\pi^0$ to roll into state $s$ at timestep $h$.  
\end{assumption}

 We further assume the input distributions $\{q_h\}_{h=1}^{H}$ are bounded with constant $C_B$. Then, together with Assumption \ref{assumption: reachability}, for any $(s,a) \in \Sc\times\Ac$, we have  $q_h(s,a) \leq C_\mathrm{min}\Pb^{\pi^0}_h(s,a)$, where  $C_{\mathrm{min}}=\frac{C_B}{\eta_{\mathrm{min}}}$. 


Next, we assume that the rewards chosen for generating target $Q$-functions are sufficiently diverse
so that the target $Q^{\pi}_{P^{\star},h, r}(s_h,a_h)$ spans over the entire representation space to guarantee accurate representation learning in \Cref{ineq: alg-linear regression}.
Such an assumption is commonly adopted in multi-task representation learning literature~\citep{DBLP:conf/iclr/DuHKLL21,DBLP:conf/iclr/YangHLD21,lu2021power}. To formally state the assumption, for any $h \in [H]$, let $\{r^{h,t}\}_{t\in[T]}$ be a set of $T$ rewards (where $T \geq d$) satisfying $r^{h,t}_h=0$. As a result, for each $t$, given $\pi^t$, there exists ${w_h^t}^\star \in \Rb^{d}$ such that $Q^{\pi}_{P^{\star},h, r^{h,t}}(s_h,a_h)=\langle\sphi_h(s_h,a_h),{w_h^t}^\star\rangle$. Let $W_h^\star=[{w_h^1}^\star,\ldots,{w_h^T}^\star] \in \Rb^{d \times T}$.
\begin{assumption}[Diverse rewards]\label{assump3: Diverse Rewards} The smallest singular value $\sigma_d(W_h^\star)$ of $W_h^\star$ defined above satisfies $\sigma^2_d(W_h^\star) \geq \Omega(\frac{T}{d})$, i.e., there exists a constant $C_D > 0$ such that $\sigma^2_d(W_h^\star) \geq \frac{C_DT}{d}$.  
\end{assumption}

We next characterize the accuracy of the output representation of \Cref{alg: feature} in terms of the divergence \Cref{def:divergence} between the learned and the ground truth representations in the following theorem and defer the proof to \Cref{sec: app feature learning}.
\begin{theorem}[Guarantee for representation learning] \label{Thm: individual representation}
	Under Assumption \ref{assumption: realizability} and Assumption \ref{assump3: Diverse Rewards}, for any $\epsilon,\delta \in (0,1)$, any $h \in [H]$, and sufficiently large $N_f$, let $\hP$ be the output transition kernel of RAFFLE that
	satisfies \Cref{Theorem: system identification sample complexity}. With probability at least $1-\delta$, the output $\tphi$ of RAFFLE satisfies
	\begin{align}
		\textstyle
		\left\| D_{ q_h}(\Pshi_{h},\tphi_{h})^{1/2}\right\|_F^2 = O(\frac{\epsilon d C_\mathrm{min}}{C_D}+\frac{d}{C_D}\sqrt{\frac{\log{\frac{2}{\delta}}}{TN_f}}). \label{Ineq: closeness of individual feature} 
	\end{align} 
\end{theorem}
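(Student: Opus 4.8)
The plan is to control $\norm{D_{q_h}(\Pshi_h,\tphi_h)^{1/2}}_F^2=\Tr\big(D_{q_h}(\Pshi_h,\tphi_h)\big)$ through three reductions: (1) from the $\epsilon$-accurate system-identification guarantee of \Cref{Theorem: system identification sample complexity} to an approximation bound on the regression targets $Q^{\pi^t}_{\hP,h,r^{h,t}}$; (2) from there, via the least-squares problem \Cref{ineq: alg-linear regression}, to a prediction-error bound on the learned predictors $\langle\tphi_h(s,a),\tw_h^t\rangle$; and (3) from the prediction error to the representation divergence by a linear-algebraic argument that exploits \Cref{assump3: Diverse Rewards}. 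Throughout, the object everything is compared against is the exact linear form $Q^{\pi^t}_{\Ps,h,r^{h,t}}(s,a)=\langle\Pshi_h(s,a),{w_h^t}^\star\rangle$, valid because $r^{h,t}_h=0$ and $\Ps$ is low-rank, with ${w_h^t}^\star=\int\smu_h(s')V^{\pi^t}_{\Ps,h+1,r^{h,t}}(s')\,ds'$ satisfying $\norm{{w_h^t}^\star}_2\le\sqrt d$.

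\textbf{Stage 1 (system identification $\to$ target error).} Fix $t$ and apply the value-difference (simulation) lemma from step $h$ onward: $Q^{\pi^t}_{\hP,h,r^{h,t}}(s,a)-Q^{\pi^t}_{\Ps,h,r^{h,t}}(s,a)$ is a telescoping sum over $h'\ge h$ of the terms $(\hP_{h'}-\Ps_{h'})V^{\pi^t}_{\hP,h'+1,r^{h,t}}$, each bounded in absolute value by $\norm{\hP_{h'}(\cdot|s_{h'},a_{h'})-\Ps_{h'}(\cdot|s_{h'},a_{h'})}_{TV}$ since all value functions lie in $[0,1]$ under $\sum_h r_h\le 1$. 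Taking $(s,a)\sim q_h$ at step $h$ and rolling forward under $(\Ps,\pi^t)$, the induced state-action occupancy at each later step $h'$ is pointwise at most $C_{\mathrm{min}}\,\Pb^{\bar\pi}_{h'}(\cdot)$, where $\bar\pi$ follows $\pi^0$ through step $h$ and $\pi^t$ afterward; this uses $q_h(s,a)\le C_{\mathrm{min}}\Pb^{\pi^0}_h(s,a)$ (boundedness of $q_h$ together with reachability, \Cref{assumption: reachability}) and the fact that the post-$h$ transition kernel is shared. Applying \Cref{Theorem: system identification sample complexity} with $\pi=\bar\pi$ to each term gives $\Eb_{(s,a)\sim q_h}\!\big[\,\big|Q^{\pi^t}_{\hP,h,r^{h,t}}(s,a)-\langle\Pshi_h(s,a),{w_h^t}^\star\rangle\big|\,\big]=O(HC_{\mathrm{min}}\epsilon)$, and since the integrand lies in $[-1,1]$, the same bound holds for its square.

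\textbf{Stage 2 (least squares $\to$ prediction error; set-up for Stage 3).} By realizability (\Cref{assumption: realizability}) $\Pshi\in\Phi$, and $\norm{{w_h^t}^\star}_2\le\sqrt d$ lies in the algorithm's weight domain, so $(\Pshi_h,\{{w_h^t}^\star\}_t)$ is feasible in \Cref{ineq: alg-linear regression}. Optimality of the returned $(\tphi_h,\{\tw_h^t\}_t)$, a Hoeffding bound on the $TN_f$ i.i.d.\ per-sample losses of $(\Pshi_h,\{{w_h^t}^\star\}_t)$ (each in $[0,1]$), and a routine covering/union bound over the finite class $\Phi$ and the weight balls (absorbed into ``$N_f$ sufficiently large'') give that the population $q_h$-risk of the output exceeds that of $(\Pshi_h,\{{w_h^t}^\star\}_t)$ by at most $O(\sqrt{\log(2/\delta)/(TN_f)})$. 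With Stage 1 and the triangle inequality in $L^2(q_h)$,
\[
\Delta:=\tfrac1T\sum_{t\in[T]}\Eb_{(s,a)\sim q_h}\!\big[(\langle\tphi_h(s,a),\tw_h^t\rangle-\langle\Pshi_h(s,a),{w_h^t}^\star\rangle)^2\big]=O\!\big(HC_{\mathrm{min}}\epsilon+\sqrt{\log(2/\delta)/(TN_f)}\big).
\]
Set $A^\star:=\argmin_A\Eb_{q_h}\norm{\tphi_h(s,a)-A\Pshi_h(s,a)}_2^2$ and $e(s,a):=\tphi_h(s,a)-A^\star\Pshi_h(s,a)$, so that $\Tr(D_{q_h}(\Pshi_h,\tphi_h))=\Eb_{q_h}\norm{e(s,a)}_2^2$ and the normal equations give $\Eb_{q_h}[e(s,a)\Pshi_h(s,a)^\top]=0$. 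Writing $\langle\tphi_h,\tw_h^t\rangle-\langle\Pshi_h,{w_h^t}^\star\rangle=\langle\Pshi_h,A^{\star\top}\tw_h^t-{w_h^t}^\star\rangle+\langle e,\tw_h^t\rangle$ and using $L^2(q_h)$-orthogonality of the two summands, $\sum_t\Eb_{q_h}\langle e,\tw_h^t\rangle^2\le T\Delta$, i.e.\ $\Eb_{q_h}[e^\top\widetilde W_h\widetilde W_h^\top e]\le T\Delta$ with $\widetilde W_h:=[\tw_h^1,\dots,\tw_h^T]$; hence $\Tr(D_{q_h}(\Pshi_h,\tphi_h))=\Eb_{q_h}\norm{e}_2^2\le T\Delta/\sigma_d(\widetilde W_h)^2$.

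\textbf{Stage 3 (diversity transfer), and the main obstacle.} It remains to show $\sigma_d(\widetilde W_h)^2=\Omega(C_DT/d)$; this diversity-transfer step is the crux. Let $\widetilde G:=\widetilde W_h^\top\Sigma_{(s,a)\sim q_h}(\tphi_h,\tphi_h)\widetilde W_h$ and $G^\star:=W_h^{\star\top}\Sigma_{(s,a)\sim q_h}(\Pshi_h,\Pshi_h)W_h^\star$ be the $T\times T$ Gram matrices of the learned and the true predictor families under $q_h$. The $\Delta$-closeness from Stage 2, with $\sum_t\norm{{w_h^t}^\star}_2^2\le dT$ and boundedness of $\norm{\tw_h^t}_2$, gives $\norm{\widetilde G-G^\star}_{\mathrm{op}}=O(T\sqrt\Delta)$ by Cauchy--Schwarz on the per-task $L^2(q_h)$ gaps. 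On the other hand $\lambda_d(G^\star)\ge\sigma_{\min}\!\big(\Sigma_{(s,a)\sim q_h}(\Pshi_h,\Pshi_h)\big)\,\sigma_d^2(W_h^\star)=\Omega(C_DT/d)$, using \Cref{assump3: Diverse Rewards} and non-degeneracy of $\Sigma_{(s,a)\sim q_h}(\Pshi_h,\Pshi_h)$ on its range (from reachability and the choice of $q_h$, after discarding redundant coordinates so that $\Pshi_h$ spans $\Rb^d$). In the regime where the claimed bound is non-trivial (so $\Delta$ is small; otherwise $\Tr(D_{q_h}(\Pshi_h,\tphi_h))\le d$ already suffices), Weyl's inequality yields $\lambda_d(\widetilde G)=\Omega(C_DT/d)$, and since $\Sigma_{(s,a)\sim q_h}(\tphi_h,\tphi_h)\preceq I$ by the normalization of $\Phi$, $\sigma_d^2(\widetilde W_h)=\lambda_d(\widetilde W_h^\top\widetilde W_h)\ge\lambda_d(\widetilde G)=\Omega(C_DT/d)$. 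Substituting, $\norm{D_{q_h}(\Pshi_h,\tphi_h)^{1/2}}_F^2\le T\Delta/\sigma_d^2(\widetilde W_h)=O(d\Delta/C_D)$, which is the asserted $O\!\big(\tfrac{\epsilon dC_{\mathrm{min}}}{C_D}+\tfrac{d}{C_D}\sqrt{\tfrac{\log(2/\delta)}{TN_f}}\big)$ up to the $H$ factor. I expect the two load-bearing points to be the occupancy-domination in Stage 1 — the sole place reachability and boundedness of $q_h$ enter, and what licenses the uniform-over-policies system-identification bound — and the diversity transfer in Stage 3, whose subtlety is that diversity of the weight matrix $W_h^\star\in\Rb^{d\times T}$ must first be upgraded to diversity of the true predictors \emph{as a family of functions in $L^2(q_h)$} (hence the appeal to the conditioning of $\Sigma_{(s,a)\sim q_h}(\Pshi_h,\Pshi_h)$) before a perturbation argument can carry it over to the learned weights $\widetilde W_h$.
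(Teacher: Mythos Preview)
Your three-stage plan and Stages~1--2 are broadly aligned with the paper, but Stage~3 takes a detour that is both unnecessary and fragile. The paper never transfers diversity from $W_h^\star$ to the learned weights $\widetilde W_h$. Instead, it upper-bounds the empirical quantity
\[
\sum_{t\in[T]}\big\|P_{\tphi_h(\Dc_h^{N_f,t})}^{\perp}\,\Pshi_h(\Dc_h^{N_f,t})\,{w_h^t}^{\star}\big\|^2
\]
(the projection residual of the \emph{true} linear predictors onto the learned feature space) by twice the target error from Stage~1 plus a Hoeffding term, using optimality of the least-squares solution and the triangle inequality. The same quantity is then \emph{lower}-bounded by recognizing that it equals $N_f\sum_t ({w_h^t}^{\star})^\top D(\Pshi_h,\tphi_h)\,{w_h^t}^{\star}=N_f\|D^{1/2}W_h^\star\|_F^2\ge N_f\,\sigma_d^2(W_h^\star)\,\|D^{1/2}\|_F^2$, so \Cref{assump3: Diverse Rewards} applies directly to $W_h^\star$. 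The empirical-to-population step for the divergence is handled by a concentration lemma from \citet{DBLP:conf/iclr/DuHKLL21}, which is what ``$N_f$ sufficiently large'' buys. No bound on $\sigma_d(\widetilde W_h)$, no conditioning assumption on $\Sigma_{q_h}(\Pshi_h,\Pshi_h)$, and no Gram-matrix perturbation argument are needed.

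Your Stage~3, by contrast, rests on two steps that are not justified by the stated hypotheses: (i) $\lambda_d(G^\star)\ge \sigma_{\min}(\Sigma_{q_h}(\Pshi_h,\Pshi_h))\sigma_d^2(W_h^\star)$ requires a quantitative lower bound on $\sigma_{\min}(\Sigma_{q_h}(\Pshi_h,\Pshi_h))$, which the paper does not assume and reachability/bounded-$q_h$ do not supply; and (ii) the perturbation bound $\|\widetilde G-G^\star\|_{\mathrm{op}}=O(T\sqrt\Delta)$ would demand per-task (not just average) closeness and control of $\|\tw_h^t\|$, neither of which has been established. A smaller point: in Stage~1 you pick up an extra factor of $H$ by naive telescoping; the paper avoids it by reusing the truncated value-function machinery of \Cref{prop:step1_appendix}, obtaining $\Eb_{(s,a)\sim q_h}|Q^{\pi}_{\Ps,h,r}-Q^{\pi}_{\hP,h,r}|\le \epsilon\, C_{\min}$ with no $H$.
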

We explain the bound in \Cref{Ineq: closeness of individual feature} as follows. The first term arises from the system identification error $\epsilon$ by using the output of RAFFLE, and can be made as small as possible by choosing appropriate $\epsilon$. The second term is related to the randomness caused by sampling state-action pairs from input distribution $\{q_h\}_{h \in [H]}$, and vanishes as $N_f$ becomes large. Note that $N_f$ is the number of simulated samples in \Cref{alg: feature}, which does not require interaction with the true environment. Hence, it can be made sufficiently large to guarantee a small error.	


\Cref{Thm: individual representation} shows RAFFLE can learn a near-accurate representation, based on which learned representations can be further used in other RL environments sharing common representations, similarly in spirit to how representation learning has been exploited in supervised learning~\citep{DBLP:conf/iclr/DuHKLL21}. 


\vspace{-0.05in}
\section{Related Work}\label{sec:relatedwork}
\vspace{-0.05in}
 \textbf{Reward-free RL.} While various studies \citep{oudeyer2007intrinsic,bellemare2016unifying,burda2018exploration,DBLP:journals/corr/abs-1810-06284,nair2018visual,eysenbach2018diversity,co2018self,hazan2019provably,du2019provably,pong2019skew,misra2020kinematic} proposed exploration algorithms for good coverage on the state space without using explicit reward signals, theoretically speaking, the paradigm of reward-free RL was first formalized by \citet{pmlr-v119-jin20d}, where they provided both upper and lower bounds on the sample complexity. 
	For tabular case, several follow-up studies~\citep{kaufmann2021adaptive,menard2021fast} further improved the sample complexity, and \citet{zhang2020nearly} established the minimax optimality guarantee. Reward-free RL was also studied with function approximation. \citet{wang2020reward} studied \textit{linear MDPs}, and \citet{zhang2021reward} studied \textit{linear mixture MDPs}. Further,
	\citet{zanette2020provably} considered a class of MDPs with \textit{low inherent Bellman error} introduced by \citet{zanette2020learning}. \citet{DBLP:journals/corr/abs-2206-10770} proposed a reward-free algorithm called RFOLIVE under non-linear MDPs with low Bellman Eluder dimension. In addition, \citet{miryoosefi2021simple} proposed a reward-free approach to solving constrained RL problems with any given reward-free RL oracle under both the tabular and linear MDPs.

	\textbf{Reward-free RL under low-rank MDPs.} 
	As discussed in \Cref{intro}, reward-free RL under low-rank MDPs have been studied recently \citep{NEURIPS2020_e894d787,modi2021model}, and our result significant improves the sample complexity therein.
	When finite latent state space is assumed, low-rank MDPs specialize to block MDPs, under which algorithms termed as PCID~\citep{du2019provably} and HOMER~\citep{misra2020kinematic} achieved sample complexities of $\tilde{O}(\frac{d^4H^2K^4}{\min(\eta^4\gamma_n^2,\epsilon^2)})$ and $\tilde{O}(\frac{d^8H^4K^4}{\min(\eta^3,\epsilon^2)})$, respectively. Our result on the general low-rank MDPs can be utilized to further improve those results for block MDPs.

	\textbf{Reward-known RL under low-rank MDPs.}
	For reward-known RL, \citet{DBLP:conf/iclr/UeharaZS22} proposed a computationally efficient algorithm REP-UCB under low-rank MDPs. Our design of the reward-free algorithm for low-rank MDPs is inspired by their algorithm with several new ingredients as  discussed in \Cref{sec:algorithm}, and improves their sample complexity on the dependence of $H$.
	Meanwhile, algorithms have been proposed for MDP models with low Bellman rank \citep{jiang2017contextual}, low witness rank \citep{sun2019model},  bilinear classes \citep{du2021bilinear} and low Bellman eluder dimension~\citep{jin2021bellman}, which can specialize to low-rank MDPs. 
	However, those algorithms are computationally more costly as remarked in \citet{DBLP:conf/iclr/UeharaZS22},
	although their sample complexity may have sharper dependence on $d,K$ or $H$. Specializing to block MDPs, \citet{zhang2022efficient} proposed an algorithm called BRIEE, which empirically achieves the state-of-art sample complexity for block MDP models. Besides, \citet{DBLP:journals/corr/abs-2106-11935} proposed an algorithm coined ReLEX for a slightly different low-rank model, and obtained a problem dependent regret upper bound.

\section{Conclusions}

In this paper, we investigate the reward-free reinforcement learning, in which the underlying model admits a low-rank structure. Without further assumptions, we propose an algorithm called RAFFLE which significantly improves the state-of-the-art sample complexity for accurate model estimation and near-optimal policy identification. We further design an algorithm to exploit the model learned by RAFFLE for accurate representation learning without further interaction with the environment. Although $\epsilon$-accurate system identification can easily induce $H\epsilon$-optimal policy, the relationship between these two learning goals under general reward-free exploration setting remain under-explored, and is an interesting topic for future investigation.  

\bibliography{RFRL}
\bibliographystyle{iclr2023_conference}

\newpage
\appendix
\onecolumn



\section{Proof of \Cref{thm1: reward-free sample complexity}}\label{sec: A}
We first provide a proof outline to highlight our key ideas in the analysis of \Cref{thm1: reward-free sample complexity}, and then provide the detailed proof. To simplify the notation, we denote the total variation distance $\left\|\hP_h^{(n)}(s_h,a_h) - P_h^\star(s_h,a_h)\right\|_{TV}$ by $f_h^{(n)}(s_h,a_h)$.
\begin{proof}[Proof Outline for \Cref{thm1: reward-free sample complexity}]
	\textbf{Step 1} provides an upper bound on the difference of value functions under estimated model $\hP^{(n)}$ and the true model $P^\star$ for any given policy $\pi$ and reward $r$, as given in the following proposition (see  \cref{prop:step1_appendix} in \cref{subsec:reward_free_proof}).
	\begin{proposition}[Informal]\label{prop:step1}
		There exist constants $c_n=O(\log n)$ and $c'=O(1)$ such that for any policy $\pi$ and reward $r$, with high probability, we have
		\begin{equation*}
			\left|V_{P^\star,r}^{\pi} - V_{\hP^{(n)}, r}^{\pi}\right|\leq \hat{V}_{\hP^{(n)},\hb^{(n)}}^{\pi_n} + \sqrt{\frac{c_n}{n}}.
		\end{equation*}
	\end{proposition}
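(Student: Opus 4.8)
The plan is to prove \Cref{prop:step1} by reducing the value-function gap to a model-estimation error and then showing that this error is dominated by the exploration value function $\hat V_{\hP^{(n)},\hat b^{(n)}}$ that RAFFLE maximizes. \textbf{Step 1 (simulation lemma with truncation).} Since $r_h\ge 0$ and $\sum_h r_h\le 1$, every $V^\pi_{h,\hP^{(n)},r}$ lies in $[0,1]$. Telescoping $Q^\pi_{h,P^\star,r}-Q^\pi_{h,\hP^{(n)},r}$ and bounding $|(P^\star_h-\hP^{(n)}_h)g|\le\|g\|_\infty f_h^{(n)}$ at each step, together with the trivial bound $|Q^\pi_{h,P^\star,r}-Q^\pi_{h,\hP^{(n)},r}|\le 1$, gives a \emph{truncated} recursion for the $Q$-gap of the same shape as the recursion defining $\hat V_{\hP^{(n)},\hat b^{(n)}}$ in \Cref{ineq: def of hV}, with $f^{(n)}=\{f_h^{(n)}\}_h$ playing the role of the reward. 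Hence $|V^\pi_{P^\star,r}-V^\pi_{\hP^{(n)},r}|$ is at most the value, under $(P^\star,\pi)$ with the truncated Bellman operator, of the reward $f^{(n)}$ (the reverse direction is symmetric). Matching the two recursions is precisely what prevents the final bound from carrying spurious powers of $H$ — this is the role of the truncation highlighted in \Cref{sec:algorithm}.

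\textbf{Step 2 (model error $\lesssim$ exploration value $+$ statistical slack).} The core estimate is that, with probability at least $1-\delta$, simultaneously for all $h,n$ and all $\pi$,
\[
\Eb^\star_\pi\!\left[f_h^{(n)}(s_h,a_h)\right]\;\lesssim\;\hat\alpha_n\,\Eb^\star_\pi\!\left[\big\|\hphi^{(n)}_h(s_h,a_h)\big\|_{(\hat U^{(n)}_h)^{-1}}\right]+\sqrt{\tfrac{c_n}{n}},
\]
with $\hat\alpha_n=\tilde O(\sqrt{K+d^2})$ and $c_n=O(\log n)$. Two ingredients enter. First, the MLE guarantee (available under \Cref{assumption: realizability} for a finite model class) shows $\hP^{(n)}_h$ is accurate in \emph{squared} total variation under exactly the distributions from which $\mathcal{D}^n_h$ was drawn — roll-in by $\pi_{\tau-1}$ to step $h-1$ followed by a uniform action — up to a $\log(|\Phi||\Psi|nH/\delta)$ factor. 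Second, a change-of-measure / elliptical-potential argument transfers this in-sample accuracy to an arbitrary target policy $\pi$: because both $P^\star$ and $\hP^{(n)}$ are rank $d$, the state occupancies at step $h$ over all policies lie in a fixed $d$-dimensional family, so the discrepancy between $\pi$'s visitation and the exploration visitation is controlled by the empirical covariance $\hat U^{(n)}_h$ of $\hphi^{(n)}_h$ — producing the factor $\|\hphi^{(n)}_h\|_{(\hat U^{(n)}_h)^{-1}}$; the extra $K$ inside $\hat\alpha_n$ is the importance weight incurred by the uniform last action of the two-step sampling scheme, and the $d^2$ comes from the normalization $\|\int\mu^\star_h g\|_2\le\sqrt d$ entering the self-normalized concentration bound. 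Everything not aligned with the explored directions of $\hat U^{(n)}_h$ is absorbed into $\sqrt{c_n/n}$. The subtle point here is that this is a \emph{trajectory-wise} statement: an individual $\hat b^{(n)}_h(s,a)$ is not in general a valid pointwise confidence width for $f^{(n)}_h(s,a)$, so the change of measure must be carried out in expectation along $\pi$, not pointwise.

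\textbf{Step 3 (assemble).} Since $f^{(n)}_h\le 1$, replacing $\hat\alpha_n\|\hphi^{(n)}_h\|_{(\hat U^{(n)}_h)^{-1}}$ by $\hat b^{(n)}_h=\min\{\hat\alpha_n\|\hphi^{(n)}_h\|_{(\hat U^{(n)}_h)^{-1}},1\}$ in Step~2 is harmless, and the resulting estimate is fed into the truncated recursion of Step~1. Unrolling that recursion under $(P^\star,\pi)$, replacing each conditional expectation $\Eb^\star_\pi[f^{(n)}_{h'}\mid\cdot]$ by the expected bonus plus the $\sqrt{c_n/n}$ slack, and re-expressing the remaining roll-ins under $P^\star$ in terms of roll-ins under $\hP^{(n)}$ (each such swap costing one more $f^{(n)}$ term, again handled by Step~2), yields $|V^\pi_{P^\star,r}-V^\pi_{\hP^{(n)},r}|\lesssim \hat V^{\pi}_{\hP^{(n)},\hat b^{(n)}}+\sqrt{c_n/n}$. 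Finally, since $\pi_n=\arg\max_\pi \hat V^{\pi}_{\hP^{(n)},\hat b^{(n)}}$, we have $\hat V^{\pi}_{\hP^{(n)},\hat b^{(n)}}\le \hat V^{\pi_n}_{\hP^{(n)},\hat b^{(n)}}$, which is the claim; the constant $c'$ absorbs the factors of $2$ accumulated in the swaps, consistent with the ``$2\hat V^{\pi_n}_{\hP^{(n)},\hat b^{(n)}}+2\sqrt{K\zeta_n}$'' appearing in the termination test of \Cref{Algorithm: RAFFLE}.

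\textbf{Main obstacle.} The delicate part is Step~2: one must reconcile the fact that a single-step bonus is \emph{not} a pointwise uncertainty bound with the fact that its cumulative (trajectory) version is, which forces the MLE-concentration-plus-change-of-measure argument to be carried out in expectation along the target policy and to exploit the rank-$d$ structure to keep the occupancy ratio finite; calibrating the constants so that the truncation actually delivers the advertised $H$-savings — rather than an extra $H$ from naively summing per-step errors — is where most of the work lies. A routine union bound over $(h,n)$ upgrades the estimates to hold uniformly, which is needed since $r$ and $\pi$ are arbitrary and may depend on the collected data.
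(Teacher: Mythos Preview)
Your high-level structure (truncated simulation lemma, MLE concentration, change of measure, optimality of $\pi_n$) is correct, but Step~2 as written contains a real gap, and it propagates into Step~3.

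You claim $\Eb^\star_\pi[f_h^{(n)}(s_h,a_h)] \lesssim \hat\alpha_n\,\Eb^\star_\pi[\|\hphi^{(n)}_h(s_h,a_h)\|_{(\hat U^{(n)}_h)^{-1}}] + \sqrt{c_n/n}$, i.e.\ the bonus at step $h$ controls the TV error at the \emph{same} step $h$. The change-of-measure you sketch does not deliver this; the one-step-back inequality that actually follows from the low-rank structure is shifted by one step: conditioning on $(s_{h-1},a_{h-1})$ and drawing $s_h\sim \hP^{(n)}_{h-1}(\cdot\mid s_{h-1},a_{h-1})$,
\[
\Eb\!\left[f_h^{(n)}(s_h,a_h)\,\middle|\,s_{h-1},a_{h-1}\right]\;\le\;\alpha_n\left\|\hphi^{(n)}_{h-1}(s_{h-1},a_{h-1})\right\|_{(U^{(n)}_{h-1,\hphi})^{-1}}.
\]
The learned feature appears at step $h-1$, not $h$, because it is the factorization $\hP^{(n)}_{h-1}=\langle\hphi^{(n)}_{h-1},\hmu^{(n)}_{h-1}\rangle$ that lets you pull the norm outside the conditional expectation. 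Consequently $\hat b^{(n)}_{h}$ dominates $\Eb[f^{(n)}_{h+1}\mid s_h]$, not $f^{(n)}_h$ itself, and the paper runs an induction exploiting this index shift to prove $\hat V^\pi_{\hP^{(n)},f^{(n)}}\le\hat V^\pi_{\hP^{(n)},\hat b^{(n)}}+\sqrt{K\zeta_n}$ directly, with the $\sqrt{K\zeta_n}$ coming only from the $h=1$ boundary.

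This shift also forces Step~1 to roll in under $\hP^{(n)}$, not under $P^\star$. If you roll in under $P^\star$, the one-step-back produces $\|\phi^\star_{h-1}\|_{(U^{(n)}_{h-1,\phi^\star})^{-1}}$, which is unknown and is not $\hat b^{(n)}_{h-1}$. Your Step~3 fix --- swapping $P^\star$ roll-ins to $\hP^{(n)}$ one layer at a time, each swap ``costing one more $f^{(n)}$ term, again handled by Step~2'' --- is circular: every swap reintroduces a TV term of the same order you are trying to bound, and nothing in your argument prevents these from accumulating an $H$ factor. The paper avoids this entirely by proving in its Step~1 that $|V^\pi_{P^\star,r}-V^\pi_{\hP^{(n)},r}|\le \hat V^\pi_{\hP^{(n)},f^{(n)}}$ via a truncated induction whose roll-ins are \emph{already} all under $\hP^{(n)}$; then Step~2 compares the two $\hP^{(n)}$-truncated values (reward $f^{(n)}$ versus reward $\hat b^{(n)}$) using the shifted inequality above, and no swaps are needed.
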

	This proposition is inspired by \citet{DBLP:conf/iclr/UeharaZS22}, while it generalizes to non-stationary setting and arbitrary reward scenario from infinite stationary MDP and fixed reward. 
	The main proof idea is to first notice that for any reward, we have
	\begin{equation}
		\left|V_{P^\star,r}^{\pi} - V_{\hP^{(n)}, r}^{\pi}\right|\leq \hat{V}_{\hP^{(n)}, f^{(n)}}^{\pi}.\label{eqn:simulation_equation}
	\end{equation}
	Then, we show that $\hV_{\hP^{(n)},f^{(n)}}^{\pi}\leq \hV_{\hP^{(n)},\hb^{(n)}}^{\pi}+\sqrt{\frac{c_n}{n}}\leq \hV_{\hP^{(n)},\hb^{(n)}}^{\pi_n}+\sqrt{\frac{c_n}{n}}$ due to the optimality of the exploration policy $\pi_n$.
	

	\textbf{Step 2} shows sublinearity of the summation of $\hV_{\hP^{(n)},\hb^{(n)}}^{\pi_n}$, which is an upper bound of value function difference under the true  model with the estimated model (see  \cref{prop:step2_appendix} in \cref{sec: A.3}).
	
	\begin{proposition}[Informal]\label{prop:step2}
		Under the same setting of \Cref{prop:step1}, with high probability, the summation of value functions $V^{\pi_n}_{\hP^{(n)},\hb^{(n)}}$ under exploration policies $\{\pi_n\}_{n\in[N]}$ with exploration-driven reward functions $\hb^{(n)}$ is sublinear, as given by
		\begin{align*}
			\sum_{n\in[N]}& \hat{V}_{\hP^{(n)},\hb^{(n)}}^{\pi_n}  \leq
			\tilde{O}\left(Hd\sqrt{K(K+d^2)N}\right).
		\end{align*}
	\end{proposition}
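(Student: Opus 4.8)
The plan is to prove \Cref{prop:step2} by an elliptical‑potential (log‑determinant) argument applied to the exploration bonuses, organized in three moves. First I would reduce $\hat V^{\pi_n}_{\hP^{(n)},\hb^{(n)}}$ to an expected cumulative bonus that can be matched against the collected data. Since the truncation in \eqref{ineq: def of hV} only decreases the value, $\hat V^{\pi_n}_{\hP^{(n)},\hb^{(n)}}\le \Eb_{(s_h,a_h)\sim(\hP^{(n)},\pi_n)}[\sum_{h=1}^H\hb^{(n)}_h(s_h,a_h)]$, and I would transfer this expectation from $\hP^{(n)}$ to $P^\star$ by the same device used in the proof of \Cref{prop:step1}: the gap between the occupancy measures of $\pi_n$ under $\hP^{(n)}$ and under $P^\star$ is captured by a truncated value of the model‑error reward $f^{(n)}$, which by optimality of $\pi_n$ and the bound $\hat V^{\pi_n}_{\hP^{(n)},f^{(n)}}\le \hat V^{\pi_n}_{\hP^{(n)},\hb^{(n)}}+\sqrt{c_n/n}$ is handled as a \emph{cumulative} (not step‑by‑step) correction, so it is not inflated by $H$. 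The upshot is that, up to the lower‑order term $\sum_{n\in[N]}\sqrt{c_n/n}=\tilde O(\sqrt N)$, it suffices to bound $\sum_{n\in[N]}\sum_{h=1}^H\Eb^\star_{\pi_n}[\hb^{(n)}_h(s_h,a_h)]$, where the step‑$h$ state has (up to controlled error) the same distribution as the state in the triple collected at episode $(n+1,h+1)$ that enters $\hat U^{(n+1)}_h$.

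Second, I would perform an action change of measure. The covariance $\hat U^{(n)}_h$ in \eqref{eq:u} is built from data whose step‑$h$ action is uniform, whereas the bonus is averaged over $a_h\sim\pi_n$; importance‑weighting the action via $\pi_n(a|s)\le K\,\Uc(\Ac)(a)$ together with Cauchy--Schwarz and Jensen turns $\Eb^\star_{\pi_n}[\hb^{(n)}_h(s_h,a_h)]$ into $\hat\alpha_n\sqrt K\,(\Eb^\star_{s_h\sim\pi_n,\,a_h\sim\Uc}[\|\hphi^{(n)}_h(s_h,a_h)\|^2_{(\hat U^{(n)}_h)^{-1}}])^{1/2}$. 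The surviving $\sqrt K$ combines with $\hat\alpha_n=\tilde O(\sqrt{K+d^2})$ to produce the $\sqrt{K(K+d^2)}$ appearing in the statement.

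Third, I would run the elliptical‑potential argument across iterations: by Cauchy--Schwarz $\sum_{n\in[N]}(\Eb^\star_{s_h\sim\pi_n,a_h\sim\Uc}[\|\hphi^{(n)}_h\|^2_{(\hat U^{(n)}_h)^{-1}}])^{1/2}\le (N\sum_{n\in[N]}\Eb^\star_{\pi_n,\Uc}[\|\hphi^{(n)}_h\|^2_{(\hat U^{(n)}_h)^{-1}}])^{1/2}$, and I would bound the inner sum by a log‑determinant telescoping. The step‑$h$ pair collected at episode $(n+1,h+1)$ is drawn from $\Eb^\star_{s_h\sim\pi_n,a_h\sim\Uc}$, so (for fixed features) its contribution to $\log\det\hat U^{(n+1)}_h-\log\det\hat U^{(n)}_h$ lower‑bounds the corresponding population leverage up to a constant, and the telescoped increments sum to $\tilde O(d)$ after accounting for the regularizer $\lambda_n=\tilde O(d)$. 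Combining the three moves, summing over $h\in[H]$, and absorbing the $\tilde O(\sqrt N)$ residuals gives $\sum_{n\in[N]}\hat V^{\pi_n}_{\hP^{(n)},\hb^{(n)}}\le\tilde O(Hd\sqrt{K(K+d^2)N})$.

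The hard part will be the third move: the feature map $\hphi^{(n)}_h$ defining $\hat U^{(n)}_h$ is re‑estimated every iteration, so the covariance matrices are not nested and the naive log‑determinant telescoping does not apply verbatim. I expect the fix to require a uniform‑concentration argument over the finite model class $\{(\Phi,\Psi)\}$ — this is where the $\log|\Phi||\Psi|$ sitting inside $\zeta_n$ and $\hat\alpha_n$ comes from — together with the MLE guarantee, which pins down $\hphi^{(n)}_h$ on the collected data tightly enough that its population leverage is comparable, up to constants and lower‑order statistical error, to the one appearing in the telescoping for the realized features. A secondary subtlety is that the reduction to $P^\star$ in the first move must be carried out as a cumulative domination so that the model‑error feedback term stays sub‑critical; the truncation in \eqref{ineq: def of hV} is exactly what makes that possible.
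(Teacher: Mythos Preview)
Your high-level structure---transfer to $P^\star$, Cauchy--Schwarz, then an elliptical-potential argument---matches the paper, but the key technical device in move three is different from what you propose, and your speculated fix is not needed.

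The paper never runs the elliptical-potential argument on the time-varying features $\hphi^{(n)}_h$. Instead, after transferring to $P^\star$, it applies a \emph{one-step-back} inequality (\Cref{lemma:Step_Back} and \Cref{lemma:Bound_TV}): since $s_h$ is distributed as $\langle\phi^\star_{h-1}(s_{h-1},a_{h-1}),\mu^\star_{h-1}(\cdot)\rangle$ under $P^\star$, Cauchy--Schwarz in the $\mu^\star_{h-1}$-space gives
\[
\mathop{\Eb}_{s_h\sim P^\star_{h-1},\,a_h\sim\pi_n}\!\bigl[\hb^{(n)}_h(s_h,a_h)\,\big|\,s_{h-1},a_{h-1}\bigr]
\;\le\;\gamma_n\,\bigl\|\phi^\star_{h-1}(s_{h-1},a_{h-1})\bigr\|_{(W^{(n)}_{h-1,\phi^\star})^{-1}},
\]
where $W^{(n)}_{h-1,\phi^\star}$ is the population Gram matrix built from the \emph{fixed} true feature $\phi^\star$ under the mixture of past policies $\Pi_n$. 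The $\hphi^{(n)}$-dependence is absorbed into the scalar $\gamma_n$ via the trace bound $n\,\Eb_{s_h\sim(P^\star,\Pi_n),a_h\sim\Uc}[(\hb^{(n)}_h)^2]\le 9\hat\alpha_n^2 d$, which holds because $\hat U^{(n)}_h$ and its population counterpart $U^{(n)}_{h,\hphi}$ match up to constants on the high-probability event. An analogous step-back bounds $f^{(n)}_h$ by $\alpha_n\|\phi^\star_{h-1}\|_{(U^{(n)}_{h-1,\phi^\star})^{-1}}$. Since $\phi^\star$ does not change across $n$, the matrices $\{W^{(n)}_{h,\phi^\star}\}_n$ and $\{U^{(n)}_{h,\phi^\star}\}_n$ are genuinely nested, and the elliptical-potential lemma applies verbatim.

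So your ``hard part''---handling non-nested covariances for $\hphi^{(n)}$ via uniform concentration plus MLE---is bypassed entirely: the step-back trick trades the changing feature at level $h$ for the fixed true feature at level $h-1$, and all the $\hphi^{(n)}$-dependence is swallowed into the coefficients $\alpha_n,\gamma_n$. Your proposed workaround is plausible but speculative, and would at best reproduce what the step-back gives for free; at worst it fails because you would need to argue that the empirical leverage under $\hphi^{(n)}$ is comparable to a telescoping quantity despite the feature map being re-fit each round, which is exactly what the paper avoids.
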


	\textbf{Step 3} combines Step 2 and Step 1. We are able to conclude that RAFFLE will terminate with polynomial sample complexity such that, the value function difference of $P^\star$ and the returned environment $\hP^{\epsilon}$ is at most $\epsilon$.

	\begin{proposition}\label{prop:step3}
		With high probability, RAFFLE terminates after at most $\Tilde{O}\left(H^2d^2K(d^2+K)/\epsilon^2\right)$ iterations, and the output model $\hP^{\epsilon}$ satisfies
		\begin{equation*}
			\left|V_{P^\star,r}^{\pi} - V_{\hP^{(\epsilon)}, r}^{\pi}\right|\leq \hat{V}_{\hP^{(\epsilon)},\hb^{(n_\epsilon)}}^{\pi_{n_\epsilon}} + \sqrt{\frac{c_{n_\epsilon}}{n_\epsilon}}\leq \epsilon/2,
		\end{equation*}
		where $n_\epsilon$ is the iteration number where RAFFLE terminates, i.e. $\hP^{\epsilon} = \hP^{(n_\epsilon)}$.
	\end{proposition}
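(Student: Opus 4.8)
The plan is to derive Proposition~\ref{prop:step3} purely by combining the two preceding propositions with the explicit termination test $2\hat{V}^{\pi_n}_{\hP^{(n)},\hb^{(n)}}+2\sqrt{K\zeta_n}\le\epsilon$ of \Cref{Algorithm: RAFFLE}. I will handle this in three steps: first the per-iteration error bound \emph{assuming} the loop has exited at some iteration $n_\epsilon$; then an upper bound on $n_\epsilon$ itself; and finally the observation that the error bound, applied at $n=n_\epsilon$, is precisely the displayed chain of inequalities.

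For the error bound: when the loop exits at iteration $n_\epsilon$, the termination test gives $\hat{V}^{\pi_{n_\epsilon}}_{\hP^{(n_\epsilon)},\hb^{(n_\epsilon)}}+\sqrt{K\zeta_{n_\epsilon}}\le\epsilon/2$. Since $c_n=O(\log n)$ while $\zeta_n=\log(2|\Phi||\Psi|nH/\delta)/n$, one has $c_n\le nK\zeta_n$ for $K\ge1$, hence $\sqrt{c_{n_\epsilon}/n_\epsilon}\le\sqrt{K\zeta_{n_\epsilon}}$ and therefore $\hat{V}^{\pi_{n_\epsilon}}_{\hP^{(n_\epsilon)},\hb^{(n_\epsilon)}}+\sqrt{c_{n_\epsilon}/n_\epsilon}\le\epsilon/2$. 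Plugging $n=n_\epsilon$ into \Cref{prop:step1} then yields $|V^\pi_{P^\star,r}-V^\pi_{\hP^\epsilon,r}|\le\hat{V}^{\pi_{n_\epsilon}}_{\hP^{(n_\epsilon)},\hb^{(n_\epsilon)}}+\sqrt{c_{n_\epsilon}/n_\epsilon}\le\epsilon/2$ for every policy $\pi$ and reward $r$, which is exactly the asserted chain.

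To bound the iteration count I would argue by contradiction. Let $n_0$ be the smallest index with $2\sqrt{K\zeta_n}\le\epsilon/2$ for all $n\ge n_0$; since $\zeta_n=\Theta(\log(\cdot)/n)$, $n_0=\tilde{O}(K/\epsilon^2)$. If RAFFLE has not yet terminated by some iteration $N\ge n_0$, then for every $n$ with $n_0\le n\le N$ the test has failed, so $2\hat{V}^{\pi_n}_{\hP^{(n)},\hb^{(n)}}>\epsilon-2\sqrt{K\zeta_n}\ge\epsilon/2$, i.e. $\hat{V}^{\pi_n}_{\hP^{(n)},\hb^{(n)}}>\epsilon/4$. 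Summing over $n=n_0,\dots,N$ and invoking \Cref{prop:step2} gives
\begin{equation*}
(N-n_0)\frac{\epsilon}{4}<\sum_{n=n_0}^{N}\hat{V}^{\pi_n}_{\hP^{(n)},\hb^{(n)}}\le\sum_{n=1}^{N}\hat{V}^{\pi_n}_{\hP^{(n)},\hb^{(n)}}\le\tilde{O}\!\left(Hd\sqrt{K(K+d^2)N}\right).
\end{equation*}
Reading this as a quadratic inequality in $\sqrt{N}$ and solving for $N$ gives $N\le n_0+\tilde{O}\!\big(H^2d^2K(K+d^2)/\epsilon^2\big)=\tilde{O}\!\big(H^2d^2K(K+d^2)/\epsilon^2\big)$, the lower-order $n_0$ being absorbed. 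Hence RAFFLE must terminate within this many iterations, so $n_\epsilon$ obeys the stated bound, and applying the error bound above at $n=n_\epsilon$ finishes the proof.

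I do not expect a genuine obstacle here, since the substantive work has already been done in \Cref{prop:step1} and \Cref{prop:step2}; the present proposition is the bookkeeping step that cashes them in against the termination rule. The only points needing mild care are: (i) the elementary comparison $\sqrt{c_n/n}\le\sqrt{K\zeta_n}$, which uses only the logarithmic growth of $c_n$; (ii) peeling off the first $n_0=\tilde{O}(K/\epsilon^2)$ iterations so that the residual term $2\sqrt{K\zeta_n}$ is at most $\epsilon/2$ on the range being summed; and (iii) the routine inversion of $(N-n_0)\epsilon\lesssim\sqrt{N}\,Hd\sqrt{K(K+d^2)}$ to isolate $N$, together with checking that the resulting bound dominates $n_0$.
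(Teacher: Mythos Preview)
Your proposal is correct and follows essentially the same contradiction strategy as the paper: assume the loop has not exited by iteration $N$, use the failed termination test to lower-bound a sum, and then contradict the sublinear upper bound from \Cref{prop:step2}. The only difference is bookkeeping: in the paper's formal restatement, \Cref{prop:step2} actually bounds $\sum_{n\in[N]}\big(\hat V^{\pi_n}_{\hP^{(n)},\hb^{(n)}}+\sqrt{K\zeta_n}\big)$ rather than just $\sum_{n}\hat V^{\pi_n}_{\hP^{(n)},\hb^{(n)}}$, so the failed test gives $\epsilon N/2<\sum_{n}(\hat V+\sqrt{K\zeta_n})$ directly over all $n\in[N]$ and no peeling of the first $n_0$ iterations is required. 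Your peeling argument is a valid and harmless workaround for the informal version of \Cref{prop:step2} that you quoted, and the resulting bound is the same up to constants; likewise your identification $\sqrt{c_n/n}\le\sqrt{K\zeta_n}$ is exactly what the formal version of \Cref{prop:step1} states.
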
 
	
	Finally, with some algebraic operations, \Cref{prop:step3} concludes the proof of \Cref{thm1: reward-free sample complexity}.
\end{proof}

\subsection{Supporting Lemmas}\label{app:supportlemmalowrank}
We first present the high probability event.

\begin{lemma}\label{lemma:high_prob_event}
	We define $\Pi_n$ to be a uniform mixture of previous $n-1$ exploration policies:
	\begin{align*}
		\Pi_n = \Uc(\pi_0, \pi_1,...,\pi_{n-1}).
	\end{align*}
	Denote the total variation of $\hP^{(n)}$ and $P^\star$, and the expected matrix of $\hat{U}_h^{(n)} $ as follows.
	
	\begin{align}
		&f_h^{(n)}(s_h,a_h) = \left\|P_h^\star(\cdot|s_h,a_h)-\hP_h^{(n)}(\cdot|s_h,a_h)\right\|_{TV},\\
		&U_{h,\phi}^{(n)} = n\mathop{\Eb}_{s_h\sim(P^\star,\Pi_n )\atop a_h\sim \Uc(\Ac)}\left[\phi(s_h,a_h)(\phi(s_h,a_h))^\top\right] + \lambda_n I,\label{eqn:expected_U_matrix}\\
		&W_{h,\phi}^{(n)} = n\mathop{\Eb}_{(s_h,a_h)\sim(P^\star,\Pi_n)}\left[\phi(s_h,a_h)(\phi(s_h,a_h))^\top\right] + \lambda_n I.
	\end{align}
	where $\lambda_n = \beta_3 d\log(2nH|\Phi|/\delta))$ and $\beta_3 = O(1)$ is a constant coefficient. 
	Suppose \Cref{Algorithm: RAFFLE} runs $N$ iterations and let events $\mathcal{E}_0$ and $\mathcal{E}_1$ be defined as follows.

	\begin{align*}
		&\mathcal{E}_0 = \bigg\{\forall n\in[N],h\in[H],s_h\in\Sc,a_h\in\Ac,  \mathop{\Eb}_{s_{h}\sim (P^\star,\Pi_n)\atop a_h\sim \mathcal{U}}\left[f_h^{(n)}(s_h,a_h)^2\right]\leq \zeta_n\bigg\}, \\
		&\mathcal{E}_1 = \bigg\{\forall n\in[N],h\in[H], s_h\in\Sc, a_h\in\Ac, \\
		&\quad\quad \frac{1}{5} \left\|\hphi_{h-1}^{(n)}(s,a)\right\|_{(U_{h-1,\hphi}^{(n)})^{-1}} \leq \left\|\hphi_{h-1}^{(n)}(s,a)\right\|_{(\hat{U}_{h-1}^{(n)})^{-1}} \leq 3 \left\|\hphi_{h-1}^{(n)}(s,a)\right\|_{(U_{h-1,\hphi}^{(n)})^{-1}}\bigg\}.
	\end{align*}
	where $\zeta_n = \log\left(2|\Phi||\Psi|nH/\delta\right)/n$. We further denote $\zeta=N\zeta_N=\log\left(2|\Phi||\Psi|NH/\delta\right)$ for simplicity. Denote $\mathcal{E} = \mathcal{E}_0\cap\mathcal{E}_1$ as the intersection of the two events. Then, $\Pb[\mathcal{E}]\geq 1-\delta$.

\end{lemma}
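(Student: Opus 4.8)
\textbf{Proof proposal for \Cref{lemma:high_prob_event}.}

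The plan is to bound $\Pb[\mathcal{E}_0^c]$ and $\Pb[\mathcal{E}_1^c]$ separately, each by $\delta/2$, and then apply the union bound. The two events are conceptually different: $\mathcal{E}_0$ is an MLE generalization guarantee in total variation, and $\mathcal{E}_1$ is a matrix concentration statement comparing an empirical Gram matrix to its population counterpart. Both are fairly standard, so the main work is getting the failure probabilities to aggregate correctly across all $n \in [N]$ and $h \in [H]$.

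For $\mathcal{E}_0$: at iteration $n$ and step $h$, the dataset $\Dc_h^n$ consists of $n$ triples $(s_h^{(\tau,h+1)}, a_h^{(\tau,h+1)}, s_{h+1}^{(\tau,h+1)})$, where the state-action pair is drawn from $(P^\star, \Pi_n)$ with $a_h$ uniform, and $s_{h+1} \sim P_h^\star(\cdot \mid s_h, a_h)$. Under \Cref{assumption: realizability} the true model lies in $\Phi \times \Psi$, so the MLE enjoys the standard guarantee that the squared Hellinger (hence, by $\|P-Q\|_{TV}^2 \le 2\mathrm{H}^2(P,Q)$, the squared TV) risk averaged over the sampling distribution is $O(\log(|\Phi||\Psi| / \delta')/n)$ with probability $1 - \delta'$. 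I would cite the standard MLE-in-Hellinger lemma (as used in \citet{NEURIPS2020_e894d787, DBLP:conf/iclr/UeharaZS22}; an Appendix lemma of this paper presumably restates it). Crucially, the data in $\Dc_h^n$ are not i.i.d.\ because $\Pi_n$ depends on the history, so the argument must be the martingale/adaptive-sampling version of the MLE bound; this is the one subtle point. Then take $\delta' = \delta/(2 N H)$ and union bound over $n \in [N]$, $h \in [H]$ to get $\zeta_n = \log(2|\Phi||\Psi| n H / \delta)/n$ as stated (using $n \le N$ to absorb constants into the $\log$).

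For $\mathcal{E}_1$: fix $n, h$ and the (random) estimated feature $\hphi = \hphi_h^{(n)} \in \Phi$. The matrix $\hat{U}_h^{(n)} = \lambda_n I + \sum_{\tau=1}^n \hphi(s_h^{(\tau,h+1)}, a_h^{(\tau,h+1)}) \hphi(\cdot)^\top$ has conditional expectation (given $\hphi$, over the fresh samples at episodes $(\tau, h+1)$) equal to $U_{h,\hphi}^{(n)}$ from \eqref{eqn:expected_U_matrix}, since $(s_h^{(\tau,h+1)}, a_h^{(\tau,h+1)})$ is distributed as $(P^\star, \pi_{\tau-1})$ with $a_h$ uniform, and averaging over $\tau$ gives the mixture $\Pi_n$. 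Because $\|\hphi(s,a)\|_2 \le 1$, each summand has operator norm $\le 1$, so a matrix Bernstein / Freedman inequality gives $\| (\hat{U}_h^{(n)})^{-1/2} (U_{h,\hphi}^{(n)} - \lambda_n I - \ldots) \cdots \|$ control; more precisely the standard "elliptical potential" concentration (cf.\ the self-normalized matrix concentration used in REP-UCB) yields, once $\lambda_n \gtrsim \log(\cdot/\delta')$, the two-sided bound $\tfrac15 \|x\|_{(U_{h,\hphi}^{(n)})^{-1}} \le \|x\|_{(\hat U_h^{(n)})^{-1}} \le 3\|x\|_{(U_{h,\hphi}^{(n)})^{-1}}$ uniformly in $x$ (equivalently $\tfrac19 U \preceq \hat U \preceq 25\, U$ roughly, after adjusting constants). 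The dimension-dependence in $\lambda_n = \beta_3 d \log(2nH|\Phi|/\delta)$ comes from needing the bound to hold uniformly over all $\phi \in \Phi$ (a union bound over $|\Phi|$, since $\hphi$ is data-dependent) and from the $d \times d$ matrix concentration contributing a factor $d$ (or $\log d$, absorbed). Again take $\delta' = \delta/(2NH)$ and union bound over $n, h$.

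The main obstacle I anticipate is the adaptivity: both the MLE samples and the covariance samples are collected under policies $\pi_{\tau-1}$ that depend on all previously observed data, so neither bound is a plain i.i.d.\ concentration. The clean fix is to note that, conditioned on the filtration up to iteration $\tau$ (which determines $\pi_{\tau-1}$), the sample collected at episode $(\tau, h+1)$ is a fresh draw, making $\{$summand at $\tau\}$ a martingale difference sequence; then invoke the martingale versions (Freedman for the matrices, the adaptive MLE lemma for Hellinger). One should also be a little careful that $\mathcal{E}_1$ is stated for step $h-1$ with matrices indexed by $h-1$, but those matrices are built from samples at episode $h$ — so the indexing just shifts by one and the same argument applies verbatim. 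Combining, $\Pb[\mathcal{E}_0^c] + \Pb[\mathcal{E}_1^c] \le \delta$, hence $\Pb[\mathcal{E}] = \Pb[\mathcal{E}_0 \cap \mathcal{E}_1] \ge 1 - \delta$.
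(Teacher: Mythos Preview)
Your proposal is correct and follows essentially the same approach as the paper: the paper's proof bounds $\Pb[\mathcal{E}_0^c]\le\delta/2$ by invoking the (adaptive) MLE guarantee from \citet{NEURIPS2020_e894d787} (restated as \Cref{coro:MLE}), bounds $\Pb[\mathcal{E}_1^c]\le\delta/2$ by citing the matrix concentration for $\hat U$ versus its population version from Lemma~39 of \citet{zanette2021cautiously} and Lemma~11 of \citet{DBLP:conf/iclr/UeharaZS22}, and then union bounds. Your identification of the key subtleties (martingale/adaptive sampling for the MLE bound, union bound over $\Phi$ to handle the data-dependent $\hphi$, the $h\!-\!1$ index shift in $\mathcal{E}_1$) is exactly right; the only harmless slip is that the MLE dataset $\Dc_h^n$ is built from episodes $(\tau,h)$, not $(\tau,h+1)$, while it is $\hat U_h^{(n)}$ that uses episodes $(\tau,h+1)$.
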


\begin{proof}
	
	By \Cref{coro:MLE} in \Cref{appx:auxiliary}, we have $\Pb[\mathcal{E}_0]\geq 1-\delta/2$.  Further, by Lemma 39 in \citet{zanette2021cautiously} for the version of fixed $\phi$ and Lemma 11 in \citet{DBLP:conf/iclr/UeharaZS22}, we have $\Pb[\mathcal{E}_1]\geq 1-\delta/2$. Therefore, $\Pb[\mathcal{E}]\geq 1-\delta$. 
\end{proof}

Based on \Cref{lemma:high_prob_event}, we can bound the exlporation-driven reward in RAFFLE as follows.
\begin{corollary}\label{coro:concentration on b}
	Given that the event $\Ec$ occurs, the following inequality holds for any $n\in[N],h\in[H], s_h\in\Sc,a_h\in\Ac$:
	\begin{align*}
		\min\left\{\frac{\hat{\alpha}_n} {5}\left\|\hphi_{h}^{(n)}(s_{h},a_{h})\right\|_{(U_{h,\hphi}^{(n)})^{-1}},1\right\} \leq \hb_h^{(n)}(s_h,a_h) \leq  3\hat{\alpha}_n \left\|\hphi_{h}^{(n)}(s_{h},a_{h})\right\|_{(U_{h,\hphi}^{(n)})^{-1}},
	\end{align*}
	where $\hat{\alpha}_n = 5\sqrt{2\beta_3n\zeta_n (K+d^2)}$.
\end{corollary}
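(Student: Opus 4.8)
The plan is to treat \Cref{coro:concentration on b} as an essentially immediate consequence of \Cref{lemma:high_prob_event}, combined with the definition of the exploration-driven reward in \Cref{eq:bn} and the elementary monotonicity of $t\mapsto\min\{t,1\}$. The only real content is that $\hb_h^{(n)}$ is defined through the \emph{empirical} weighted norm $\|\hphi_h^{(n)}(s,a)\|_{(\hat U_h^{(n)})^{-1}}$, whereas the corollary asks for the same quantity in terms of the \emph{population} matrix $U_{h,\hphi}^{(n)}$; the bridge between them is exactly the two-sided comparison asserted in the event $\Ec_1$. So the first step is to condition on $\Ec=\Ec_0\cap\Ec_1$ — which occurs with probability at least $1-\delta$ by \Cref{lemma:high_prob_event} — and to observe that both $\hat U_h^{(n)}$ and $U_{h,\hphi}^{(n)}$ are positive definite (each is $\lambda_n I$ plus a positive semidefinite term), so every weighted norm appearing below is well defined.

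For the upper bound I would simply discard the truncation, $\hb_h^{(n)}(s_h,a_h)\le \hat\alpha_n\|\hphi_h^{(n)}(s_h,a_h)\|_{(\hat U_h^{(n)})^{-1}}$, and then invoke the right-hand inequality of $\Ec_1$, read at step $h$ (after the cosmetic re-indexing, since $\Ec_1$ is stated for all steps), to pass from $\hat U_h^{(n)}$ to $U_{h,\hphi}^{(n)}$ at the cost of a factor $3$. This yields $\hb_h^{(n)}(s_h,a_h)\le 3\hat\alpha_n\|\hphi_h^{(n)}(s_h,a_h)\|_{(U_{h,\hphi}^{(n)})^{-1}}$; note that no $\min$ with $1$ survives here because $\hb_h^{(n)}\le \hat\alpha_n\|\cdot\|_{(\hat U_h^{(n)})^{-1}}$ regardless of the truncation. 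For the lower bound, the left-hand inequality of $\Ec_1$ gives $\hat\alpha_n\|\hphi_h^{(n)}(s_h,a_h)\|_{(\hat U_h^{(n)})^{-1}}\ge \tfrac{\hat\alpha_n}{5}\|\hphi_h^{(n)}(s_h,a_h)\|_{(U_{h,\hphi}^{(n)})^{-1}}$, and since $t\mapsto\min\{t,1\}$ is nondecreasing, applying $\min\{\cdot,1\}$ to both sides preserves the inequality and produces the claimed bound $\hb_h^{(n)}(s_h,a_h)\ge\min\{\tfrac{\hat\alpha_n}{5}\|\hphi_h^{(n)}(s_h,a_h)\|_{(U_{h,\hphi}^{(n)})^{-1}},1\}$ — here the truncation must be kept, since the rescaled population norm may itself exceed $1$.

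Finally, the stated value $\hat\alpha_n=5\sqrt{2\beta_3 n\zeta_n(K+d^2)}$ is not something to be derived: it is simply the choice this corollary fixes for the parameter $\hat\alpha_n$ of \Cref{eq:bn}, and it is consistent with the $\tilde O(\sqrt{K+d^2})$ scaling used in \Cref{thm1: reward-free sample complexity} because $n\zeta_n=\log(2|\Phi||\Psi|nH/\delta)$ is only logarithmic. I do not expect any substantive obstacle in this argument; the only point requiring a little care is purely bookkeeping — aligning the shifted step index in the definition of $\Ec_1$ with step $h$ in the corollary statement, and keeping the definitions of $\hat\alpha_n$, $\zeta_n$, and $\lambda_n$ consistent across \Cref{lemma:high_prob_event} and \Cref{eq:bn}.
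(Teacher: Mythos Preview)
Your proposal is correct and takes exactly the same approach as the paper: the paper's proof simply recalls the definition $\hb_h^{(n)}(s_h,a_h)=\min\{\hat{\alpha}_n\|\hphi_{h}^{(n)}(s,a)\|_{(\hat{U}_{h}^{(n)})^{-1}},1\}$ and says the result follows immediately from \Cref{lemma:high_prob_event}. You have merely spelled out the two one-line implications (drop the truncation and apply the right side of $\Ec_1$ for the upper bound; apply the left side of $\Ec_1$ then monotonicity of $\min\{\cdot,1\}$ for the lower bound), which is precisely what ``immediately'' means here.
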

\begin{proof}
	Recall $\hb_h^{(n)}(s_h,a_h)=\min\left\{\hat{\alpha}_n\left\|\hphi_{h}^{(n)}(s,a)\right\|_{(\hat{U}_{h}^{(n)})^{-1}},1\right\}$. Applying \Cref{lemma:high_prob_event}, we can immediately obtain the result.
\end{proof}

The following lemma extends the Lemmas 12 and 13 under infinite discount MDPs in \cite{DBLP:conf/iclr/UeharaZS22} to episodic MDPs.  We provide the proof for completeness.

\begin{lemma}\label{lemma:Step_Back} 
	Let $P_{h-1} = \langle\phi_{h-1},\mu_{h-1}\rangle$ be a generic MDP model, and $\Pi$ be an arbitrary and possibly mixture policy. Define an expected Gram matrix as follows
	
	\[M_{h-1,\phi} = 
	\lambda_n I + n\mathop{\Eb}_{s_{h-1}\sim (P^\star,\Pi) \atop a_{h-1}\sim \Pi}\left[\phi_{h-1}(s_{h-1},a_{h-1})\left(\phi_{h-1}(s_{h-1},a_{h-1})\right)^\top\right].
	\] 
	
	Further, let $f_{h-1}(s_{h-1},a_{h-1})$ be the total variation between $P_{h-1}^\star$ and $P_{h-1}$ at time step $h-1$. Suppose $g \in \mathcal{S} \times \mathcal{A} \rightarrow \mathbb{R}$ is bounded by $B\in(0,\infty)$, i.e., $\|g\|_\infty \leq B$. Then, $\forall h \geq 2, \forall\, {\rm policy }\,\pi_h$,
	\begin{align*}
		\mathop{\Eb}_{s_h \sim P_{h-1} \atop a_h \sim \pi_h}&[g(s_h,a_h)|s_{h-1},a_{h-1}]  \\
		&\leq \left\|\phi_{h-1}(s_{h-1},a_{h-1})\right\|_{(M_{h-1,\phi})^{-1}} \times\nonumber\\
		&\quad
		\sqrt{n{K}\mathop{\Eb}_{s_{h}\sim(P^\star, \Pi)\atop a_h \sim \Uc }[g^2(s_h,a_h)]+\lambda_n dB^2 + nB^2\mathop{\Eb}_{s_{h-1}\sim(P^\star,\Pi)\atop a_{h-1}\sim\Pi }\left[f_{h-1}(s_{h-1},a_{h-1})^2\right]}.
	\end{align*}
\end{lemma}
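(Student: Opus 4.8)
\textbf{Proof proposal for \Cref{lemma:Step_Back}.}

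The plan is to bound the one-step expectation $\Eb_{s_h\sim P_{h-1},a_h\sim\pi_h}[g(s_h,a_h)\mid s_{h-1},a_{h-1}]$ by first writing it as an inner product involving $\phi_{h-1}(s_{h-1},a_{h-1})$, then applying Cauchy--Schwarz with respect to the Gram matrix $M_{h-1,\phi}$. Concretely, since $P_{h-1}(s_h\mid s_{h-1},a_{h-1}) = \langle\phi_{h-1}(s_{h-1},a_{h-1}),\mu_{h-1}(s_h)\rangle$, I would write
\[
\Eb_{s_h\sim P_{h-1}\atop a_h\sim\pi_h}[g(s_h,a_h)\mid s_{h-1},a_{h-1}] = \Big\langle \phi_{h-1}(s_{h-1},a_{h-1}),\ \int \mu_{h-1}(s_h)\,\Eb_{a_h\sim\pi_h}[g(s_h,a_h)]\,ds_h\Big\rangle =: \langle \phi_{h-1}(s_{h-1},a_{h-1}), v\rangle,
\]
where $v$ is the vector obtained by integrating $\mu_{h-1}$ against the $\pi_h$-averaged $g$. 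Then by the generalized Cauchy--Schwarz inequality, $\langle \phi_{h-1}, v\rangle \leq \|\phi_{h-1}\|_{(M_{h-1,\phi})^{-1}}\,\|v\|_{M_{h-1,\phi}}$, so the whole task reduces to bounding $\|v\|_{M_{h-1,\phi}}^2 = v^\top M_{h-1,\phi}\, v$.

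Next I would expand $v^\top M_{h-1,\phi} v$ using the definition $M_{h-1,\phi} = \lambda_n I + n\,\Eb_{s_{h-1}\sim(P^\star,\Pi),a_{h-1}\sim\Pi}[\phi_{h-1}\phi_{h-1}^\top]$. The $\lambda_n I$ term contributes $\lambda_n\|v\|_2^2$; since $\|g\|_\infty\leq B$ and $\mu_{h-1}$ satisfies the normalization $\|\int\mu_{h-1}(s)h(s)ds\|_2\le\sqrt{d}$ for $h$ bounded in $[0,1]$ (applied to $g/B$ shifted appropriately, or directly via the low-rank normalization on the model $P_{h-1}$), we get $\|v\|_2^2 \leq dB^2$, giving the $\lambda_n d B^2$ term. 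For the second term, $n\,\Eb_{s_{h-1},a_{h-1}}[(\phi_{h-1}^\top v)^2] = n\,\Eb_{s_{h-1},a_{h-1}}\big[(\Eb_{s_h\sim P_{h-1},a_h\sim\pi_h}[g\mid s_{h-1},a_{h-1}])^2\big]$. Here I would use the key trick: the inner expectation is under $P_{h-1}$, but I want it under $P^\star$. Write $\Eb_{s_h\sim P_{h-1}}[\cdot\mid s_{h-1},a_{h-1}] = \Eb_{s_h\sim P^\star}[\cdot\mid s_{h-1},a_{h-1}] + (\Eb_{P_{h-1}} - \Eb_{P^\star})[\cdot]$, and bound the difference term by $B\cdot f_{h-1}(s_{h-1},a_{h-1})$ using the total variation bound (since $|g|\le B$). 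Then $(a+b)^2 \leq 2a^2+2b^2$... actually, to match the clean bound in the statement without a factor of 2, I suspect the intended route is slightly different: bound $(\Eb_{s_h\sim P_{h-1}}[g])^2$ directly, or absorb constants. I would handle the cross term carefully, perhaps using $\Eb_{s_h\sim P_{h-1}}[g^2\mid s_{h-1},a_{h-1}] \le \Eb_{s_h\sim P^\star}[g^2\mid s_{h-1},a_{h-1}] + B^2 f_{h-1}(s_{h-1},a_{h-1})$ combined with Jensen, so that $(\Eb_{P_{h-1}}[g])^2 \le \Eb_{P_{h-1}}[g^2] \le \Eb_{P^\star}[g^2] + B^2 f_{h-1}$. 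Taking the outer expectation over $(s_{h-1},a_{h-1})\sim(P^\star,\Pi)$ and using a change of measure from $(P^\star,\Pi,\Pi)$-sampling at step $h$ to $(P^\star,\Pi,\Uc)$-sampling (which costs a factor $K$ since $\pi_h$ puts mass at most $1$ on any action while $\Uc$ puts mass $1/K$) converts $\Eb_{a_h\sim\pi_h}[g^2]$ into $K\,\Eb_{a_h\sim\Uc}[g^2]$. This yields $n K\,\Eb_{s_h\sim(P^\star,\Pi),a_h\sim\Uc}[g^2] + nB^2\,\Eb_{s_{h-1}\sim(P^\star,\Pi),a_{h-1}\sim\Pi}[f_{h-1}^2]$, and adding the $\lambda_n dB^2$ piece gives exactly the quantity under the square root.

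The main obstacle I anticipate is getting the change-of-measure step and the $P_{h-1}\to P^\star$ replacement to produce precisely the stated constants (the bare $K$ with no extra factor, and no factor of $2$ in front of the $f_{h-1}^2$ term). The Jensen-then-TV argument $\Eb_{P_{h-1}}[g^2\mid\cdot] \le \Eb_{P^\star}[g^2\mid\cdot] + B^2 f_{h-1}$ is the cleanest way to avoid the factor of $2$, and the $K$ factor comes from $\pi_h(a_h\mid s_h)\le 1 = K\cdot\Uc(a_h)$ pointwise; I would double-check that $g\ge 0$ is not needed (it is not, since we only use $g^2\ge0$ and $|g|\le B$ throughout). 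A secondary point is justifying $\|v\|_2\le B\sqrt d$: this follows from applying the low-rank normalization assumption in \Cref{definition: Low_rank} to the function $s_h\mapsto \frac1B\Eb_{a_h\sim\pi_h}[g(s_h,a_h)]$, which maps into $[0,1]$ after a sign adjustment (or one works with $g$ bounded in $[-B,B]$ and splits into positive and negative parts, each contributing $\le B\sqrt d$, which would give $2B\sqrt d$ — so I would instead assume WLOG $g\in[0,B]$ or note that only $\|v\|_2^2\le dB^2$ up to the model's normalization is needed and that the statement's $g$ is effectively the nonnegative estimation-error-type quantity in all applications).
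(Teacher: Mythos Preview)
Your approach is essentially identical to the paper's: write the conditional expectation as $\langle\phi_{h-1},v\rangle$, apply Cauchy--Schwarz with the Gram matrix, bound $\lambda_n\|v\|_2^2\le\lambda_n dB^2$ via the normalization in \Cref{definition: Low_rank}, and handle the data-dependent part by splitting $\Eb_{P_{h-1}}[g\mid\cdot]$ into $\Eb_{P^\star}[g\mid\cdot]$ plus a total-variation correction, then apply Jensen and the importance-sampling factor $K$.

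Regarding your two worries: the factor of $2$ is real --- the paper's own proof uses exactly the $(a+b)^2\le 2a^2+2b^2$ step you wrote down first and ends with $2nK\Eb[g^2]+\lambda_n dB^2+2nB^2\Eb[f_{h-1}^2]$, so the discrepancy with the lemma statement is a minor constant omission in the paper that gets absorbed downstream; there is no cleverer route avoiding it. Your alternative ``Jensen-then-TV'' idea would yield $nB^2\Eb[f_{h-1}]$ rather than $nB^2\Eb[f_{h-1}^2]$, so it does not reproduce the stated form (though it is still a valid bound). As for the sign of $g$, the paper does not address this either; in every invocation of the lemma (\Cref{lemma:Bound_TV}) $g$ is a total-variation distance or a bonus term and hence nonnegative, so $\|v\|_2\le B\sqrt d$ follows directly from the normalization assumption.
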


\begin{proof}
	We first derive the following bound:
	\begin{align*}
		& \mathop{\Eb}_{s_h \sim P_{h-1} \atop a_h \sim \pi_h}\left[ 
		g(s_h,a_h)|s_{h-1},a_{h-1}\right]\nonumber\\
		&=\int_{s_h}\sum_{a_h}g(s_h,a_h)\pi(a_h|s_h)\langle\phi_{h-1}(s_{h-1},a_{h-1}),\mu_{h-1}(s_h)\rangle d{s_h}\\
		& \leq \left\|\phi_{h-1}(s_{h-1},a_{h-1})\right\|_{(M_{h-1,\phi})^{-1}}\left\|\int\sum_{a_h}g(s_h,a_h)\pi(a_h|s_h)\mu_{h-1}(s_h)d{s_h}\right\|_{M_{h-1,\phi}},
	\end{align*}
	where the inequality follows from Cauchy-Schwarz inequality. We further expand the second term in the RHS of the above inequality as follows.
	\begin{align*}
		&\hspace{-5mm} \left\|\int\sum_{a_h}g(s_h,a_h)\pi(a_h|s_h)\mu_{h-1}(s_h)d{s_h}\right\|_{M_{h-1,\phi}}^2\\
		& \overset{(\romannumeral1)}{\leq} n \mathop{\Eb}_{s_{h-1}\sim (P^\star,\Pi) \atop a_{h-1}\sim \Pi}
		\left[\left(\int_{s_h}\sum_{a_h}g(s_h,a_h)\pi_h(a_h|s_h)\mu(s_h)^\top\phi(s_{h-1},a_{h-1})d{s_h}\right)^2\right] + \lambda_n d B^2\\
		&= n \mathop{\Eb}_{s_{h-1}\sim(P^\star,\Pi) \atop a_{h-1} \sim \Pi } \left[\left(\mathop{\Eb}_{s_h \sim P_{h-1} \atop a_h \sim \pi_h} \left[g(s_h,a_h)\bigg|s_{h-1},a_{h-1}\right]\right)^2\right] + \lambda_n d B^2\\
		&\overset{(\romannumeral2)}{\leq} 2n \mathop{\Eb}_{s_{h-1}\sim(P^\star,\Pi) \atop a_{h-1}\sim\Pi }\left[\mathop{\Eb}_{s_h\sim P_{h-1}^\star \atop a_h\sim \pi_h}\left[g(s_h,a_h)\bigg|s_{h-1},a_{h-1}\right]^2\right]+ \lambda_n d B^2 \\
            &\quad + 2nB^2\mathop{\Eb}_{s_{h-1}\sim(P^\star,\Pi) \atop a_{h-1}\sim\Pi }\left[f_{h-1}(s_{h-1},a_{h-1})\right]^2\\
  		& \overset{(\romannumeral3)}\leq 2n \mathop{\Eb}_{s_{h-1}\sim(P^\star,\Pi) \atop a_{h-1} \sim \Pi } \left[\mathop{\Eb}_{s_h \sim P_{h-1}^\star \atop a_h \sim \pi_h} \left[g(s_h,a_h)^2\bigg|s_{h-1},a_{h-1}\right]\right] + \lambda_n d B^2\\
		&\quad + 2nB^2\mathop{\Eb}_{s_{h-1}\sim(P^\star,\Pi) \atop a_{h-1}\sim\Pi }\left[f_{h-1}(s_{h-1},a_{h-1})^2\right]\\
		&\overset{(\romannumeral4)}{\leq} 2n{K} \mathop{\Eb}_{s_{h}\sim(P^\star,\Pi)\atop a_h\sim \Uc }\left[g(s_h,a_h)^2\right] + \lambda_n d B^2 + 2nB^2\mathop{\Eb}_{s_{h-1}\sim(P^\star,\Pi) \atop a_{h-1}\sim \Pi }\left[f
		_{h-1}(s_{h-1},a_{h-1})^2\right],
	\end{align*}
	where $(\romannumeral1)$ follows from the assumption that $\|g\|_{\infty}\leq B$, $(\romannumeral2)$ is due to that $f_{h-1}(s_{h-1},a_{h-1})$ is the total variation between $P_{h-1}^\star$ and $P_{h-1}$ at time step $h-1$ and the fact that $(a+b)^2\leq 2a^2+2b^2$, $(\romannumeral3)$ follows from Jensen's inequality, and $(\romannumeral4)$ is due to importance sampling.
	This finishes the proof.
\end{proof}

Based on \Cref{lemma:Step_Back}, we summarize three useful inequalities which bridges the total variation $f_h^{(n)}$ and the exploration-driven reward $\hb_h^{(n)}$. 
\begin{lemma}\label{lemma:Bound_TV}
	Define 
	\begin{align}
		W_{h,\phi}^{(n)} = n\mathop{\Eb}_{s_h\sim(P^\star,\Pi_n) \atop a_h\sim \Pi_n}\left[\phi(s_h,a_h)(\phi(s_h,a_h))^\top\right] + \lambda_n I,
	\end{align}
	where $\lambda_n = \beta_3 d\log(2nH|\Phi|/\delta)$.
	Given that the event $\mathcal{E}$ occurs, the following inequalities hold. For any $n$, when $h \geq 2$, 
	\begin{align} 
		& \mathop{\Eb}_{s_{h}\sim\hP_{h-1}^{(n)}\atop a_{h}\sim \pi }\left[f_{h}^{(n)}(s_{h},a_{h})\bigg|s_{h-1},a_{h-1}\right]\leq \alpha_n \left\|\hphi_{h-1}^{(n)}(s_{h-1},a_{h-1})\right\|_{(U_{h-1,\hphi}^{(n)})^{-1}},\label{ineq:hP_Bound_TV}\\
		& \mathop{\Eb}_{s_{h}\sim P_{h-1}^{*}\atop a_{h\sim \pi}} \left[f_{h}^{(n)}(s_{h},a_{h})\bigg|s_{h-1},a_{h-1}\right]\leq \alpha_n\left\|\phi_{h-1}^{*}(s_{h-1},a_{h-1})\right\|_{(U_{h-1,\phi^\star}^{(n)})^{-1}},\label{ineq:P*_Bound_TV}\\
		&\mathop{\Eb}_{s_{h}\sim P_{h-1}^{*} \atop a_{h}\sim \pi}\left[\hb_{h}^{(n)}(s_{h},a_{h})\bigg|s_{h-1},a_{h-1}\right]\leq \gamma_n \left\|\phi_{h-1}^{*}(s_{h-1},a_{h-1})\right\|_{(W_{h-1,\phi^\star}^{(n)})^{-1}},\label{ineq:P*_Bound_bn}
	\end{align}
	where \begin{align*}
		\alpha_n  = \sqrt{2\beta_3n\zeta_n({K} +  d^2)}, \quad \gamma_n = \sqrt{45\beta_3n\zeta_n{K} d({K}+d^2)}.
	\end{align*}
	Specially, when $h = 1$, 
	\begin{align}
		&\mathop{\Eb}_{a_1\sim \pi}\left[f_1^{(n)}(s_1,a_1)\right]
		\leq \sqrt{{K}\zeta_n}, 
		&\mathop{\Eb}_{a_1\sim \pi}\left[\hb(s_1,a_1)\right]
		\leq 15\alpha_n\sqrt{\frac{d{K}}{n}}.\label{ineq:Step_1_Bound}
	\end{align}
\end{lemma}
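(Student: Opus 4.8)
The plan is to obtain all four displays as instantiations of the ``step-back'' inequality \Cref{lemma:Step_Back}, choosing in each case the model $P_{h-1}$, the test function $g$ (with sup-bound $B$), and the roll-in/last-step policy $\Pi$ appropriately, and then controlling the quantities under the square root using the high-probability event $\Ec=\Ec_0\cap\Ec_1$ of \Cref{lemma:high_prob_event} and the reward concentration of \Cref{coro:concentration on b}. In every case $B=1$ is admissible, since $f_h^{(n)}$ is a total-variation distance and $\hb_h^{(n)}$ is a $\min\{\cdot,1\}$; and I will repeatedly use $n\zeta_n=\log(2|\Phi||\Psi|nH/\delta)\ge\log(2nH|\Phi|/\delta)$, hence $\lambda_n d\le \beta_3 d^2 n\zeta_n$.

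\textbf{The two total-variation bounds \Cref{ineq:hP_Bound_TV}, \Cref{ineq:P*_Bound_TV}.} Take $\Pi$ to be the mixture policy that rolls into step $h-1$ via $\Pi_n$ and then plays $a_{h-1}\sim\Uc(\Ac)$; with this choice the Gram matrix $M_{h-1,\phi}$ of \Cref{lemma:Step_Back} is exactly $U_{h-1,\phi}^{(n)}$ from \Cref{eqn:expected_U_matrix}. For \Cref{ineq:P*_Bound_TV} set $P_{h-1}=P^\star_{h-1}$ and $g=f_h^{(n)}$: the $f_{h-1}$-term of \Cref{lemma:Step_Back} vanishes (it is the TV distance between $P^\star_{h-1}$ and itself), and on $\Ec_0$ the remaining terms are at most $nK\zeta_n+\lambda_n d\le 2\beta_3 n\zeta_n(K+d^2)$, whose square root is $\alpha_n$. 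For \Cref{ineq:hP_Bound_TV} set instead $P_{h-1}=\hP^{(n)}_{h-1}=\langle\hphi^{(n)}_{h-1},\hmu^{(n)}_{h-1}\rangle$ and the same $g$: now the $f_{h-1}$-term equals $n\,\Eb_{s_{h-1}\sim(P^\star,\Pi_n),a_{h-1}\sim\Uc}[f_{h-1}^{(n)}(s_{h-1},a_{h-1})^2]\le n\zeta_n$ on $\Ec_0$, which is again absorbed into $2\beta_3 n\zeta_n(K+d^2)$ using $\beta_3\ge1$ and $K\ge1$, producing the same $\alpha_n$.

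\textbf{The reward bound \Cref{ineq:P*_Bound_bn}.} Here take $\Pi=\Pi_n$ (rolling in with $\Pi_n$ \emph{and} playing $a_{h-1}\sim\Pi_n$), so that $M_{h-1,\phi}$ becomes $W_{h-1,\phi}^{(n)}$, and set $P_{h-1}=P^\star_{h-1}$, $g=\hb_h^{(n)}$, $B=1$; the $f_{h-1}$-term vanishes again. The crux is bounding $\Eb_{s_h\sim(P^\star,\Pi_n),a_h\sim\Uc}[(\hb_h^{(n)}(s_h,a_h))^2]$ (the uniform-action expectation mandated by \Cref{lemma:Step_Back}). On $\Ec_1$, \Cref{coro:concentration on b} gives $\hb_h^{(n)}(s,a)\le 3\hat{\alpha}_n\|\hphi_h^{(n)}(s,a)\|_{(U_{h,\hphi}^{(n)})^{-1}}$, so that expectation is at most $9\hat{\alpha}_n^2\cdot\Eb_{s_h\sim(P^\star,\Pi_n),a_h\sim\Uc}[\|\hphi_h^{(n)}(s_h,a_h)\|_{(U_{h,\hphi}^{(n)})^{-1}}^2]$; and since $U_{h,\hphi}^{(n)}-\lambda_n I=n\,\Eb_{s_h\sim(P^\star,\Pi_n),a_h\sim\Uc}[\hphi_h^{(n)}(\hphi_h^{(n)})^\top]$, this last expectation equals $\tfrac1n\Tr\big((U_{h,\hphi}^{(n)})^{-1}(U_{h,\hphi}^{(n)}-\lambda_n I)\big)\le d/n$. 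Substituting $\hat{\alpha}_n^2=50\beta_3 n\zeta_n(K+d^2)$, the term under the square root in \Cref{lemma:Step_Back} is $nK\cdot 9\hat{\alpha}_n^2 d/n+\lambda_n d=O\big(\beta_3 n\zeta_n Kd(K+d^2)\big)$, whose square root gives $\gamma_n$ (the absolute constant being fixed by the elliptical-potential constants of \Cref{lemma:high_prob_event}). For the $h=1$ displays in \Cref{ineq:Step_1_Bound}, $s_1$ is deterministic, so $(P^\star,\Pi_n)$ at step $1$ is a point mass at $s_1$: by Jensen, importance sampling ($\pi(a|s_1)\le 1=K\Uc(a)$), and $\Ec_0$, $\Eb_{a_1\sim\pi}[f_1^{(n)}]\le(\Eb_{a_1\sim\pi}[(f_1^{(n)})^2])^{1/2}\le(K\Eb_{a_1\sim\Uc}[(f_1^{(n)})^2])^{1/2}\le\sqrt{K\zeta_n}$; and likewise $\Eb_{a_1\sim\pi}[\hb_1^{(n)}]\le 3\hat{\alpha}_n(K\Eb_{a_1\sim\Uc}[\|\hphi_1^{(n)}\|_{(U_{1,\hphi}^{(n)})^{-1}}^2])^{1/2}\le 3\hat{\alpha}_n\sqrt{Kd/n}=15\alpha_n\sqrt{Kd/n}$, using $\hat{\alpha}_n=5\alpha_n$.

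\textbf{Main obstacle.} The delicate part is the bookkeeping: pairing each Mahalanobis norm with the \emph{correct} population covariance — the uniform-action matrix $U^{(n)}$ in the two total-variation bounds, but the $\Pi_n$-action matrix $W^{(n)}$ in the $\hb$ bound — and, for \Cref{ineq:P*_Bound_bn}, transferring the empirical covariance $\hat U_h^{(n)}$ to its population analogue via $\Ec_1$ before the trace bound $\Tr((U_{h,\hphi}^{(n)})^{-1}(U_{h,\hphi}^{(n)}-\lambda_n I))\le d$ can be used; everything else is Cauchy--Schwarz, Jensen, importance sampling, and arithmetic to match the stated constants $\alpha_n,\gamma_n$.
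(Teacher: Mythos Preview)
Your proposal is correct and follows essentially the same route as the paper: each display is obtained by instantiating \Cref{lemma:Step_Back} with the appropriate model/policy/test function, then controlling the square-root term via $\Ec_0$ (for the $f^{(n)}$ terms) and via \Cref{coro:concentration on b} plus the trace identity $\Tr\big((U_{h,\hphi}^{(n)})^{-1}(U_{h,\hphi}^{(n)}-\lambda_n I)\big)\le d$ (for the $\hb^{(n)}$ term). The only cosmetic difference is that, for the $f_{h-1}^{(n)}$ term in \Cref{ineq:hP_Bound_TV}, the paper inserts an extra importance-sampling step at $h-2$ (picking up a factor $K$) to match the exact distribution in \Cref{coro:MLE}, whereas you invoke $\Ec_0$ directly; either way the term is absorbed into $\alpha_n^2$.
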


\begin{proof}
	We start by developing \Cref{ineq:hP_Bound_TV} as follows. Given that the event $\mathcal{E}$ occurs, for $h \geq 2$ we have
	\begin{align*}
		\mathop{\Eb}_{s_{h}\sim\hP_{h-1}^{(n)}\atop a_{h}\sim \pi }
		&\left[f_{h}^{(n)}(s_{h},a_{h})\bigg|s_{h-1},a_{h-1}\right] \\
		&\overset{(\romannumeral1)}{\leq} \left\|\hphi_{h-1}^{(n)}(s_{h-1},a_{h-1})\right\|_{(U_{h-1,\hphi}^{(n)})^{-1}} \times \\
		&\sqrt{n{K}\mathop{\Eb}_{s_{h-1}\sim(P^\star, \Pi_n)\atop {a_{h-1}, a_h \sim \mathcal{U}\atop s_h \sim \Ps_h(\cdot|s_{h-1},a_{h-1})}}[f^{(n)}_h(s_h,a_h)^2]+\lambda_n d + n\mathop{\Eb}_{s_{h-1}\sim(P^\star,\Pi_n)\atop a_{h-1}\sim \Uc }\left[f^{(n)}_{h-1}(s_{h-1},a_{h-1})^2\right]}\\
		&\overset{(\romannumeral2)}{\leq} \left\|\hphi_{h-1}^{(n)}(s_{h-1},a_{h-1})\right\|_{(U_{h-1,\hphi}^{(n)})^{-1}} \times \\
		&\sqrt{n{K}\mathop{\Eb}_{s_{h-1}\sim(P^\star, \Pi_n)\atop {a_{h-1}, a_h \sim \mathcal{U}\atop s_h \sim \Ps_h(\cdot|s_{h-1},a_{h-1})}}[f^{(n)}_h(s_h,a_h)^2]+\lambda_n d + n{K}\mathop{\Eb}_{{s_{h-2}\sim(P^\star, \Pi_n)\atop {a_{h-2}, a_{h-1} \sim \mathcal{U}\atop s_{h -1}\sim \Ps_{h-1}(\cdot|s_{h-2},a_{h-2})}} }\left[f^{(n)}_{h-1}(s_{h-1},a_{h-1})^2\right]}\\
		& \overset{(\romannumeral3)}{\leq}\left\|\hphi_{h-1}^{(n)}(s_{h-1},a_{h-1})\right\|_{(U_{h-1,\hphi}^{(n)})^{-1}}\sqrt{2n\zeta_n{K} + \beta_3n\zeta_n d^2 }\\
		&\leq \alpha_n\left\|\hphi_{h-1}^{(n)}(s_{h-1},a_{h-1})\right\|_{(U_{h-1,\hphi}^{(n)})^{-1}},
	\end{align*}
	where $(\romannumeral1)$ follows from \Cref{lemma:Step_Back} and the fact that $f_{h}^{(n)}(s_{h},a_{h}) \leq 1$, $(\romannumeral2)$ follows from importance sampling at time step $h-2$, and $(\romannumeral3)$ follows from  \Cref{lemma:high_prob_event}.
	
	\Cref{ineq:P*_Bound_TV} follows from the arguments similar to the above. 
	
	To obtain \Cref{ineq:P*_Bound_bn}, we first apply \Cref{lemma:Step_Back} and obtain
	\begin{align*}  
		& \mathop{\Eb}_{s_{h}\sim P^\star_{h-1} \atop a_{h} \sim {\pi_n} }\left[\hat{b}^{(n)}_{h}(s_{h},a_{h})\bigg|s_{h-1},a_{h-1}\right]\\
		& \leq \left\|\phi_{h-1}^\star(s_{h-1},a_{h-1})\right\|_{(W_{h-1,\Pshi}^{(n)})^{-1}}
		\sqrt{n{K}\mathop{\Eb}_{s_{h}\sim (P^\star, \Pi_n)\atop a_h\sim \Uc}[\{\hat{b}_h^{(n)}(s_h,a_h)\}^2]+\lambda_n d },
	\end{align*}
	where we use the fact that $\hat{b}_h^{(n)}(s_h,a_h)\leq 1$. We further bound the term $n\mathop{\Eb}_{s_{h}\sim (P^\star, {\Pi}_n)\atop a_h\sim \mathcal{U}}[(\hat{b}_h^{(n)}(s_h,a_h))^2]$ as follows:
	\begin{align*}
		& \quad n\mathop{\Eb}_{s_{h}\sim (P^\star, \Pi_n)\atop a_h\sim \mathcal{U}}\left[\left(\hat{b}_h^{(n)}(s_h,a_h)\right)^2\right]\\
		& \leq n\mathop{\Eb}_{s_{h}\sim (P^\star, \Pi_n)\atop a_h\sim \mathcal{U}}\left[\hat{\alpha}_n^2 \left\|\hphi_h^{(n)}(s_h,a_h)\right\|^2_{(\hat{U}_{h,\hphi}^{(n)})^{-1}}\right]\\
		& \overset{(\romannumeral1)}\leq n\mathop{\Eb}_{s_{h}\sim (P^\star, \Pi_n)\atop a_h\sim \mathcal{U}}\left[9 \hat{\alpha}_n^2 \left\|\hphi_h^{(n)}(s_h,a_h)\right\|^2_{({U}_{h,\hphi}^{(n)})^{-1}}\right]\\
		& = 9\hat{\alpha}_n^2 {\rm tr}\left\{n\mathop{\Eb}_{s_{h}\sim (P^\star,\Pi_n) \atop a_h\sim \mathcal{U}}\left[\hphi_h^{(n)}(s_h,a_h)\hphi_h^{(n)}(s_h,a_h)^\top\left(n\mathop{\Eb}_{s_h\sim(P^\star,\Pi_n) \atop a_h\sim \mathcal{U}}\left[\hphi_h(s_h,a_h)\hphi_h^{(n)}(s_h,a_h)^\top\right] + \lambda_n I\right)^{-1}\right]\right\}\\
		& \leq 9\hat{\alpha}_n^2 {\rm tr}(I) = 9 \hat{\alpha}_n^2  d,
	\end{align*}
	where $(\romannumeral1)$ follows from \Cref{lemma:high_prob_event}, and we use ${\rm tr}(A)$ to denote the trace of any matrix $A$.
	
	Hence,
	\begin{align*}
		\mathop{\Eb}_{s_{h}\sim P_{h-1}^{*} \atop a_{h}\sim \pi}\left[\hb_{h}^{(n)}(s_{h},a_{h})\bigg|s_{h-1},a_{h-1}\right]&\leq  \left\|\phi_{h-1}^{*}(s_{h-1},a_{h-1})\right\|_{(W_{h-1,\phi^\star}^{(n)})^{-1}}\sqrt{ 9 {K}\hat{\alpha}_n^2 d+ \lambda_n d}\\
		&\leq  \gamma_n \left\|\phi_{h-1}^{*}(s_{h-1},a_{h-1})\right\|_{W_{h-1,\phi^\star}^{(n)})^{-1}},
	\end{align*}
	where the last inequality follows from that $\hat{\alpha}_n=5\alpha_n$ and the definition of $\gamma_n$ 
	In addition, for $h=1$, we have 
	
	\begin{equation*}
		\mathop{\Eb}_{a_1\sim {\pi_n}}\left[f_1^{(n)}(s_1,a_1)\right]
		\overset{(\romannumeral1)}{\leq} \sqrt{{K} \mathop{\Eb}_{a_1\sim \Uc}\left[f_1^{(n)}(s_1,a_1)^2\right]} \leq \sqrt{{K} \zeta_n},
	\end{equation*}
	and
	\begin{align*}
		\mathop{\Eb}_{a_1\sim {\pi_n}}\left[\hb(s_1,a_1)\right]
		&\overset{(\romannumeral2)}{\leq}
		\hat{\alpha}_n\sqrt{{K} \mathop{\Eb}_{a_1\sim \Uc}\left[\|\hphi_1(s_1,a_1)\|^2_{(\hat{U}_{1,\hphi}^{(n)})^{-1}}\right]}\\
		&\leq 3\hat{\alpha}_n\sqrt{{K} \mathop{\Eb}_{a_1\sim \Uc}\left[\|\hphi_1(s_1,a_1)\|^2_{(U_{1,\hphi}^{(n)})^{-1}}\right]}\\
		&\leq 3\sqrt{\frac{25{K} \alpha_n^2  d}{n}} = 15\alpha_n\sqrt{\frac{dK}{n}},
	\end{align*}

\end{proof}

\subsection{Proof of \Cref{prop:step1}}\label{subsec:reward_free_proof}

Equipped with \Cref{lemma:Bound_TV}, the following proposition provides an upper bound on the difference of value functions under the estimated model $\hP^{(n)}$ and the true model $P^\star$ for any given policy $\pi$ and reward $r$.
\begin{proposition}[Restatement of \Cref{prop:step1}]
	\label{prop:step1_appendix}
	For all $n\in[N]$, policy $\pi$ and reward $r$, given that the event $\mathcal{E}$ occurs, we have
	\begin{equation*}
		\left| V_{P^\star,r}^{\pi} - V_{\hP^{(n)},r}^{\pi}\right| \leq   \hat{V}_{\hP^{(n)},\hb^{(n)}}^{\pi}+\sqrt{K \zeta_n}.
	\end{equation*}
\end{proposition}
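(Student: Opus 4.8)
The plan is to establish the "simulation-lemma" inequality $\left|V_{P^\star,r}^{\pi} - V_{\hP^{(n)},r}^{\pi}\right| \leq \hat{V}_{\hP^{(n)},f^{(n)}}^{\pi}$ first, and then to dominate $\hat{V}_{\hP^{(n)},f^{(n)}}^{\pi}$ by $\hat{V}_{\hP^{(n)},\hb^{(n)}}^{\pi} + \sqrt{K\zeta_n}$ using the bounds of \Cref{lemma:Bound_TV}. For the first step, I would run the standard telescoping argument over $h = 1,\dots,H$: writing $V_{P^\star,r}^{\pi} - V_{\hP^{(n)},r}^{\pi}$ as a sum over steps $h$ of terms of the form $\Eb^\star_\pi\big[(P_h^\star - \hP_h^{(n)})\,V_{h+1,\hP^{(n)},r}^{\pi}(s_h,a_h)\big]$, and bounding each such term by $\Eb^\star_\pi[f_h^{(n)}(s_h,a_h)]$ since $V_{h+1,\hP^{(n)},r}^{\pi} \in [0,1]$ (because $\sum_h r_h \le 1$). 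Care is needed because the truncation $\min\{1,\cdot\}$ in the definition \eqref{ineq: def of hV} of $\hat V$ is not present in the untruncated value $V_{\hP^{(n)},r}^\pi$; I would need to observe that both $V_{P^\star,r}^\pi$ and $V_{\hP^{(n)},r}^\pi$ lie in $[0,1]$ so the truncation is harmless, and in fact that $\hat V_{\hP^{(n)},f^{(n)}}^\pi$ with the truncated recursion still upper-bounds the telescoped error (each per-step contribution is already $\le 1$ and the accumulated quantity stays $\le 1$, so the $\min$ does not lose anything). This gives $\left|V_{P^\star,r}^{\pi} - V_{\hP^{(n)},r}^{\pi}\right| \leq \hat{V}_{h=1,\hP^{(n)},f^{(n)}}^{\pi}(s_1)$.

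The second step is to compare $\hat{V}_{\hP^{(n)},f^{(n)}}^{\pi}$ with $\hat{V}_{\hP^{(n)},\hb^{(n)}}^{\pi}$. Here I would proceed by backward induction on $h$ to show $\hat V_{h,\hP^{(n)},f^{(n)}}^\pi(s_h) \le \hat V_{h,\hP^{(n)},\hb^{(n)}}^\pi(s_h) + \sqrt{K\zeta_n}$ (or an appropriately step-dependent additive slack). The key point is that the single-step reward $f_h^{(n)}$ can be controlled in terms of $\hb^{(n)}$: by \Cref{ineq:hP_Bound_TV}, conditioned on $(s_{h-1},a_{h-1})$, $\Eb_{s_h \sim \hP_{h-1}^{(n)}, a_h \sim \pi}[f_h^{(n)}(s_h,a_h)] \le \alpha_n \|\hphi_{h-1}^{(n)}(s_{h-1},a_{h-1})\|_{(U_{h-1,\hphi}^{(n)})^{-1}}$, whereas by \Cref{coro:concentration on b} the reward $\hb_{h-1}^{(n)}(s_{h-1},a_{h-1})$ is, up to constants and the truncation, $\hat\alpha_n = 5\alpha_n$ times the same norm. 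So each appearance of an expected $f_h^{(n)}$ inside the value recursion is absorbed into a (constant multiple of a) corresponding $\hb_{h-1}^{(n)}$ term that is itself part of $\hat V_{\hP^{(n)},\hb^{(n)}}^\pi$. The base case $h=1$ contributes the $\sqrt{K\zeta_n}$ term via \Cref{ineq:Step_1_Bound}. Summing/unrolling over the $H$ steps and using that all quantities are truncated at $1$ turns this into the stated bound.

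The main obstacle I anticipate is the bookkeeping around the shift in step index and the truncation: $f_h^{(n)}$ evaluated at step $h$ is bounded by a quantity that lives at step $h-1$ (the $\|\hphi_{h-1}^{(n)}\|_{(U_{h-1})^{-1}}$ norm), so the "$f$-value function" and the "$\hb$-value function" do not match term by term but only after reindexing, and one has to check the alignment is exact (especially at the boundary $h=1$ and $h=H$) and that the constant $5$ from $\hat\alpha_n = 5\alpha_n$ plus the $\min\{\cdot,1\}$ truncation does not blow up the bound — i.e., that $\hb_{h-1}^{(n)} \ge \frac{1}{5}\min\{\hat\alpha_n \|\hphi_{h-1}^{(n)}\|_{(U_{h-1,\hphi}^{(n)})^{-1}},1\}$ from \Cref{coro:concentration on b} really does dominate the $f$-term after the truncation is applied consistently on both sides. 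A clean way to handle this is to prove the induction hypothesis in the form $\hat V_{h,\hP^{(n)},f^{(n)}}^\pi \le \hat V_{h,\hP^{(n)},\hb^{(n)}}^\pi + \sqrt{K\zeta_n}$ directly (the slack does not need to accumulate across $h$ because every per-step term is already truncated at $1$ and the recursion re-truncates), which keeps the additive error at a single $\sqrt{K\zeta_n}$ as claimed.
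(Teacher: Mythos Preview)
Your high-level plan matches the paper's: first show $|V_{P^\star,r}^\pi - V_{\hP^{(n)},r}^\pi| \le \hat V_{\hP^{(n)},f^{(n)}}^\pi$, then show $\hat V_{\hP^{(n)},f^{(n)}}^\pi \le \hat V_{\hP^{(n)},\hb^{(n)}}^\pi + \sqrt{K\zeta_n}$. The key lemmas you invoke are the right ones. But both inductions need to be set up differently from what you sketch.

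For Step~1, telescoping alone yields only the \emph{untruncated} bound $\sum_h \Eb_{(s_h,a_h)\sim(\hP^{(n)},\pi)}[f_h^{(n)}]$, which can be as large as $H$; since in general $\hat V_{\hP^{(n)},f^{(n)}}^\pi \le \min\{1,\sum_h \Eb[f_h^{(n)}]\}$, the observation that the left-hand side is $\le 1$ does not by itself recover the truncated bound. The paper instead runs a backward induction showing $|Q_{h,\hP^{(n)},r}^\pi - Q_{h,P^\star,r}^\pi| \le \hat Q_{h,\hP^{(n)},f^{(n)}}^\pi$ pointwise: because both $Q$-values lie in $[0,1]$ at every $h$, the $\min\{1,\cdot\}$ can be inserted at \emph{each step} of the recursion, not just once at the end.

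For Step~2, the same-$h$ induction hypothesis you propose at the end, $\hat V_{h,\hP^{(n)},f^{(n)}}^\pi \le \hat V_{h,\hP^{(n)},\hb^{(n)}}^\pi + \sqrt{K\zeta_n}$, does not propagate: pushing it through one step of the recursion would require $f_h^{(n)}(s_h,a_h) \le \hb_h^{(n)}(s_h,a_h)$ pointwise, which is false. The only available domination is the one you correctly describe, namely that $\Eb_{s_h\sim\hP_{h-1}^{(n)},\,a_h\sim\pi}[f_h^{(n)}\mid s_{h-1}]$ is bounded by $\Eb_{a_{h-1}\sim\pi}[\hb_{h-1}^{(n)}(s_{h-1},a_{h-1})]$. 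Accordingly, the paper's induction hypothesis is \emph{index-shifted}: it shows
\[
\Eb_{\hP^{(n)},\pi}\big[\hat V_{h,\hP^{(n)},f^{(n)}}^\pi(s_h)\,\big|\,s_{h-1}\big] \le \hat V_{h-1,\hP^{(n)},\hb^{(n)}}^\pi(s_{h-1})
\]
by backward induction on $h$ (using Jensen for the outer $\min$), and the single $\sqrt{K\zeta_n}$ enters only when unrolling to $h=1$ via the bound $\Eb_{a_1\sim\pi}[f_1^{(n)}(s_1,a_1)] \le \sqrt{K\zeta_n}$ from \Cref{ineq:Step_1_Bound}. With this hypothesis the boundary and alignment issues you flag disappear, and no accumulation of slack occurs.
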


\begin{proof}

	\textbf{Step 1. } We first show that $\left| V_{P^\star,r}^{\pi} - V_{\hP^{(n)},r}^{\pi}\right| \leq   \hat{V}_{\hP^{(n)},f^{(n)}}^{\pi}$.

	Recall the definition of estimated value functions $\hat{V}_{h,\hP^{(n)} , r} (s_h)$ and $\hat{Q}_{h,\hP^{(n)}, r}(s_h,a_h)$:
	\begin{align*}
		&\hat{Q}_{h,\hP^{(n)},r}^{\pi}(s_h,a_h) = \min\left\{1, r_h(s_h,a_h) + \hP_h^{(n)}\hV_{h+1,\hP^{(n)},r}^{\pi}(s_h,a_h)\right\},\\
		&\hat{V}_{h,\hP^{(n)},r}^{\pi}(s_h) =  \mathop{\Eb}_{\pi}\left[\hat{Q}^{\pi}_{h,\hP^{(n)},r}(s_h,a_h)\right].
	\end{align*}
	We develop the proof by induction. For the base case $h=H+1$, we have $\left|V_{H+1,\hP^{(n)},r}^{\pi}(s_{H+1}) - V_{H+1,P^\star,r}^{\pi}(s_{H+1}) \right| = 0  = \hat{V}_{H+1,\hP^{(n)},f^{(n)}}^{\pi}(s_{H+1})$.
	
	Assume that $\left|V_{h+1,\hP^{(n)},r}^{\pi}(s_{h+1}) - V^{\pi}_{h+1,P^\star,r}(s_{h+1}) \right| \leq \hV_{h+1,\hP^{(n)},f^{(n)}}^{\pi}(s_{h+1})$ holds for any $s_{h+1}$.
	
	Then, from Bellman equation, we have,
	\begin{align}
		\bigg|Q^{\pi}_{h,\hP^{(n)},r}&(s_h,a_h) - Q^{\pi}_{h,P^\star,r}(s_h,a_h) \bigg| \nonumber\\
		& = \bigg|\hP_h^{(n)}V^{\pi}_{h,\hP^{(n)},r}(s_h,a_h) - P_h^\star V^{\pi}_{h+1,P^\star,r}(s_h,a_h)\bigg| \nonumber\\
		& = \bigg| \hP_h^{(n)}\left(V_{h+1,\hP^{(n)},r}^{\pi} - V^{\pi}_{h+1,P^\star,r}\right)(s_h,a_h) + \left(\hP_h^{(n)} - P_h^\star\right)V^{\pi}_{h,P^\star,r}(s_h,a_h)\bigg| \nonumber\\
		&\overset{(\romannumeral1)}\leq \min\bigg\{ 1, f_h^{(n)}(s_h,a_h) + \hP_h^{(n)} \bigg|V_{h+1,\hP^{(n)},r}^{\pi} - V^{\pi}_{h+1,P^\star,r}\bigg|(s_h,a_h) \bigg\}\nonumber\\
		&\overset{(\romannumeral2)}\leq \min\bigg\{ 1, f_h^{(n)}(s_h,a_h) + \hP_h^{(n)} \hV_{h+1,\hP^{(n)},f^{(n)}}^{\pi}(s_h,a_h) \bigg\}\nonumber\\
		& = \hat{Q}_{h,\hP^{(n)},f^{(n)}}^{\pi}(s_h,a_h), \label{eqn:lowrank:hatQ-Q<hatQ}
	\end{align}
	where $(\romannumeral1)$ follows from  the (action) value function is at most 1, and $(\romannumeral2)$ follows from the induction hypothesis.
	
	Then, by the definition of $\hV^{\pi}_{h,\hP^{(n)},r}(s_h)$, we have
	\begin{align*}
		\bigg| V_{h, \hP^{(n)},r}^{\pi}&(s_h) - V_{h, P^\star,r}^{\pi}(s_h)\bigg|\\
		& = \bigg| \mathop{\Eb}_{\pi}\left[Q_{h,\hP^{(n)}, r}^{\pi} (s_h,a_h)\right] - \mathop{\Eb}_{\pi}\left[Q_{h,P^\star,r}^{\pi}(s_h,a_h)\right]\bigg| \\
		& \leq   \mathop{\Eb}_{\pi}\left[\bigg|Q_{h,\hP^{(n)}, r}^{\pi} (s_h,a_h) - Q_{h,P^\star,r}^{\pi}(s_h,a_h)\bigg|\right] \\
		&\overset{(\romannumeral1)}\leq  \mathop{\Eb}_{\pi}\left[\hat{Q}_{h,\hP^{(n)}, f^{(n)}}^{\pi} (s_h,a_h) \right] \\
		&= \hV_{h,\hP^{(n)},f^{(n)}}^{\pi}(s_h),
	\end{align*}
	where $(\romannumeral1)$ follows from \Cref{eqn:lowrank:hatQ-Q<hatQ}.
	
	Therefore, by induction, we have 
	\begin{align*}
		\left| V_{P^\star,r}^{\pi}-V_{\hP^{(n)},r}^{\pi}\right|\leq
		\hV_{\hP^{(n)},f^{(n)}}^{\pi}.
	\end{align*}

	\textbf{Step 2.} Then, we show that $\hV_{\hP^{(n)},f^{(n)}}^{\pi}\leq \hV_{\hP^{(n)},\hb^{(n)}}^{\pi}+\sqrt{K \zeta_n}$.
	
	By \Cref{ineq:hP_Bound_TV} and the fact that the total variation distance is upper bounded by 1,  with probability at least $1-\delta/2$, we have 
	\begin{align}
		\mathop{\Eb}_{\hP^{(n)},\pi}\left[f_h^{(n)}(s_h,a_h)\bigg| s_{h-1}\right] \leq \mathop{\Eb}_{a_{h-1} \sim \pi }\left[\min\left(\alpha_n\left\|\hphi_{h-1}^{(n)}(s_{h-1},a_{h-1})\right\|_{(U_{h-1,\hphi}^{(n)})^{-1}}, 1\right) \right], \forall h\geq2.\label{ineq: f_1 bound}
	\end{align}
	Similarly, when $h=1$, 
	\begin{align}
		\mathop{\Eb}_{a_1\sim\pi}\left[f_1^{(n)}(s_1,a_1)\right]\leq \sqrt{K\mathop{\Eb}_{a\sim \Uc}\left[\left(f_1^{(n)}(s_1,a_1)\right)^2\right]}\leq\sqrt{K\zeta_n}.\label{ineq:f1<K_zeta}
	\end{align}
	Based on \Cref{coro:concentration on b}, \Cref{ineq: f_1 bound} and $\alpha_n = 5 \hat{\alpha}_n$, we have
	\begin{align}
		\mathop{\Eb}_{\pi} \left[\hb^{(n)}_h(s_h,a_h)\bigg|s_h\right] \geq \mathop{\Eb}_{\pi}\left[\min\left(\alpha_n\left\|\hphi_{h}^{(n)}(s_h,a_h)\right\|_{(U_{h,\hphi}^{(n)})^{-1}}, 1\right) \right] \geq \mathop{\Eb}_{\hP^{(n)},\pi }\left[f_{h+1}^{(n)}(s_{h+1},a_{h+1})\bigg|s_h\right] .\label{ineq:f<b}
	\end{align}

	For the base case $h= H$, we have 
	
	\begin{align*}
		\mathop{\Eb}_{\hP^{(n)},\pi}\left[\hV_{H,\hP^{(n)},f^{(n)}}^{\pi}(s_H)\bigg|s_{H-1}\right] 
		&= \mathop{\Eb}_{\hP^{(n)}, \pi}\left[f_H^{(n)}(s_H,a_H)\bigg| s_{H-1}\right]\\
		&\leq \mathop{\Eb}_{\pi}\left[b_{H-1}^{(n)}(s_{H-1},a_{H-1})|s_{H-1}\right]\\
		&\leq \min\left\{1, \mathop{\Eb}_{\pi}\left[\hat{Q}_{H-1,\hP^{(n)},\hb^{(n)}}^{\pi}(s_{H-1},a_{H-1})\bigg|s_{H-1}\right]\right\}\\
		& = \hV^{\pi}_{H-1,\hP^{(n)},\hb^{(n)}}(s_{H-1}).
	\end{align*}
	
	Assume that $\mathop{\Eb}_{\hP^{(n)}, \pi}\left[\hV_{h+1,\hP^{(n)},f^{(n)}}^{\pi}(s_{h+1})\bigg|s_{h}\right]\leq \hV_{h,\hP^{(n)}, \hb^{(n)}}^{\pi}(s_h)$ holds for step $h+1$. Then, by Jensen's inequality, we obtain
	\begin{align*}
		\mathop{\Eb}_{\hP^{(n)}, \pi}&\bigg[\hV_{h,\hP^{(n)},f^{(n)}}^{\pi}(s_h)\bigg|s_{h-1}\bigg] \\
		& \leq \min\left\{1, \mathop{\Eb}_{\hP^{(n)}, \pi}\left[ f_h^{(n)}(s_h,a_h) + \hP_h^{(n)}\hV_{h+1,\hP^{(n)},f^{(n)}}^{\pi}(s_h,a_h)\bigg|s_{h-1}\right]\right\}\\
		& \overset{(\romannumeral1)}\leq \min\left\{1, \mathop{\Eb}_{\pi}\left[ \hb_{h-1}^{(n)}(s_{h-1},a_{h-1})\right] + \mathop{\Eb}_{\hP^{(n)},\pi}\left[\mathop{\Eb}_{\hP^{(n)},\pi} \left[ \hV_{h+1,\hP^{(n)},f^{(n)}}^{\pi}(s_{h+1})\bigg|s_{h}\right]\bigg|s_{h-1}\right]\right\}\\
		& \overset{(\romannumeral2)}\leq \min\left\{1, \mathop{\Eb}_{\pi}\left[ b_{h-1}^{(n)}(s_{h-1},a_{h-1})\right] + \mathop{\Eb}_{\hP^{(n)},\pi} \left[ \hV_{h,\hP^{(n)},\hb^{(n)}}^{\pi}(s_{h})\bigg|s_{h-1}\right]\right\}\\
		& = \min\left\{1, \mathop{\Eb}_{\pi}\left[\hat{Q}_{h-1,\hP^{(n)},\hb^{(n)}}^{\pi}(s_{h-1},a_{h-1})\right] \right\}\\
		& = \hV^{\pi}_{h-1,\hP^{(n)},\hb^{(n)}}(s_{h-1}),
	\end{align*}
	where $(\romannumeral1)$ follows from \Cref{ineq:f<b}, and $(\romannumeral2)$ is due to the induction hypothesis.

	By induction, we conclude that
	\begin{align*}
		\hV_{\hP^{(n)},f^{(n)}}^{\pi} & = \mathop{\Eb}_{\pi}\left[f_1^{(s)}(s_1,a_1)\right] + \mathop{\Eb}_{\hP^{(n)},\pi}\left[\hV_{2,\hP^{(n)},f^{(n)}}^{\pi}(s_2)\bigg|s_1\right]\\
		&\leq \sqrt{K\zeta_n} + \hV_{\hP^{(n)},\hb^{(n)}}^{\pi}.
	\end{align*}
	
	Combining Step 1 and Step 2, we conclude that
	\[\left| V_{P^\star,r}^{\pi} - V_{\hP^{(n)},r}^{\pi}\right|\leq \sqrt{K\zeta_n} + \hV_{\hP^{(n)},\hb^{(n)}}^{\pi}.\]

\end{proof}

\subsection{Proof of \Cref{prop:step2}}\label{sec: A.3}
The following lemma is key to ensure that RAFFLE terminates in finite episodes.
\begin{proposition}[Restatement of \Cref{prop:step2}]
	\label{prop:step2_appendix}
	Given that the event $\Ec$ occurs, $\zeta=\log\left(2|\Phi||\Psi|NH/\delta\right)$ the summation of the truncated value functions $\hV^{{\pi_n}}_{\hP^{(n)},\hb^{(n)}}$ under exploration policies $\{{\pi_n}\}_{n\in[N]}$ is sublinear, i.e., the following bound holds:
	\begin{align*}
		\sum_{n\in[N]} \hV_{\hP^{(n)},\hb^{(n)}}^{{\pi_n}} + \sqrt{{K}\zeta_n}  \leq 32\zeta Hd\sqrt{\beta_3{K}(d^2+{K})N}.
	\end{align*}
\end{proposition}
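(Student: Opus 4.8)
The plan has three stages, all carried out on the high-probability event $\Ec$ of \Cref{lemma:high_prob_event} (on which the bounds of \Cref{lemma:Bound_TV} and \Cref{coro:concentration on b} are valid). First, I would pass from rollouts under the learned model $\hP^{(n)}$ to rollouts under the true model $P^\star$. By backward induction on $h$, dropping the outer $\min\{1,\cdot\}$ in \Cref{ineq: def of hV} one step at a time and using that the truncation forces $\|\hV^{\pi_n}_{h+1,\hP^{(n)},\hb^{(n)}}\|_\infty\le1$, hence $(\hP^{(n)}_h-P^\star_h)\hV^{\pi_n}_{h+1,\hP^{(n)},\hb^{(n)}}(s_h,a_h)\le f^{(n)}_h(s_h,a_h)$, one obtains
\[\hV^{\pi_n}_{\hP^{(n)},\hb^{(n)}}\;\le\;\sum_{h=1}^{H}\Eb^\star_{\pi_n}\bigl[\hb^{(n)}_h(s_h,a_h)\bigr]+\sum_{h=1}^{H}\Eb^\star_{\pi_n}\bigl[f^{(n)}_h(s_h,a_h)\bigr].\]
The truncation is essential here: carrying $\|\hV_{h+1}\|_\infty\le1$ rather than $\le H$ is exactly what saves the extra factor of $H$ an untruncated analysis would incur.

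Second, I would bound each per-step expectation above by an ``elliptical'' quantity in the \emph{true} feature $\phi^\star$ using \Cref{lemma:Bound_TV}. For $h\ge2$, conditioning on $(s_{h-1},a_{h-1})\sim(P^\star,\pi_n)$ and applying \Cref{ineq:P*_Bound_bn} and \Cref{ineq:P*_Bound_TV} gives $\Eb^\star_{\pi_n}[\hb^{(n)}_h]\le\gamma_n\,\Eb^\star_{\pi_n}[\|\phi^\star_{h-1}\|_{(W^{(n)}_{h-1,\phi^\star})^{-1}}]$ and $\Eb^\star_{\pi_n}[f^{(n)}_h]\le\alpha_n\,\Eb^\star_{\pi_n}[\|\phi^\star_{h-1}\|_{(U^{(n)}_{h-1,\phi^\star})^{-1}}]$, while \Cref{ineq:Step_1_Bound} handles $h=1$. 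Fixing $h$ and summing over $n\in[N]$, Cauchy--Schwarz inside the expectation turns each term into $(\mathrm{tr}((G^{(n)})^{-1}\Lambda^{(n)}_{h-1}))^{1/2}$ with $\Lambda^{(n)}_{h-1}:=\Eb^\star_{\pi_n}[\phi^\star_{h-1}(\phi^\star_{h-1})^\top]$, and a second Cauchy--Schwarz over $n$ pulls out $\sqrt N$. Since $W^{(n)}_{h-1,\phi^\star}=\sum_{\tau=0}^{n-1}\Lambda^{(\tau)}_{h-1}+\lambda_n I$ with $\lambda_n$ nondecreasing, $W^{(n+1)}_{h-1,\phi^\star}\succeq W^{(n)}_{h-1,\phi^\star}+\Lambda^{(n)}_{h-1}$, so the matrix log-determinant (elliptical potential) inequality gives $\sum_n\mathrm{tr}((W^{(n)}_{h-1,\phi^\star})^{-1}\Lambda^{(n)}_{h-1})=\tilde O(d)$. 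For the $U$-Gram matrix, which accumulates \emph{uniform}-action increments, I would first use importance sampling, $\pi_n(a\mid s)\le1=K\,\Uc(a)$, to write $\Lambda^{(n)}_{h-1}\preceq K\,\Eb_{s_{h-1}\sim(P^\star,\pi_n),\,a\sim\Uc}[\phi^\star_{h-1}(\phi^\star_{h-1})^\top]$, which is precisely the increment of $U^{(n)}_{h-1,\phi^\star}$; the same potential argument then yields $\sum_n\mathrm{tr}((U^{(n)}_{h-1,\phi^\star})^{-1}\Lambda^{(n)}_{h-1})=\tilde O(Kd)$.

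Third, I would substitute $\gamma_n\le\sqrt{45\beta_3\zeta Kd(K+d^2)}$ and $\alpha_n\le\sqrt{2\beta_3\zeta(K+d^2)}$ (using $n\zeta_n\le\zeta$) and $\lambda_n=\tilde O(d)$, so that for each $h$ the $n$-sum of the $\hb^{(n)}$-contribution is $O(\gamma_{\max}\sqrt{dN\zeta})$ and that of the $f^{(n)}$-contribution is $O(\alpha_{\max}\sqrt{KdN\zeta})$; summing over $h\in\{2,\dots,H\}$, the $\hb^{(n)}$-term dominates (it carries an extra factor $\sqrt d$). Finally I would add the $h=1$ terms and the extra $\sqrt{K\zeta_n}$ from the statement, $\sum_n(15\alpha_n\sqrt{dK/n}+2\sqrt{K\zeta_n})=O(\sqrt{\beta_3\zeta dK(K+d^2)N})$ via $\sum_{n\le N}n^{-1/2}\le2\sqrt N$, which is lower order, and collect constants to reach $\sum_{n\in[N]}(\hV^{\pi_n}_{\hP^{(n)},\hb^{(n)}}+\sqrt{K\zeta_n})\le32\zeta Hd\sqrt{\beta_3 K(d^2+K)N}$.

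The main obstacle is the second stage, specifically making the accumulated-Gram-matrix argument work for $U^{(n)}_{h-1,\phi^\star}$: its increments come from uniform-action rollouts, whereas the quantity to be controlled is an expectation under $\pi_n$, and bridging this mismatch forces the importance-sampling factor $K$ --- this is exactly the source of the $K$-dependence in the final rate (and consistent with the lower bound of \Cref{th:lowerbound}). Along the way one must also verify $\lambda_n\ge1$ so that the eigenvalue inequality $x\le2\log(1+x)$ used in the log-determinant step is legitimate, and keep the $\zeta_n$-versus-$\zeta$ bookkeeping tight so that the rate stays $\tilde O(Hd\sqrt{K(K+d^2)N})$ and not larger.
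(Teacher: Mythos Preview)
Your proposal is correct and follows essentially the same route as the paper. The paper likewise decomposes $\hV^{\pi_n}_{\hP^{(n)},\hb^{(n)}}\le V^{\pi_n}_{P^\star,\hb^{(n)}}+V^{\pi_n}_{P^\star,f^{(n)}}$ (via the same truncation-at-one argument you describe), then applies the three inequalities of \Cref{lemma:Bound_TV} to convert each per-step expectation into an elliptical norm of $\phi^\star$, sums over $n$ with Cauchy--Schwarz, uses importance sampling to match the $\pi_n$-action expectation to the uniform-action increments of $U^{(n)}_{h,\phi^\star}$ (picking up the factor $K$), and finishes with \Cref{lemma: Elliptical_potential}.
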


\begin{proof}
	Note that $\hV_{h,\hP^{(n)} , \hb^{(n)}}^{\pi}\leq 1$ holds for any policy $\pi$ and $h\in[H]$. We first have 
	\begin{align*}
		\hV_{\hP^{(n)} , \hb^{(n)}}^{{\pi_n}} - V^{{\pi_n}}_{P^\star,\hb^{(n)}}& \leq
		\mathop{\Eb}_{{\pi_n}}\left[\hP^{(n)}_{1}\hV^{{\pi_n}}_{2,\hP^{(n)},\hb^{(n)}}(s_1,a_1) - P^\star_1 V^{{\pi_n}}_{2,P^\star,\hb^{(n)}}(s_1,a_1)\right]\\
		& = \mathop{\Eb}_{{\pi_n}}\left[\left(\hP^{(n)}_{1} - P^\star_1\right)\hV^{{\pi_n}}_{2,\hP^{(n)},\hb^{(n)}}(s_1,a_1) + P^\star_1\left(\hV^{{\pi_n}}_{2,\hP^{(n)},\hb^{(n)}} - V^{{\pi_n}}_{2,P^\star,\hb^{(n)}}\right)(s_1,a_1)\right]\\
		&\leq \mathop{\Eb}_{{\pi_n}}\left[f_1^{(n)}(s_1,a_1) + P^\star_1\left(\hV^{{\pi_n}}_{2,\hP^{(n)},\hb^{(n)}} - V^{{\pi_n}}_{2,P^\star,\hb^{(n)}}\right)\right]\\
		&\leq \ldots\\
		&\leq {\Eb}_{(s_h,a_h) \sim (P^\star, {\pi_n})}\left[\sum_{h=1}^H f^{(n)}(s_h,a_h)\right] = V^{{\pi_n}}_{P^\star,f^{(n)}},
	\end{align*}
	
	which implies
	$
	\hV_{\hP^{(n)} , \hb^{(n)}}^{{\pi_n}} \leq  V^{{\pi_n}}_{P^\star,\hb^{(n)}} + V^{{\pi_n}}_{P^\star,f^{(n)}}.
	$

	Applying the \Cref{ineq:P*_Bound_bn} and \Cref{ineq:Step_1_Bound}, we obtain the following bound on the value function $V_{P^\star, {\hb}^{(n)}}^{{\pi}_n}$:
	\begin{align*}
		V_{P^\star, {\hb}^{(n)}}^{{\pi}_n}
		& = \sum_{h=1}^{H} \mathop{\Eb}_{s_h\sim (P^\star, {\pi_n}) \atop a_h\sim{\pi_n}}\left[\hat{b}_n(s_h,a_h)\right]\\
		& \leq \sum_{h=2}^{H} \mathop{\Eb}_{s_{h-1}\sim (P^\star, {{\pi_n}}) \atop a_{h-1}\sim{\pi_n}} 
		\left[\gamma_n\left\|\phi_{h-1}^\star(s_{h-1},a_{h-1})\right\|_{(W_{h-1,\Pshi}^{(n)})^{-1}} \right]
		+  15 \alpha_n\sqrt{\frac{d{K}}{n}}\\
		&\leq\sum_{h=1}^{H} \mathop{\Eb}_{s_{h}\sim (P^\star, {\pi_n}) \atop a_{h} \sim {\pi_n}} \left[\gamma_n\left\|\phi_{h}^\star(s_{h},a_{h})\right\|_{(W_{h,\Pshi}^{(n)})^{-1}}
		\right]
		+ 15 \alpha_n\sqrt{\frac{d{K}}{n}}.
	\end{align*}

	Similarly, we obtain
	\begin{align*}
		V_{P^\star, f^{(n)}}^{{\pi}_n}
		& = \sum_{h=1}^{H} \mathop{\Eb}_{s_h\sim (P^\star, {\pi_n}) \atop a_h\sim{\pi_n}}\left[f_h^{(n)}(s_h,a_h)\right]\\
		& \leq \sum_{h=2}^{H} \mathop{\Eb}_{s_{h-1}\sim (P^\star, {{\pi_n}}) \atop a_{h-1}\sim{\pi_n}} 
		\left[\alpha_n\left\|\phi_{h-1}^\star(s_{h-1},a_{h-1})\right\|_{(U_{h-1,\Pshi}^{(n)})^{-1}} \right]
		+  \sqrt{{K}\zeta_n}\\
		&\leq\sum_{h=1}^{H} \mathop{\Eb}_{s_{h}\sim (P^\star, {\pi_n}) \atop a_{h} \sim {\pi_n}} \left[\alpha_n\left\|\phi_{h}^\star(s_{h},a_{h})\right\|_{(U_{h,\Pshi}^{(n)})^{-1}}
		\right]
		+ \sqrt{{K}\zeta_n}.
	\end{align*}
	
	Then, taking the summation of $V_{P^\star, {\hb}^{(n)}+f^{(n)}}^{{\pi}_n}$ over $n\in[N]$, we have
	\begin{align*}
		\sum_{n\in[N]}&V_{P^\star, f^{(n)} + \hb^{(n)}}^{{\pi}_n} + \sqrt{{K}\zeta_n}\\
		&\leq \sum_{n\in[N]}15\alpha_n\sqrt{\frac{d{K}}{n}} + 2\sum_{n\in[N]}\sqrt{K\zeta_n}  + \sum_{n\in[N]}\sum_{h=1}^{H} \mathop{\Eb}_{s_{h}\sim (P^\star, {\pi_n}) \atop a_{h} \sim {\pi_n}} \left[\gamma_n\left\|\phi_{h}^\star(s_{h},a_{h})\right\|_{(W_{h,\Pshi}^{(n)})^{-1}}
		\right] \\
		& \quad + \sum_{n\in[N]} \sum_{h=1}^{H} \mathop{\Eb}_{s_{h}\sim (P^\star, {\pi_n}) \atop a_{h} \sim {\pi_n}} \left[\alpha_n\left\|\phi_{h}^\star(s_{h},a_{h})\right\|_{(U_{h,\Pshi}^{(n)})^{-1}}
		\right]\\
		&\overset{(\romannumeral1)}{\leq} 17\alpha_N\sqrt{d{K} N} + \gamma_N\sum_{h=1}^{H}\sqrt{N \sum_{n\in [N]}\mathop{\Eb}_{s_{h}\sim(P^\star,{\pi_n}) \atop a_h\sim{\pi_n}}\left[\left\|\phi_{h}^\star(s_{h},a_{h})\right\|^2_{(W_{h,\Pshi}^{(n)})^{-1}}\right]}\\
		&\quad + \alpha_N\sum_{h=1}^{H}\sqrt{{K} N \sum_{n\in [N]}\mathop{\Eb}_{s_{h}\sim(P^\star,{\pi_n}) \atop a_h\sim \Uc}\left[\left\|\phi_{h}^\star(s_{h},a_{h})\right\|^2_{(U_{h,\Pshi}^{(n)})^{-1}}\right]}\\
		&\overset{(\romannumeral2)}{\leq} 17\sqrt{\zeta}\sqrt{2\beta_3 d{K}({K}+d^2)N } + H\sqrt{45\beta_3\zeta d{K}({K}+d^2)}\sqrt{d N\zeta}\\
		& \quad + H\sqrt{\beta_3\zeta({K}+d^2)}\sqrt{d{K} N \zeta}\\
		& \leq 32\zeta Hd\sqrt{\beta_3{K}(d^2+{K})N},
	\end{align*}
	where $(\romannumeral1)$ follows from  Cauchy-Schwarz inequality and importance sampling, and $(\romannumeral2)$ follows from \Cref{lemma: Elliptical_potential}.
	Hence, the statement of \Cref{prop:step2_appendix} is verified.
\end{proof}

\subsection{Proof of \Cref{prop:step3}}

Based on \Cref{prop:step2_appendix}, we argue that with enough number of iterations, RAFFLE can find $\hP^{\epsilon}$ satisfying the condition in line 15 of \Cref{Algorithm: RAFFLE}.

\begin{proposition}
	[Restatement of \Cref{prop:step3}] \label{prop:step3_appendix}
	Fix any $\delta \in (0,1), \epsilon >0$. Suppose the algorithm runs for $N=\frac{2^{14}\beta_3H^2d^2K(d^2+K)\log^2 (2|\Phi||\Psi|H^3d^2K(d^2+K)/(\delta\epsilon^2))}{\epsilon^2}$ iterations, with probability at least $1-\delta$, RAFFLE can find an $n_\epsilon \leq N$ in the exploration phase such that $2\hat{V}^{\pi_{n_\epsilon}}_{\hP^{(n_\epsilon)},\hb^{(n_\epsilon)}}+2\sqrt{K\zeta_{n_\epsilon}}\leq \epsilon$. In other words, \Cref{Algorithm: RAFFLE} can output $\hP^{\epsilon}=\hP^{(n_\epsilon)}$ satisfying the condition in line 15. In addition 
	\begin{align*}
		\left|V_{P^\star,r}^{\pi} - V_{\hP^{(n_\epsilon)}, r}^{\pi}\right|\leq \epsilon/2.
	\end{align*}
\end{proposition}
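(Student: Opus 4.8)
The plan is to chain the sublinearity estimate of \Cref{prop:step2_appendix} with a pigeonhole argument, and then feed the resulting iteration into \Cref{prop:step1_appendix}. Throughout I would work on the high-probability event $\Ec$ of \Cref{lemma:high_prob_event}, which holds with probability at least $1-\delta$; every inequality below is then deterministic on $\Ec$. So the overall structure is: (i) extract a ``good'' iteration $n_\epsilon\le N$ at which the termination test in line~15 of \Cref{Algorithm: RAFFLE} fires; (ii) use the optimality of $\pi_{n_\epsilon}$ together with \Cref{prop:step1_appendix} to convert the termination inequality into the value-error guarantee.

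For step (i), I would argue by contradiction. Suppose the test in line~15 fails for every $n\in[N]$, i.e. $\hV^{\pi_n}_{\hP^{(n)},\hb^{(n)}}+\sqrt{K\zeta_n}>\epsilon/2$ for all $n\in[N]$. Summing over $n$ gives $\sum_{n\in[N]}\big(\hV^{\pi_n}_{\hP^{(n)},\hb^{(n)}}+\sqrt{K\zeta_n}\big)>N\epsilon/2$, while \Cref{prop:step2_appendix} bounds the same sum by $32\zeta Hd\sqrt{\beta_3 K(d^2+K)N}$ with $\zeta=\log(2|\Phi||\Psi|NH/\delta)$. Comparing the two yields $\sqrt N<64\,\zeta Hd\sqrt{\beta_3 K(d^2+K)}/\epsilon$, i.e. $N<4096\,\zeta^2H^2d^2\beta_3 K(d^2+K)/\epsilon^2$. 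I would then close the logarithmic self-reference: since $\zeta$ depends on $N$ only through a single logarithm, substituting the stated value $N=2^{14}\beta_3H^2d^2K(d^2+K)\log^2\!\big(2|\Phi||\Psi|H^3d^2K(d^2+K)/(\delta\epsilon^2)\big)/\epsilon^2$ bounds $\zeta$ by a constant multiple of $\log\!\big(2|\Phi||\Psi|H^3d^2K(d^2+K)/(\delta\epsilon^2)\big)$, and the slack factor $2^{14}/4096=4$ in the definition of $N$ is exactly what absorbs $4096\,\zeta^2$ after this substitution. Hence the derived inequality $N<4096\,\zeta^2(\cdots)/\epsilon^2$ is violated, a contradiction; so there exists $n_\epsilon\le N$ with $2\hV^{\pi_{n_\epsilon}}_{\hP^{(n_\epsilon)},\hb^{(n_\epsilon)}}+2\sqrt{K\zeta_{n_\epsilon}}\le\epsilon$, and RAFFLE sets $\hP^{\epsilon}=\hP^{(n_\epsilon)}$ at line~16.

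For step (ii), I would invoke \Cref{prop:step1_appendix}: for any policy $\pi$ and any reward $r$, on $\Ec$ we have $\big|V^{\pi}_{P^\star,r}-V^{\pi}_{\hP^{(n_\epsilon)},r}\big|\le \hV^{\pi}_{\hP^{(n_\epsilon)},\hb^{(n_\epsilon)}}+\sqrt{K\zeta_{n_\epsilon}}$. Since $\pi_{n_\epsilon}=\arg\max_{\pi}\hV^{\pi}_{\hP^{(n_\epsilon)},\hb^{(n_\epsilon)}}$ by line~14, the right-hand side is at most $\hV^{\pi_{n_\epsilon}}_{\hP^{(n_\epsilon)},\hb^{(n_\epsilon)}}+\sqrt{K\zeta_{n_\epsilon}}$, which is $\le\epsilon/2$ by the termination inequality established in step (i). This gives $\big|V^{\pi}_{P^\star,r}-V^{\pi}_{\hP^{(n_\epsilon)},r}\big|\le\epsilon/2$ for every $\pi$ and $r$, completing the proof.

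The main obstacle I expect is not conceptual but bookkeeping: carrying the numerical constants through the pigeonhole/averaging step and, above all, resolving the implicit inequality $N\ge \zeta(N)^2\cdot(\text{poly})/\epsilon^2$ with $\zeta(N)=\log(2|\Phi||\Psi|NH/\delta)$, so that the chosen $N$ is genuinely polynomial in all parameters with only a squared-logarithmic overhead and the constant $2^{14}$ provably dominates $4096$ times the post-substitution value of $\zeta^2$. Everything else reduces to direct applications of \Cref{prop:step1_appendix}, \Cref{prop:step2_appendix}, and the definition of $\pi_{n_\epsilon}$.
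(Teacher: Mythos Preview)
Your proposal is correct and follows essentially the same approach as the paper: a contradiction argument summing the termination quantity over $[N]$ and invoking \Cref{prop:step2_appendix}, followed by closing the logarithmic self-reference (the paper uses the inequality $n\le c\log^2(\alpha n)\Rightarrow n\le 4c\log^2(\alpha c)$, which is exactly your factor $2^{14}/4096=4$), and finally appealing to \Cref{prop:step1_appendix}. Your step~(ii) is in fact slightly more explicit than the paper's, which simply writes ``Combining \Cref{prop:step1_appendix}, we finish the proof'' without spelling out the use of the optimality of $\pi_{n_\epsilon}$.
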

\begin{proof}

	We show that the algorithm terminates by contradiction. If it does not stop, applying \Cref{prop:step2_appendix}, we have
	\begin{align*}
		\epsilon N/2&< \sum_{n\in[N]}\left(\hV_{\hP^{(n)},\hb^{(n)}}^{\pi_n} + \sqrt{K\zeta_n}\right)\\
		&\leq 32\zeta Hd\sqrt{\beta_3{K}(d^2+{K})N}.
	\end{align*}
	
	Therefore,
	\[N < \frac{2^{12}\beta_3H^2d^2K(d^2+K)\zeta^2}{\epsilon^2}\]
	Recall $\zeta=N\zeta_N = \log\left(2|\Phi||\Psi|NH/\delta\right)$. Using the fact that $ n \leq c \log^2 (\alpha_n n) \Rightarrow n \leq 4c\log^2 (\alpha_n c), \forall c \geq e^2, n \geq 1, \alpha_n  \in \Rb^+$, it can be concluded that
	\begin{align*}
		N < \frac{2^{14}\beta_3H^2d^2K(d^2+K)\log^2 (2|\Phi||\Psi|H^3d^2K(d^2+K)/(\delta\epsilon^2))}{\epsilon^2},
	\end{align*}
	which is a contradiction. 
	
	Therefore, there exists an $n_{\epsilon} = O(\frac{H^2d^2K(d^2+K)\log^2 (|\Phi||\Psi|H^3d^2K(d^2+K)/(\delta\epsilon^2))}{\epsilon^2})$ such that $\hP^{\epsilon}=\hP^{(n_\epsilon)}$ satisfies
	\begin{align*}
		2\hat{V}^{\pi_{n_\epsilon}}_{\hP^{(n_\epsilon)},\hb^{(n_\epsilon)}}+2\sqrt{K\zeta_{n_\epsilon}}\leq \epsilon.
	\end{align*} 
	Combining \Cref{prop:step1_appendix}, we finish the proof.

\end{proof}

\begin{proof}[Proof of \Cref{thm1: reward-free sample complexity}]
	Recall that $\hP^\epsilon$ is the output of RAFFLE in the $n_\epsilon$-iteration. Then, by \Cref{prop:step3_appendix}
	\begin{align*}
		V^\star_{P^\star, r} &- V^{\bpi}_{P^\star, r}\\
		&\leq V_{\hat{P}^{\epsilon}, r}^{\pi^\star}-V_{P^\star, r}^{\Bar{\pi}}+\epsilon/2\nonumber\\
		&\overset{(\romannumeral1)}{\leq} V_{\hat{P}^\epsilon, r}^{\bar{\pi}}-V_{P^\star, r}^{\bar{\pi}}+\epsilon/2\nonumber\\
		&\leq 
		\epsilon/2+\epsilon/2\nonumber\\
		&= \epsilon,
	\end{align*}
	where $(\romannumeral1)$ follows from the definition of $\bpi$.
	The number of trajectories $n_{\epsilon} H $ is at at most
	\[ O\left(\frac{H^3d^2K(d^2+K)\log^2 (|\Phi||\Psi|H^3d^2K(d^2+K)/(\delta\epsilon^2))}{\epsilon^2}\right)\]
\end{proof}

\section{Proof of \Cref{Theorem: system identification sample complexity}}\label{subsec:System_proof}
In this section, we adopt the same notations as in \Cref{sec: A}. The following lemma provides an upper bound for the estimation error of any learned model from the true model.
\begin{lemma}\label{lemma: V_b bound TV}
	Fix $\delta \in (0,1)$, for any $h \in [H], n \in \mathbb{N}^+$, any policy $\pi$, with probability at least $1-\delta/2$, 
	\begin{align*}
		\mathop{\Eb}_{s_h \sim (\Ps,\pi)\atop s_h \sim \pi}\left[f_h^{(n)}(s_h,a_h)\right] \leq 2\sqrt{K\zeta_n}+2 \hV_{\hP^{(n)},\hb^{(n)}}^{\pi_n}.
	\end{align*}
\end{lemma}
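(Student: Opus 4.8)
The plan is to reduce the claim to the value-difference bound of \Cref{prop:step1_appendix} by applying it to a carefully chosen ``one-step indicator'' reward, and then to control the residual on-model term using the truncated value function $\hV$. Fix $h\ge 2$ (the case $h=1$ is immediate from the first bound of \Cref{ineq:Step_1_Bound}, which already gives $\Eb^\star_{\pi}[f_1^{(n)}(s_1,a_1)]\le\sqrt{K\zeta_n}$). Introduce the data-dependent reward $r^{(\ast)}$ defined by $r^{(\ast)}_{h'}\equiv 0$ for $h'\neq h$ and $r^{(\ast)}_h(s,a):=f_h^{(n)}(s,a)$; since $f_h^{(n)}$ is a total-variation distance and hence lies in $[0,1]$ pointwise, $r^{(\ast)}$ is an admissible reward ($r^{(\ast)}_h\in[0,1]$ and $\sum_{h'}r^{(\ast)}_{h'}\le 1$), so \Cref{prop:step1_appendix} --- which, on the event $\mathcal{E}$ of \Cref{lemma:high_prob_event}, holds uniformly over all rewards --- applies to it. Because only step $h$ carries reward, one has $V^{\pi}_{\Ps,r^{(\ast)}}=\Eb^\star_{\pi}[f_h^{(n)}(s_h,a_h)]$ and $V^{\pi}_{\hP^{(n)},r^{(\ast)}}=\Eb_{(s_h,a_h)\sim(\hP^{(n)},\pi)}[f_h^{(n)}(s_h,a_h)]$, whence
\begin{equation*}
\Eb^\star_{\pi}[f_h^{(n)}(s_h,a_h)]\;\le\;\Eb_{(s_h,a_h)\sim(\hP^{(n)},\pi)}[f_h^{(n)}(s_h,a_h)]\;+\;\hV^{\pi}_{\hP^{(n)},\hb^{(n)}}\;+\;\sqrt{K\zeta_n}.
\end{equation*}

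It then remains to bound the on-model term $\Eb_{(s_h,a_h)\sim(\hP^{(n)},\pi)}[f_h^{(n)}(s_h,a_h)]$ by $\hV^{\pi}_{\hP^{(n)},\hb^{(n)}}+\sqrt{K\zeta_n}$. First I would record a monotonicity property of the truncated value function: since $\hV_{h',\hP^{(n)},f^{(n)}}^{\pi}\le 1$ at every step, the truncated Bellman update gives $\hat{Q}_{h',\hP^{(n)},f^{(n)}}^{\pi}\ge f_{h'}^{(n)}$ and $\hat{Q}_{h',\hP^{(n)},f^{(n)}}^{\pi}\ge \hP_{h'}^{(n)}\hV_{h'+1,\hP^{(n)},f^{(n)}}^{\pi}$; propagating these two facts backward from step $h$ yields $\Eb_{(s_h,a_h)\sim(\hP^{(n)},\pi)}[f_h^{(n)}(s_h,a_h)]\le \hV^{\pi}_{\hP^{(n)},f^{(n)}}$. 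Next I would invoke Step 2 of the proof of \Cref{prop:step1_appendix}, which establishes $\hV^{\pi}_{\hP^{(n)},f^{(n)}}\le \hV^{\pi}_{\hP^{(n)},\hb^{(n)}}+\sqrt{K\zeta_n}$ (a consequence of \Cref{ineq:hP_Bound_TV}, \Cref{coro:concentration on b}, and the choice $\hat\alpha_n=5\alpha_n$). Combining the three inequalities gives $\Eb^\star_{\pi}[f_h^{(n)}(s_h,a_h)]\le 2\hV^{\pi}_{\hP^{(n)},\hb^{(n)}}+2\sqrt{K\zeta_n}$, and since $\pi_n=\arg\max_{\pi}\hV^{\pi}_{\hP^{(n)},\hb^{(n)}}$ we may replace $\hV^{\pi}_{\hP^{(n)},\hb^{(n)}}$ by $\hV^{\pi_n}_{\hP^{(n)},\hb^{(n)}}$ on the right, which is exactly the stated bound.

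The main obstacle is the mismatch between the true dynamics $\Ps$ and the estimated dynamics $\hP^{(n)}$: the quantity to be controlled is an expectation along trajectories of $\Ps$, whereas the one-step-back estimates of \Cref{lemma:Bound_TV} and the elliptical-potential machinery most naturally produce terms governed by $\hP^{(n)}$ and the learned features $\hphi^{(n)}$. The indicator-reward device is exactly what bridges this gap through \Cref{prop:step1_appendix}. Two points require care: (i) $r^{(\ast)}$ is admissible only because $f_h^{(n)}\le 1$, so that the horizon-sum normalization $\sum_{h'}r^{(\ast)}_{h'}\le 1$ is met; and (ii) one cannot shortcut by applying \Cref{prop:step1_appendix} with $r=\hb^{(n)}$, since $\sum_{h'}\hb_{h'}^{(n)}$ may be as large as $H$ so $\hb^{(n)}$ is not an admissible reward --- it is precisely the truncated value $\hV$ that absorbs this difficulty. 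A minor secondary step is verifying the truncated-value monotonicity used above, which is elementary but is not isolated as a lemma in the excerpt.
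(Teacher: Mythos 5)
Your proof is correct and follows essentially the same route as the paper: decompose $\Eb^\star_\pi[f_h^{(n)}]$ into the on-model expectation under $\hP^{(n)}$ plus a model-difference term, control the latter by applying \Cref{prop:step1_appendix} to the reward supported only at step $h$ (the paper does this implicitly via the triangle inequality; you make the admissibility of that reward explicit), and bound the on-model term by $\hV^{\pi}_{\hP^{(n)},\hb^{(n)}}+\sqrt{K\zeta_n}$ before invoking optimality of $\pi_n$. The only (immaterial) difference is that you bound the on-model term through $\hV^{\pi}_{\hP^{(n)},f^{(n)}}$ and Step 2 of the proposition's proof, whereas the paper steps back to $\hb^{(n)}_{h-1}$ via \Cref{ineq:f<b} and telescopes $\hat{Q}_{\cdot,\hP^{(n)},\hb^{(n)}}$ directly.
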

\begin{proof}
	Recall that $f_h^{(n)}(s,a)=\left\|\hP_h^{(n)}(\cdot|s,a) - P^\star_h(\cdot|s,a)\right\|_{TV}$.
	Fix any policy $\pi$, for any $h \geq 2$, we have 
	\begin{align}
		\mathop{\Eb}_{s_{h} \sim (\hP^{(n)},\pi)\atop a_{h} \sim \pi}\left[\hat{Q}_{h,\hP^{(n)},\hb^{(n)}}^{\pi}(s_{h},a_{h})\right]
		&=\mathop{\Eb}_{s_{h-1} \sim (\hP^{(n)},\pi)\atop a_{h-1} \sim \pi}\left[\hP^{(n)}_{h}\hat{V}_{h,\hP^{(n)},\hb^{(n)}}^{\pi}(s_{h-1},a_{h-1})\right]\nonumber\\
		&\leq \mathop{\Eb}_{s_{h-1} \sim (\hP^{(n)},\pi)\atop a_{h-1} \sim \pi}\left[\min\left\{1,\hb^{(n)}_{h-1}(s_{h-1},a_{h-1})+\hP^{(n)}_{h-1}\hat{V}_{h,\hP^{(n)},\hb^{(n)}}^{\pi}(s_{h-1},a_{h-1})\right\}\right]\nonumber\\
		&=\mathop{\Eb}_{s_{h-1} \sim (\hP^{(n)},\pi)\atop a_{h-1} \sim \pi}\left[\hat{Q}_{h-1,\hP^{(n)},\hb^{(n)}}^{\pi}(s_{h-1},a_{h-1})\right]\nonumber\\
		& \leq \ldots \nonumber\\
		& \leq \mathop{\Eb}_{ a_{1} \sim \pi}\left[\hat{Q}_{1,\hP^{(n)},\hb^{(n)}}^{\pi}(s_{1},a_{1})\right]\nonumber\\
		& = \hat{V}_{\hP^{(n)},\hb^{(n)}}^{\pi}.\label{ineq: lemmaB1-1}
	\end{align}
	Hence, for $h \geq 2$, we have
	\begin{align}
		\mathop{\Eb}_{s_h \sim (\hP^{(n)},\pi)\atop a_h \sim \pi}\left[f_h^{(n)}(s_h,a_h)\right] &\overset{(\romannumeral1)}{\leq}\mathop{\Eb}_{s_{h-1} \sim (\hP^{(n)},\pi)\atop a_{h-1} \sim \pi}\left[\hb_{h-1}^{(n)}(s_{h-1},a_{h-1})\right]\nonumber\\
		&\overset{(\romannumeral2)}{\leq}\mathop{\Eb}_{s_{h-1} \sim (\hP^{(n)},\pi)\atop a_{h-1} \sim \pi}\left[\hat{Q}_{h-1,\hP^{(n)},\hb^{(n)}}^{\pi}(s_{h-1},a_{h-1})\right]\nonumber\\
		&\overset{(\romannumeral3)}{\leq}\hat{V}_{\hP^{(n)},\hb^{(n)}}^{\pi},\label{ineq: lemmaB1-2}
	\end{align}
	where $(\romannumeral1)$ follows from \Cref{ineq:f<b}, $(\romannumeral2)$ follows from the definition of $\hat{Q}_{h-1,\hP^{(n)},\hb^{(n)}}^{\pi}(s_{h-1},a_{h-1})$ and $(\romannumeral3)$ follows from \Cref{ineq: lemmaB1-1}.
	\begin{align*}
		\scriptstyle\mathop{\Eb}_{ \sim (\Ps,\pi)\atop s_h \sim \pi}\left[f_h^{(n)}(s_h,a_h)\right]
		&\leq  \mathop{\Eb}_{s_h \sim (\hP^{(n)},\pi)\atop a_h \sim \pi}\left[f_h^{(n)}(s_h,a_h)\right]+\left| \mathop{\Eb}_{s_h \sim (\Ps,\pi)\atop a_h \sim \pi}\left[f_h^{(n)}(s_h,a_h)\right]-\mathop{\Eb}_{s_h \sim (\hP^{(n)},\pi)\atop a_h \sim \pi}\left[f_h^{(n)}(s_h,a_h)\right]\right|\\
		& \overset{(\romannumeral1)}{\leq} (\hV_{\hP^{(n)},\hb^{(n)}}^{\pi}+\sqrt{K\zeta_n}) +\left(\sqrt{K\zeta_n}+ \hV_{\hP^{(n)},\hb^{(n)}}^{\pi}\right)\\
		& \overset{(\romannumeral2)}{\leq} 2\sqrt{K\zeta_n}+ 2\hV_{\hP^{(n)},\hb^{(n)}}^{\pi_n},
	\end{align*}
	where the first term in $(\romannumeral1)$ is due to \Cref{ineq: lemmaB1-2} and \Cref{ineq:Step_1_Bound}, the second term in $(\romannumeral1)$ is due to \Cref{prop:step1_appendix} and $(\romannumeral2)$ follows from the definition of $\pi_n$.
\end{proof}

\begin{proof}[Proof of \Cref{Theorem: system identification sample complexity}]
	By \Cref{prop:step3_appendix}, let $n_{\epsilon}=O\left(\frac{H^2d^2K(d^2+K)\log^2 (|\Phi||\Psi|NH^3d^2K(K+d^2)/(\delta\epsilon^2))}{\epsilon^2}\right)$, with no more than $n_{\epsilon}H$ trajectories, RAFFLE can learn a model $\hP^\epsilon$, bonus $\hb^{\epsilon}$ and policy $\pi_\epsilon$ at the $n_\epsilon$-th iteration satisfying
	$2V^{\pi_\epsilon}_{\hP^\epsilon,\hb^\epsilon}+2\sqrt{K\zeta_{n_\epsilon}} \leq \epsilon$. Then, following \Cref{lemma: V_b bound TV}, we have
	\begin{align*}
		\mathop{\Eb}_{s_h \sim (\Ps,\pi)\atop s_h \sim \pi}\left[\|\hP_h^{\epsilon}(\cdot|s_h,a_h)-P_h^\star(\cdot|s_h,a_h)\|_{TV}\right] \leq 2\sqrt{K\zeta_{n_\epsilon}}+2 V_{\hP^{\epsilon},\hb^{\epsilon}}^{\pi_\epsilon}\leq \epsilon.
	\end{align*} 
\end{proof}

\section{{Proof of \Cref{th:lowerbound} (Lower Bound on Sample Complexity)} }\label{sec:lower_bound}
\subsection{{Step 1: Construction of Hard MDP instances}}
Our hard MDP instances is inspired by \citet{domingues2020episodic} for tabular MDPs.  However, the lower bound in \cite{domingues2020episodic} requires $S \geq K$, where $S,K$ denote the cardinality of state and action space respectively. Our hard MDP instances remove the assumption that $S\geq K$ by constructing the action set with two types of actions. The first type of actions is mainly used to form a large state space through a tree structure. The second type of actions is mainly used to distinguish different MDPs. Such a construction allows us to separately treat the state space and the action space, so that both state and action spaces can be arbitrarily large. We then explicitly define the feature vectors for all state-action pairs and show our hard MDP instances have a low-rank structure with dimension $d=S$. 
In a nutshell, we construct a family of $HdK$ MDPs that are hard to distinguish in KL divergence, while the corresponding optimal policies are very different as shown in \Cref{fig:tree}. 
\begin{figure}[th]
	\centering
	\includegraphics[scale=0.3]{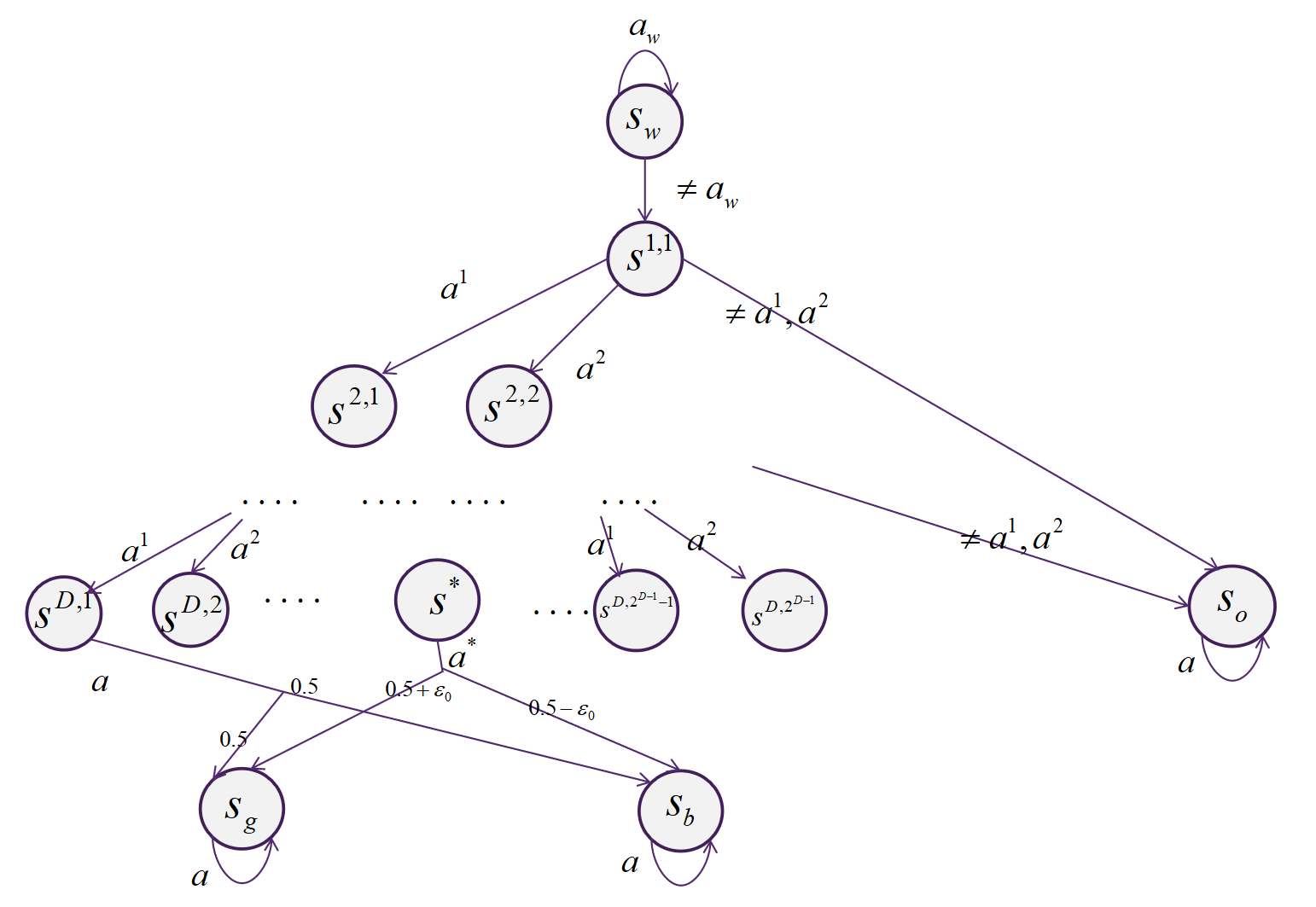}
	\caption{Hard MDP instances.}
	\label{fig:tree}
\end{figure}

\if{0}
Below, we first state the MDP structure as in \Cref{fig:tree} and then show that the MDP has a low-rank structure.

There are 4 special states: a waiting state $s_w$, a good state $s_g$, a bad state $s_b$ and a break state $s_{break}$. The action state is divided into two disjoint action sets: $\mathcal{A}_0$ and $\mathcal{A}_1$, where $|\Ac_0|=K_0, |\Ac_1|=K_1$.

The agent starts in the \textit{waiting state} $s_w$ where it can take an action $a_w$ to stay in $s_w$ up to a stage $\bar{H} \leq H$, after which the agent has to leave $s_w$. 

From $s_w$, the agent can only transition to the \text{root state} $s_{root}$, from which she can reach a $(K_0+1)$-ary tree formed by the rest $S-4$ states. We assume deep of the tree is $D$ and the number of leaves states of the tree is $L$, and $S-4=\sum_{i=1}^{D-1}(K_0)^{i}, L=K_0^{D-1}$. In the first $D-1$ layer, for each parent state, if the agent takes any action $a\in\left\{a^1,\ldots,a^{K_0}\right\}$ from the action set $\Ac_0$, then she will take a deterministic transition and reach a corresponding child state, which then becomes the parent state of next stage. If the agent takes any action $a \in \left\{a^{K_0+1}, \ldots, a^{K_0+K_1}\right\}$, she will take a deterministic transition to a special break state $s_{break}$, which is an absorbing state no matter what action $a\in\Ac$ the agent takes. In addition, $s_{break}$ will not receive any rewards at any stage. 

In the last $D$ layer, there are $L$ leaves states $\Lc=\{s^1, \ldots, s^L\}$, from which the agent can reach the good states $s_g$ and the bad state $s_b$. $s_g$ is absorbing and is the only state where the agent obtains rewards. The reward of $s_g$ starts to be 1 at stage $\bar{H}+D+1$. and a bad state $s_b$ that is absorbing and gives no reward. In addition, the good state and the bad state can only reached from leaves states. There is a single action $a^\star \in \Ac$ in a single state $s^{\ell^*} \in \Lc$ of leaves states required to be taken only at certain stage $h^* \in [\bar{H}+D]$ that can increase the probability of arriving to the good state by $\epsilon$. We remark here, to reach the good state $s_g$ and the bad state $s_b$ from the leaves state, the agent can take any action $a \in \Ac$, but to reach the leaves states and avoid entering the break state $s_{break}$, the agent should take action belongs to $\Ac_0$. 

\fi

First, we define a reference MDP $\Mc_0$ as follows. We start with the construction of the state space $\Sc$ and the action space $\Ac$.
 
\begin{itemize}
    \item Let $\Sc=\{s_w,s_{o},s_g,s_b\}\bigcup_{i\in[D],j\in[2^{i-1}]}\{s^{i,j}\}$, where $s_w,s_{o},s_g$ and $s_b$ denote `waiting state', `outlier state', `good state', and `bad state', respectively. The states in $\bigcup_{i\in[D],j\in[2^{i-1}]}\{s^{i,j}\}$ form a binary tree, where $s^{i,j}$ denotes the $j$-th branch node of the layer $i$.
 
     \item Let $\Ac=\{a_w,a^1,a^2\}\bigcup\Ac_0$, where $a_w$ denotes `waiting action', $a^1,a^2$ are two unique actions that form the binary tree, and $|\Ac_0| = K-3.$ 
\end{itemize}

Then, the transition probabilities of $\Mc_0$ are specified through the following rules.
 \begin{itemize}
     \item The initial state is the waiting state $s_w$.
     
     \item If the agent takes the waiting action $a_w$ before time step $\bar{H}$, waiting state $s_w$ stays on itself. Otherwise, $s_w$ transits to the root state $s^{1,1}$ of the binary tree. 
     
     Mathematically, $\Pb_h[s_w|s_w,a_w]=\mathbbm{1}_{h\leq \bar{H}}$, and  $\Pb_h[s^{1,1}|s_w,a]=\mathbbm{1}_{a\neq a_w \text{ or } h> \bar{H}}.$ 
     \item When $i< D$, for states $s^{i,j}$ in the binary tree, we have the following transition rules:
     
     \begin{itemize}
     \item If the agent takes actions $a_1$ or $a_2$, $s^{i,j}$ deterministically transits to its children $s^{i+1,2j-1}$ or $s^{i+1,2j}$, respectively.
     
     Mathematically, $\Pb_h[s^{i+1,2j-1}|s^{i,j},a^1] =1$, and $\Pb[s^{i+1,2j}|s^{i,j},a^2] = 1.$
     
     \item If the agent takes any action other than $a^1$ or $a^2$, the agent will reach the outlier state $s_{o}$.
     
     Mathematically, $\Pb_h[s_o|s^{i,j},a]=1, \forall a\neq a^1,a^2.$
     \end{itemize}
     
     \item Leaf state $s^{D,j}$ uniformly transits to good state $s_g$ and bad state $s_b$ no matter what action the agent takes.
     
     Mathematically, $\Pb_h[s_g|s^{D,j},a] = \Pb_h[s_b|s^{D,j},a] = \frac{1}{2}, \forall a\in\mathcal{A}.$
     
     \item Good state $s_g$, bad state $s_b$, and outlier state $s_o$ are absorbing states.
 \end{itemize}

Now, we define the features as follows, which are $S$-dimensional vectors.
 
 \begin{align*}
 \begin{array}{lll}
    s_w & \phi_h(s_w,a_w) = (1,0,\mathbf{0}_{S-5},0,0,0)    & \mu_h(s_w) = (\mathbbm{1}_{h\leq \bar{H}},0,\mathbf{0}_{S-5},0,0,0) \\ 
    \vspace{0.2cm}
     &\phi_h(s_w, a) = (0,1,\mathbf{0}_{S-5},0,0), a\neq a_w & \mu_h(s^{1,1}) = (\mathbbm{1}_{h>\bar{H}},1,\mathbf{0}_{S-5},0,0,0)\\
     s^{i,j},i<D &\phi_h(s^{i,j},a^{\omega}) = (0,0,\mathbf{e}_{i+1,2j+\omega-2},0,0,0), \omega=1,2 & \mu_h(s^{k,\ell}) = (0,0,\mathbf{e}_{k,\ell},0,0,0), 1<k\leq D\\
     \vspace{0.2cm}
      & \phi_h(s^{i,j},a)=(0,0,\mathbf{0}_{S-5},1,0,0),a\neq a^1,a^2 & \mu_h(s_o) = (0,0,\mathbf{0}_{S-5},1,0,0) \\
     \vspace{0.2cm}
     s^{D,j} & \phi_h(s^{D,j}, a) = (0,0,\mathbf{0}_{S-5},0,\frac{1}{2},\frac{1}{2}) &\mu_h(s_g) = (0,0,\mathbf{0}_{S-5},0,1,0)\\
     s_g,s_b &\phi_h(s_g,a) = \mu_h(s_g),~~\phi_h(s_b,a) = \mu_h(s_b) & \mu_h(s_b) = (0,0,\mathbf{0}_{S-5},0,0,1)\\
     s_o& \phi_h(s_o,a)=\mu_h(s_o),
\end{array}
\end{align*}
    where $\mathbf{0}_{S-5} \in \Rb^{S-5}$ denotes the $S-5$ dimension vector with all zeros and $\mathbf{e}_{i,j} \in \Rb^{S-5}$ denotes the one-hot vector that is zero everywhere except the $(2^{i-1}+j-2)$-th coordinate. Here $2\leq i\leq D, 1\leq j\leq 2^{i-1}.$

For each $(h^*, \ell^*, a^*) \in \{1+D,\ldots,\bar{H}+D\} \times [2^{D-1}] \times \Ac$, we define an MDP $\mathcal{M}_{(h^*,\ell^*,a^*)}$ through $\Mc_0$. Specifically, the only difference of $\mathcal{M}_{(h^*,\ell^*,a^*)}$ from $\Mc_0$ is that the transition probability from the leaf state $s^{D,\ell^*}$ and action $a^*$ to the good state $s_g$ increases $\epsilon_0$, i.e. $\Pb_{h^*}[s_g|s^{D,\ell^*}, a^*] = \frac{1}{2}+\epsilon_0,$ and $\Pb_{h^*}[s_b|s^{D,\ell^*},a^*] = \frac{1}{2}-\epsilon_0$, where $\epsilon_0$ will be specified later. We note that the features of $\Mc_{(h^*,\ell^*,a^*)}$ are the same as those of $\Mc_0$ except that $\phi_{h^*}(s^{D,\ell^*},a^*) = (0,0,\mathbf{0}_{S-5},0,\frac{1}{2}+\epsilon_0,\frac{1}{2}-\epsilon_0)$.

We remark here that the cardinality $K$ of $\Ac$ can be arbitrarily large, so that the resulting lower bound will hold for both $d \leq K$ and $d \geq K$ regimes. In addition, although in our hard instances, $d=S$, it is straightforward to generalize it to the regime with $S>d$ if we set the outlier state $S_o$ to be a set of outlier states $\mathcal{S}_o$.

\textbf{Definition of reward}: the reward can only be attained in two special states: the good state $s_g$ and the outlier state $s_o$ at the last stage $H$. 
\begin{align*}
	\forall \Sc \in \Ac, a \in \Ac, r_h(s,a)= \mathbbm{1}_{\{s=s_g,h=H\}}+\frac{1}{2}\mathbbm{1}_{\{s=s_o,h=H\}},
\end{align*}
and $r_h(s,a)$ still belongs to $[0,1]$.
\subsection{{Step 2: Analysis of Hard MDP Instances}}


%

\begin{proof}[Proof of \Cref{th:lowerbound}]
Let $\epsilon_0=2\epsilon$.

For any MDP $\Mc_{h^*,a^*}$, the optimal policy is to take action $a_w$ to stay at state $s_w$ until stage $h^*-D$, and then take the corresponding action to the only state $s^{D,\ell^*}$ at stage $h^*$. At state $s^{D,\ell^*}$, the agent takes the only optimal action $a^*$. The optimal value function $V_{\mathcal{M}_{(h^*,\ell^*,a^*)}}^*=\frac{1}{2}+\epsilon_0$, and the value function of the output policy of $\mathtt{Alg}$ is given by
	\begin{align}
		V^{\hat{\pi}_\tau}_{\mathcal{M}_{(h^*,\ell^*,a^*)}}=\frac{1}{2}+\epsilon_0\Pb^{\hat{\pi}_\tau}_{\mathcal{M}_{(h^*,\ell^*,a^*)}}[s_{h^*}=s^{\ell^*},a_{h^*}=a^*], \label{Eq: lb-4}
	\end{align} 
where $\Pb^{\hat{\pi}_\tau}_{\mathcal{M}_{(h^*,\ell^*,a^*)}}$ is the probability distribution over the states and actions $(s_h,a_h)$ following the Markov policy $\hat{\pi}_\tau$ in the MDP ${\mathcal{M}_{(h^*,\ell^*,a^*)}}$. We remark that the reward of outlier state are specially designed to be $1/2$ at stage $H$ to make \Cref{Eq: lb-4} hold for policy $\hat{\pi}_\tau$ falling into the outlier state. 

Hence,
\begin{align*}
	V_{\mathcal{M}_{(h^*,\ell^*,a^*)}}^*-V^{\hat{\pi}_\tau}_{\mathcal{M}_{(h^*,\ell^*,a^*)}} 
	&= \epsilon_0(1-\Pb^{\hat{\pi}_\tau}_{\mathcal{M}_{(h^*,\ell^*,a^*)}}[s_{h^*}=s^{\ell^*},a_{h^*}=a^*])\\
	& = 2\epsilon(1-\Pb^{\hat{\pi}_\tau}_{\mathcal{M}_{(h^*,\ell^*,a^*)}}[s_{h^*}=s^{\ell^*},a_{h^*}=a^*]).
\end{align*} 
and 
\begin{align}
	V_{\mathcal{M}_{(h^*,\ell^*,a^*)}}^*-V^{\hat{\pi}_\tau}_{\mathcal{M}_{(h^*,\ell^*,a^*)}} \leq \epsilon \Leftrightarrow \Pb^{\hat{\pi}_\tau}_{\mathcal{M}_{(h^*,\ell^*,a^*)}}[s_{h^*}=s^{\ell^*},a_{h^*}=a^*] \geq \frac{1}{2} \label{Eq: lb-3}.
\end{align}

The transitions of all MDPs are the same when the leaves states are reached. We define the event 
\begin{align*}
    \varepsilon^\tau_{(h^*,\ell^*,a^*)}=\left\{\Pb^{\hat{\pi}_\tau}_{\mathcal{M}_{(h^*,\ell^*,a^*)}}[s_{h^*}=s^{\ell^*},a_{h^*}=a^*] \geq \frac{1}{2}\right\}.
\end{align*}
From \Cref{Eq: lb-3}, the event is equal to the event $\{V_{\mathcal{M}_{(h^*,\ell^*,a^*)}}^*-V^{\hat{\pi}_\tau}_{\mathcal{M}_{(h^*,\ell^*,a^*)}} \leq \epsilon\}$. As a result,
\begin{align*}
    \Pb_{(h^*,\ell^*,a^*)}\left[\varepsilon^\tau_{(h^*,\ell^*,a^*)}\right]= \Pb_{(h^*,\ell^*,a^*)}\left[V_{\mathcal{M}_{(h^*,\ell^*,a^*)}}^*-V^{\hat{\pi}_\tau}_{\mathcal{M}_{(h^*,\ell^*,a^*)}} \leq \epsilon\right] \geq 1-\delta.
\end{align*}

Recall that 
$N^\tau_{(h^*,\ell^*,a^*)}= \sum_{n=1}^{\tau}\mathbbm{1}_{\left\{(s_{h^*}^n,s_{h^*}^n)=(s^{\ell^*},a^*)\right\}}$
such that $\sum_{(h^*,\ell^*,a^*)}N^\tau_{(h^*,\ell^*,a^*)}\leq\tau$. This inequality holds because the agent is likely to fall into the outlier state $s_{o}$. We denote $\Pb_0$ and $\Eb_0$ to be with respect to $\Mc_0$. 

Now, we invoke an intermediate result in the proof of Theorem 7 in \citet{domingues2020episodic} to conclude that
\begin{align*}
     \Eb_{0}\left[N^\tau_{(h^*,\ell^*,a^*)}\right]
     \geq \frac{1}{16\epsilon^2}\left[\left(1-\Pb_{0}\left[\{\varepsilon^\tau_{(h^*,\ell^*,a^*)}\}\right]\right)\log\left(\frac{1}{\delta}\right)-\log(2)\right].
\end{align*}
Summing over all $(h^*,\ell^*,a^*)$, we have 
\begin{align}
    \Eb_{0}[\tau] &\geq \sum_{(h^*,\ell^*,a^*)}\Eb_{0}\left[N^\tau_{(h^*,\ell^*,a^*)}\right]\nonumber\\
    & \geq \frac{1}{16\epsilon^2}\left[\left(\bar{H}LK-\sum_{(h^*,\ell^*,a^*)}\Pb_{0}\left[\varepsilon^\tau_{(h^*,\ell^*,a^*)}\right]\right)\log(\frac{1}{\delta})-\bar{H}LK\log2\right]. \label{Eq: LB-1}
\end{align}
Notice that 
\begin{align}
    \sum_{(h^*,\ell^*,a^*)}\Pb_{0}\left[\varepsilon^\tau_{(h^*,\ell^*,a^*)}\right]=\Eb_{0}\left[\sum_{(h^*,\ell^*,a^*)}\mathbbm{1}_{\{\Pb^{\hat{\pi}_\tau}_{\mathcal{M}_{(h^*,\ell^*,a^*)}}[s_{h^*}=s^{\ell^*},a_{h^*}=a^*] \geq \frac{1}{2}\}}\right] \leq 1. \label{Eq: LB-2}
\end{align}
Substituting \Cref{Eq: LB-2} into \Cref{Eq: LB-1} yields
\begin{align*}
    \Eb_{0}[\tau] & \geq \frac{1}{16\epsilon^2}\left[\left(\bar{H}LK-1\right)\log(\frac{1}{\delta})-\bar{H}LK\log2\right] \\
   & \geq \frac{1}{32\epsilon^2}\bar{H}LK\log(\frac{1}{\delta}),
\end{align*}
where we use the fact that $\delta<1/16$. With the assumption of $K\geq 3, S\geq 6$, we have $d=S$. Taking $\bar{H}=\frac{H}{3}$ and with the assumption of $D \leq H/3$, we have 
\begin{align*}
    \Eb_{0}[\tau]=\Omega\left(\frac{HdK}{\epsilon^2}\log(\frac{1}{\delta})\right).
\end{align*}
Then following the analysis similar to that for Corollary 8 in \cite{domingues2020episodic}, with probability at least $1-\delta$, the number of iterarion is at least
\begin{align*}
    \Omega\left(\frac{HdK}{\epsilon^2}\log(\frac{1}{\delta})\right).
\end{align*}
\end{proof}

\section{\Cref{alg: feature}: RepLearn and Proof of \Cref{Thm: individual representation}}\label{sec: app feature learning}
We first present the full algorithm in \Cref{sec: feature learning} below as \Cref{alg: feature}.
\begin{algorithm}
	\begin{algorithmic}[1]
		\caption{{\bf RepLearn}: Representation Learning in Planning Phase of RAFFLE}\label{alg: feature}
		\STATE {\bfseries Input:} {Sample size $N_f$, state-action pair distributions $\{q_h\}_{h=1}^{H}$, special designed reward function $\{r^{h,t}\}_{h \in [H], t \in [T]}$ and policy $\{\pi^t\}_{t \in [T]}$, and estimated transition kernel $\hP$} from the output of \Cref{Algorithm: RAFFLE}, model class: $\Phi$.
		\STATE Initialize $\mathcal{D}_h^{0,0}=\emptyset$.
		\FOR{$n=1,\ldots,N_f$}
		\FOR{$h=1,\ldots,H$}
		\FOR{$t=1,\ldots,T$}
		\STATE Choose $(s_h^{n,t},a_h^{n,t})\sim q_h$ and add $(s_h^{n,t},a_h^{n,t})$ to dataset $\Dc_h^{n,t}=\Dc_{h}^{n-1,t} \cup (s_h^{n,t},a_h^{n,t})$.
		\ENDFOR
		\ENDFOR
		\ENDFOR
		\FOR{$h=1,\ldots,H$}
		\STATE Learn $(\tphi_h,\tilde{w}_h^1,\ldots,\tilde{w}_h^T)$ as in \Cref{ineq: alg-linear regression}.
		\ENDFOR
		\STATE \textbf{Output:} $\tphi=\{\tphi_h\}_{h\in[H]}$.
	\end{algorithmic}
\end{algorithm}

\subsection{Supporting Lemmas}
We first show that $Q^{\pi}_{\hP,h, r}$ can approximate $Q^{\pi}_{\Ps,h, r}$ well over distribution $\{q_h\}_{h \in [H]}$ by following two lemmas.
\begin{lemma}\label{lemma: Upstream Q-function bound}
	Given any $\delta\in(0,1)$. Let $\hP$ be the output of \Cref{Algorithm: RAFFLE}, for any policy $\pi$ and rewards $r$, with probability at least $1-\delta$, we have
	\begin{align*}
		\Eb_{(s_h,a_h) \sim (\hP, {\pi})}&\left[\left|Q^{\pi}_{P^{\star},h, r}(s_h,a_h) - Q^{\pi}_{\hP,h, r}(s_h,a_h)\right|\right]
		{\leq} \epsilon\\
		\Eb_{(s_h,a_h) \sim (P^{\star}, {\pi})}&\left[\left|Q^{\pi}_{P^{\star},h, r}(s_h,a_h) - Q^{\pi}_{\hP,h, r}(s_h,a_h)\right|\right]
		{\leq} \epsilon.
	\end{align*}
\end{lemma}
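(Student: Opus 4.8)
The plan is to prove Lemma~\ref{lemma: Upstream Q-function bound} by reducing the $Q$-function gap to a sum of one-step transition errors along the trajectory, and then bounding each of those errors using the system identification guarantee from \Cref{Theorem: system identification sample complexity}. Concretely, I would first write the standard simulation-lemma telescoping identity: for any policy $\pi$ and reward $r$, the difference $Q^{\pi}_{P^{\star},h,r}(s_h,a_h)-Q^{\pi}_{\hP,h,r}(s_h,a_h)$ equals (up to sign) the expected cumulative mismatch $\sum_{h'=h}^{H}\Eb\big[\langle (P^{\star}_{h'}-\hP_{h'})(\cdot\,|\,s_{h'},a_{h'}),\, V^{\pi}_{h'+1,\cdot,r}\rangle\big]$, where the expectation is taken over trajectories rolled out under either $\hP$ or $P^{\star}$ starting from step $h$. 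Since the value function is bounded in $[0,1]$ (by the normalization $\sum_h r_h\le 1$), each inner product is bounded by the total variation distance $f^{(\cdot)}_{h'}(s_{h'},a_{h'})= \|P^{\star}_{h'}(\cdot|s_{h'},a_{h'})-\hP_{h'}(\cdot|s_{h'},a_{h'})\|_{TV}$.

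Next I would invoke the appropriate bound depending on which rollout distribution appears. For the first inequality, the expectation over $(s_h,a_h)\sim(\hP,\pi)$ followed by continued rollout under $\hP$ gives a quantity of the form $\sum_{h'} \Eb_{(s_{h'},a_{h'})\sim(\hP,\pi)}[f^{(n_\epsilon)}_{h'}(s_{h'},a_{h'})]$, and this is exactly what is controlled in the proof of \Cref{prop:step1_appendix} (Step~1 establishes $|V^{\pi}_{P^\star,r}-V^{\pi}_{\hP^{(n)},r}|\le \hV^{\pi}_{\hP^{(n)},f^{(n)}}$, and the termination criterion plus \Cref{prop:step3_appendix} force this to be $\le\epsilon/2\le\epsilon$). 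More precisely, $\hV^{\pi}_{\hP^{\epsilon},f^{\epsilon}}=\Eb_{(s_{h'},a_{h'})\sim(\hP^{\epsilon},\pi)}[\sum_{h'} f^{\epsilon}_{h'}]$ after unrolling the truncated Bellman recursion, and by \Cref{corollary:V_f_n<V_b_n}-type reasoning together with \Cref{prop:step3_appendix} this is bounded by $\hV^{\pi}_{\hP^{\epsilon},\hb^{\epsilon}}+\sqrt{K\zeta_{n_\epsilon}}\le\epsilon/2$. For the starting step $h$ rather than step $1$, the same telescoping works since we only need the tail sum from $h$ onwards, which is dominated by the full sum. For the second inequality, with the expectation over $(s_h,a_h)\sim(P^{\star},\pi)$ and rollout under $P^{\star}$, I would instead invoke \Cref{lemma: V_b bound TV}, which directly gives $\Eb_{(s_h,a_h)\sim(P^\star,\pi)}[f^{(n)}_h(s_h,a_h)]\le 2\sqrt{K\zeta_n}+2\hV^{\pi_n}_{\hP^{(n)},\hb^{(n)}}\le\epsilon$ at $n=n_\epsilon$; summing over $h$ needs a little care (the per-step bound already sums to $\le\epsilon$ via the value-function interpretation, so I would route it through $V^{\pi}_{P^\star,f^{\epsilon}}\le 2\hV^{\pi_{n_\epsilon}}_{\hP^{\epsilon},\hb^{\epsilon}}+2\sqrt{K\zeta_{n_\epsilon}}$, exactly the bound appearing just before the contradiction argument in \Cref{prop:step2_appendix}).

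The main obstacle I anticipate is bookkeeping the ``mixed'' expectation carefully: the telescoping identity naturally produces inner expectations over states visited under $\hP$ and outer ones under $\pi$ (or vice versa), and one must make sure that the object actually being bounded matches the truncated value function $\hV^{\pi}_{\hP^{(n)},f^{(n)}}$ as defined in \Cref{ineq: def of hV} — in particular, the truncation $\min\{1,\cdot\}$ means the cumulative sum of $f$-rewards is an \emph{upper bound} for the raw value difference (used in Step~1 of \Cref{prop:step1_appendix}), which is exactly the direction we want. A second minor subtlety is that \Cref{lemma: Upstream Q-function bound} is stated with error $\epsilon$ while the system identification guarantee in \Cref{Theorem: system identification sample complexity} also gives error $\epsilon$; I would simply note that one can run RAFFLE to accuracy $\epsilon$ (or $\epsilon/2$ and absorb constants) so that both displayed inequalities hold, since all the intermediate bounds ($\hV^{\pi}_{\hP^{\epsilon},\hb^{\epsilon}}+\sqrt{K\zeta_{n_\epsilon}}\le\epsilon/2$ and $2\hV^{\pi_{n_\epsilon}}_{\hP^{\epsilon},\hb^{\epsilon}}+2\sqrt{K\zeta_{n_\epsilon}}\le\epsilon$) are guaranteed by the termination condition in line~15 of \Cref{Algorithm: RAFFLE}. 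No new probabilistic machinery is needed beyond the event $\mathcal{E}$ of \Cref{lemma:high_prob_event}, on which all the cited bounds hold with probability $1-\delta$.
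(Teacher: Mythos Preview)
Your treatment of the first inequality is essentially the paper's argument: both routes use the pointwise bound $|Q^{\pi}_{P^{\star},h,r}-Q^{\pi}_{\hP,h,r}|\le \hat{Q}^{\pi}_{h,\hP,\hat f}$ from \Cref{eqn:lowrank:hatQ-Q<hatQ}, take expectation under $(\hP,\pi)$, observe that $\Eb_{(s_h,a_h)\sim(\hP,\pi)}[\hat Q^{\pi}_{h,\hP,\hat f}]\le \hat V^{\pi}_{\hP,\hat f}$, and finish with $\hat V^{\pi}_{\hP,\hat f}\le \hat V^{\pi}_{\hP,\hat b}+\sqrt{K\zeta_{n_\epsilon}}\le\epsilon/2$. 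One minor correction: your equality ``$\hV^{\pi}_{\hP^{\epsilon},f^{\epsilon}}=\Eb_{(\hP^{\epsilon},\pi)}[\sum_{h'} f^{\epsilon}_{h'}]$'' is false, since $\hat V$ is the \emph{truncated} value; but as you note later, truncation only helps for upper-bounding the value gap, so the argument survives.

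Your route to the second inequality has a genuine gap. After telescoping under $P^{\star}$ you must control $\sum_{h'\ge h}\Eb_{(s_{h'},a_{h'})\sim(P^{\star},\pi)}[\hat f_{h'}]\le V^{\pi}_{P^{\star},\hat f}$ for an \emph{arbitrary} policy $\pi$. The bound you invoke, ``$V^{\pi}_{P^{\star},f^{\epsilon}}\le 2\hV^{\pi_{n_\epsilon}}_{\hP^{\epsilon},\hb^{\epsilon}}+2\sqrt{K\zeta_{n_\epsilon}}$,'' does not appear in \Cref{prop:step2_appendix}: that proposition only shows $\hV^{\pi_n}_{\hP^{(n)},\hb^{(n)}}\le V^{\pi_n}_{P^{\star},\hb^{(n)}}+V^{\pi_n}_{P^{\star},f^{(n)}}$, i.e.\ the inequality in the opposite direction and only for the exploration policy $\pi_n$. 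Summing the per-step bound of \Cref{lemma: V_b bound TV} over $h'$ gives $H\epsilon$, not $\epsilon$, and this factor-$H$ loss cannot be closed along your line (the reward $\hat f$ does not satisfy $\sum_h \hat f_h\le 1$, so Step~1 of \Cref{prop:step1_appendix} with $r=\hat f$ does not give the untruncated $V^{\pi}_{P^{\star},\hat f}$ close to $V^{\pi}_{\hP,\hat f}$ either).

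The paper avoids the $P^{\star}$-telescoping altogether. It writes
\[
\Eb_{(s_h,a_h)\sim(P^{\star},\pi)}[g]\ \le\ \Eb_{(s_h,a_h)\sim(\hP,\pi)}[g]\ +\ \Bigl|\Eb_{(s_h,a_h)\sim(P^{\star},\pi)}[g]-\Eb_{(s_h,a_h)\sim(\hP,\pi)}[g]\Bigr|,
\]
with $g=|Q^{\pi}_{P^{\star},h,r}-Q^{\pi}_{\hP,h,r}|\in[0,1]$ a fixed function of $(s_h,a_h)$. The first term is the already-proved first inequality ($\le\epsilon/2$). For the change-of-measure term, treat $g$ as a reward concentrated at step $h$ (zero elsewhere, so $\sum_{h'}r_{h'}\le 1$); then Step~1 of \Cref{prop:step1_appendix} gives the difference $\le \hat V^{\pi}_{\hP,\hat f}\le\epsilon/2$. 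This yields $\epsilon$ with no sum over $h$.
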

\begin{proof}
	Define $\hat{f}_h(s_h,a_h)=\left\|\hP_h(\cdot|s_h,a_h)-\Ps(\cdot|s_h,a_h)\right\|_{TV}$ and $\hat{f}$ is a collection of all $\hat{f}_h$, i.e. $\left\{\hat{f}_h\right\}_{h\in [H]}$. 		\begin{align}
		&\Eb_{(s_h,a_h) \sim (\hP, {\pi})}\left[\left|Q^{\pi}_{P^{\star},h, r}(s_h,a_h) - Q^{\pi}_{\hP,h, r}(s_h,a_h)\right|\right]\nonumber\\
		&\quad \overset{(\romannumeral1)}{\leq} \Eb_{(s_h,a_h) \sim (\hP, {\pi})}\left[ \hat{Q}_{h,\hP,\hat{f}}^{\pi}(s_h,a_h)\right]\nonumber\\
		& \quad \overset{(\romannumeral2)}{\leq} \hat{V}^{\pi}_{\hP,\hat{f}}\nonumber\\
		& \quad \overset{\RM{3}}{\leq} \epsilon/2 \label{ineq: FL1-inq1},
	\end{align}
	where $\RM{1}$ follows from \Cref{eqn:lowrank:hatQ-Q<hatQ}, $\RM{2}$ follows from the definition of $\hat{V}$ and $\hat{Q}$, and $\RM{3}$ follows from the proof of \Cref{thm1: reward-free sample complexity}.
	
	Then for the second inequality, 
	\begin{align*}
		&\mathop{\Eb}_{(s_h, a_h) \sim (\Ps,\pi)}\left[\left|Q^{\pi}_{P^{\star},h, r}(s_h,a_h) - Q^{\pi}_{\hP,h, r}(s_h,a_h)\right|\right]\\
		& \quad \leq  \mathop{\Eb}_{(s_h, a_h) \sim (\hP,\pi)}\left[\left|Q^{\pi}_{P^{\star},h, r}(s_h,a_h) - Q^{\pi}_{\hP,h, r}(s_h,a_h)\right|\right]\\
		& \quad \scriptstyle +\left| \mathop{\Eb}_{(s_h,a_h) \sim (\Ps,\pi)}\left[\left|Q^{\pi}_{P^{\star},h, r}(s_h,a_h) - Q^{\pi}_{\hP,h, r}(s_h,a_h)\right|\right]-\mathop{\Eb}_{(s_h,a_h) \sim (\hP,\pi)}\left[\left|Q^{\pi}_{P^{\star},h, r}(s_h,a_h) - Q^{\pi}_{\hP,h, r}(s_h,a_h)\right|\right]\right|\\
		& \quad \overset{(\romannumeral1)}{\leq} \epsilon/2 + V_{\hP,\hat{f}}^{\pi}\\
		& \quad \overset{(\romannumeral2)}{\leq} \epsilon,
	\end{align*}
	where the first term in $(\romannumeral1)$ is due to \Cref{ineq: FL1-inq1} and \Cref{ineq:Step_1_Bound}, the second term in $(\romannumeral1)$ is due to Step 1 in \Cref{prop:step1_appendix} and $(\romannumeral2)$ follows from the proof of \Cref{thm1: reward-free sample complexity}.
\end{proof}

\begin{lemma}\label{lemma: Target Q function guarantee}
	Given any $\delta,\epsilon\in(0,1)$ and the output of \Cref{Algorithm: RAFFLE}, under Assumption \ref{assumption: reachability}, for any policy $\pi$ and rewards $r$, let the input distributions $\{q_h\}_{h=1}^{H}$ are bounded with constant $C_B$. Denote $C_{\mathrm{min}}=\frac{C_B}{\eta_{\mathrm{min}}}$. Then with probability at least $1-\delta$, for each $h \in [H]$,  $\Eb_{(s_h,a_h)\sim q_h}\left[\left|Q^\pi_{P^{\star},h, r}(s_h,a_h) - Q^{\pi}_{\hP,h, r}(s_h,a_h)\right|\right]\leq \epsilon C_\mathrm{min}$.
\end{lemma}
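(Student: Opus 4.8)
The plan is to obtain this as a corollary of \Cref{lemma: Upstream Q-function bound} via a reachability-based change of measure, together with the elementary fact that the action-value function at step $h$ does not depend on what the policy does before step $h$. As recorded in the discussion preceding the lemma, boundedness of $q_h$ by $C_B$ and \Cref{assumption: reachability} yield the pointwise domination $q_h(s,a)\le C_{\mathrm{min}}\,\Pb_h^{\pi^0}(s,a)$ with $C_{\mathrm{min}}=C_B/\eta_{\mathrm{min}}$, where $\Pb_h^{\pi^0}$ is the step-$h$ state-action occupancy of $\pi^0$ under $P^\star$. Writing $g_h(s,a):=\big|Q^\pi_{P^\star,h,r}(s,a)-Q^\pi_{\hP,h,r}(s,a)\big|\ge 0$, this domination immediately gives
\[
\Eb_{(s_h,a_h)\sim q_h}\!\big[g_h(s_h,a_h)\big]\ \le\ C_{\mathrm{min}}\,\Eb_{(s_h,a_h)\sim(P^\star,\pi^0)}\!\big[g_h(s_h,a_h)\big],
\]
so it suffices to show the second expectation is at most $\epsilon$.

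To do so, I would introduce the (generally non-stationary) policy $\hat\pi$ that plays $\pi^0$ on steps $1,\dots,h$ and plays $\pi$ on steps $h+1,\dots,H$. Since $\hat\pi$ agrees with $\pi^0$ on the first $h$ steps, the law of $(s_h,a_h)$ under $(P^\star,\hat\pi)$ is exactly the step-$h$ marginal of $(P^\star,\pi^0)$; and since, for a fixed transition kernel $P$, the map $\varpi\mapsto Q^{\varpi}_{P,h,r}$ depends on the policy $\varpi$ only through its restriction to steps $h+1,\dots,H$ — where $\hat\pi$ equals $\pi$ — we have the identities of functions $Q^{\hat\pi}_{P^\star,h,r}=Q^\pi_{P^\star,h,r}$ and $Q^{\hat\pi}_{\hP,h,r}=Q^\pi_{\hP,h,r}$. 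Combining these two observations,
\[
\Eb_{(s_h,a_h)\sim(P^\star,\pi^0)}\!\big[g_h(s_h,a_h)\big]=\Eb_{(s_h,a_h)\sim(P^\star,\hat\pi)}\!\Big[\big|Q^{\hat\pi}_{P^\star,h,r}(s_h,a_h)-Q^{\hat\pi}_{\hP,h,r}(s_h,a_h)\big|\Big],
\]
and the right-hand side is precisely the quantity bounded by the second inequality of \Cref{lemma: Upstream Q-function bound}, instantiated with the policy $\hat\pi$; this instantiation is legitimate because that lemma holds for every, possibly non-stationary, policy. Hence the right-hand side is $\le\epsilon$, which chained with the first display gives $\Eb_{(s_h,a_h)\sim q_h}[g_h(s_h,a_h)]\le \epsilon\,C_{\mathrm{min}}$. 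A union bound over $h\in[H]$, performed under the single high-probability event already underlying RAFFLE's guarantees (and hence \Cref{lemma: Upstream Q-function bound}), yields the claim for all $h$ simultaneously.

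The only step needing any care is the policy-splicing identity: it is what converts the reachability comparison between $q_h$ and $\Pb_h^{\pi^0}$ into a bound stated through a genuine policy-induced trajectory distribution, so that \Cref{lemma: Upstream Q-function bound} can be quoted verbatim rather than applied to the artificial input distribution $q_h$. Everything else — the change of measure and the union bound — is routine, and I do not expect a genuine obstacle in completing the argument.
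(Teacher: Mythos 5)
Your proposal is correct and follows essentially the same route as the paper: dominate $q_h$ by $C_{\mathrm{min}}\,\Pb^{\pi^0}_h$ via \Cref{assumption: reachability} and boundedness of $q_h$, change measure, and then invoke the second inequality of \Cref{lemma: Upstream Q-function bound}. The one place you go beyond the paper is the policy-splicing step: the paper applies \Cref{lemma: Upstream Q-function bound} with the expectation taken under $(P^\star,\pi^0)$ while the $Q$-functions carry the superscript $\pi$, which is not literally the statement of that lemma (there the sampling policy and the $Q$-superscript coincide); your construction of $\hat\pi$ (play $\pi^0$ up to step $h$, then $\pi$), together with the observation that $Q^{\varpi}_{P,h,r}$ depends on $\varpi$ only through steps $h+1,\dots,H$, closes this small gap cleanly and lets the upstream lemma be quoted verbatim. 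This is a legitimate refinement rather than a different method, and the rest of your argument (change of measure, union bound over $h$ under the single event $\Ec$) matches the paper.
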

\begin{proof}
First, together with Assumption \ref{assumption: reachability}, for any $(s,a) \in \Sc\times\Ac$, we have $q_h(s,a) \leq C_\mathrm{min}\Pb^{\pi^0}_h(s,a)$, then
	\begin{align*}
		& \Eb_{(s_h,a_h)\sim q_h}\left[\left|Q^\pi_{P^{\star},h, r}(s_h,a_h) - Q^{\pi}_{\hP,h, r}(s_h,a_h)\right|\right]\\
		&\quad=\int q_h(s_h,a_h)\left|Q^\pi_{P^{\star},h, r}(s_h,a_h) - Q^{\pi}_{\hP,h, r}(s_h,a_h)\right|ds_hda_h\\
		&\quad\overset{(\romannumeral1)}{=}\int C_\mathrm{min}{\Pb^{\pi^0}_h(s_h)}\left|Q^\pi_{P^{\star},h, r}(s_h,a_h) - Q^{\pi}_{\hP,h, r}(s_h,a_h)\right|ds_hda_h\\
		&\quad =C_\mathrm{min}\mathop{\Eb}_{(s_h,a_h) \sim (P^{\star}, {\pi^0})\atop  }\left[\left|Q^\pi_{P^{\star},h, r}(s_h,a_h) - Q^{\pi}_{\hP^,h, r}(s_h,a_h)\right|\right]\\
		&\quad \overset{(\romannumeral2)}{\leq} \epsilon C_\mathrm{min},
	\end{align*}
	where $(\romannumeral1)$ follows Assumption \ref{assumption: reachability} and $(\romannumeral2)$ follows \Cref{lemma: Upstream Q-function bound}.
\end{proof}
The lemma above shows that for any reward $r$, $Q^{\pi}_{\hP,h, r}(s_h,a_h)$ can be the target of $Q^{\pi}_{P^{\star},h, r}(s_h,a_h)$ when $(s_h,a_h)$ are chosen from given distribution $q_h$.



We then show that for any $h$, when reward $r$ is set to be zero at step $h$, $Q^{\pi}_{\Ps,h, r}$ has a linear structure w.r.t $\sphi_h$.
\begin{lemma}
	For any $h \in [H]$, policy $\pi$ and given $(s_h,a_h) \in \Sc \times \Ac$, given any $r$ such that $r$ is set to be zero at step $h$, i.e. $r_h=0$, then $Q^{\pi}_{P^{\star},h,r}(s_h,a_h)$ is linear with respect to $\sphi(s_h,a_h)$, i.e. there exist a $w_h^\star$ such that $Q^{\pi}_{P^{\star},h,r}(s_h,a_h)=\left\langle \Pshi_h(s_h,a_h),w_h^{\star}\right\rangle$. 
\end{lemma}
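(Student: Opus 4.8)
The plan is to derive the claim directly from the low-rank decomposition of $P^\star_h$ in \Cref{definition: Low_rank} together with the hypothesis $r_h\equiv 0$. First I would invoke the Bellman equation at step $h$,
\[
Q^{\pi}_{P^{\star},h,r}(s_h,a_h) \;=\; r_h(s_h,a_h) \;+\; \Eb_{s_{h+1}\sim P^\star_h(\cdot\,|\,s_h,a_h)}\!\left[V^{\pi}_{h+1,P^\star,r}(s_{h+1})\right],
\]
and note that since $r_h\equiv 0$ the first term vanishes, leaving only the expected next-step value.

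Next I would substitute $P^\star_h(s_{h+1}\,|\,s_h,a_h)=\langle \Pshi_h(s_h,a_h),\smu_h(s_{h+1})\rangle$ and use linearity of the integral to pull the state-action feature out:
\[
Q^{\pi}_{P^{\star},h,r}(s_h,a_h)
= \int_{\Sc}\big\langle \Pshi_h(s_h,a_h),\smu_h(s_{h+1})\big\rangle\, V^{\pi}_{h+1,P^\star,r}(s_{h+1})\,ds_{h+1}
= \big\langle \Pshi_h(s_h,a_h),\, w_h^{\star}\big\rangle,
\]
where $w_h^{\star}:=\int_{\Sc}\smu_h(s_{h+1})\,V^{\pi}_{h+1,P^\star,r}(s_{h+1})\,ds_{h+1}\in\Rb^{d}$. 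Crucially, $w_h^{\star}$ depends only on $r$, on $P^\star$ (through $\smu_h$ and the downstream dynamics entering $V^{\pi}_{h+1}$), and on $\pi$, but not on the conditioning pair $(s_h,a_h)$; this is exactly the vector asserted in the statement.

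Finally I would record the norm bound that is used downstream: because $\sum_{h'}r_{h'}\le 1$, the value function $V^{\pi}_{h+1,P^\star,r}$ is $[0,1]$-valued, so the normalization in \Cref{definition: Low_rank}, namely $\|\int \smu_h(s)g(s)\,ds\|_2\le\sqrt{d}$ for every $g:\Sc\to[0,1]$, yields $\|w_h^{\star}\|_2\le\sqrt{d}$. The identical argument applied to the learned kernel $\hP$ and reward $r^{h,t}$ gives $Q^{\pi^t}_{\hP,h,r^{h,t}}(s_h,a_h)=\langle \hphi_h(s_h,a_h),\hat{w}_h^{t}\rangle$ with $\hat{w}_h^{t}:=\int \hmu_h(s_{h+1})V^{\pi^t}_{h+1,\hP,r^{h,t}}(s_{h+1})\,ds_{h+1}$, which is what makes the regression target in \Cref{ineq: alg-linear regression} realizable by some $\phi\in\Phi$.

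I do not expect any genuine obstacle here: the statement is essentially a one-line consequence of the model definition. The only subtlety worth flagging is that the hypothesis $r_h=0$ is indispensable — without it the additive term $r_h(s_h,a_h)$ is an arbitrary $[0,1]$-valued function of $(s_h,a_h)$ and need not lie in the span of the coordinates of $\Pshi_h$, so the linear representation would fail; it is precisely for this reason that the rewards $r^{h,t}$ in \Cref{alg: feature} are chosen with $r^{h,t}_h=0$.
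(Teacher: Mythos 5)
Your proof is correct and follows essentially the same route as the paper's: apply the Bellman equation at step $h$, drop the reward term using $r_h\equiv 0$, substitute the low-rank factorization $P^\star_h=\langle\Pshi_h,\smu_h\rangle$, and pull $\Pshi_h(s_h,a_h)$ out of the integral to identify $w_h^\star=\int\smu_h(s_{h+1})V^{\pi}_{h+1,P^\star,r}(s_{h+1})\,ds_{h+1}$. The additional observations you record (the $\sqrt{d}$ norm bound, the analogous identity for $\hP$, and the necessity of $r_h=0$) go beyond what the paper writes but are accurate and consistent with how the lemma is used downstream.
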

\begin{proof}
	\begin{align*}
		Q^{\pi}_{P^{\star},h,r}(s_h,a_h)&=r_h(s_h,a_h)+\Eb_{s_{h+1}\sim \Ps(\cdot|s_h,a_h)}\left[V^{\pi}_{\Ps,h+1,r}(s_{h+1})\big|s_h,a_h\right]\\
		&=\int P^{\star}(s_{h+1}|s_h,a_h)V^{\pi}_{\Ps,h+1,r}(s_{h+1})ds_{h+1}\\
		&= \left\langle \Pshi_h(s_h,a_h),\int\mu^{\star}_{h}(s_{h+1})V^{\pi}_{\Ps,h+1,r}(s_{h+1})ds_{h+1}\right\rangle\\
		&=\left\langle \Pshi_h(s_h,a_h),w_h^{\star}\right\rangle,
	\end{align*}
	where $w_h^{\star}=\int\mu^{\star}_{h}(s_{h+1})V^{\pi}_{\Ps,h+1,r}(s_{h+1})ds_{h+1}$.
	
	
	
\end{proof}
\subsection{Proof of \Cref{Thm: individual representation}}	
\begin{proof}[Proof of \Cref{Thm: individual representation}]
	We allow $\phi$ and $Q^{\pi^t}_{\hP,h, r^{h,t}}$ to apply to all the samples in a dataset matrix simultaneously, i.e. $\phi_h(\Dc^{N_f,t}_h) = (\phi_h(s_h^{1,t}),\dots,\phi_h(s_h^{N_f,t}))^\top \in \Rb^{N_f \times d}$ and $Q^{\pi^t}_{\hP,h, r^{h,t}}(\Dc^{N_f,t}_h) = (Q^{\pi^t}_{\hP,h, r^{h,t}}(s_h^{1,t}),\dots,Q^{\pi^t}_{\hP,h, r^{h,t}}(s_h^{N_f,t}))^\top \in \Rb^{N_f}$.
	After we estimating $\tphi_h$ and then taking it as known, from the property of linear regression in \Cref{ineq: alg-linear regression}, for any reward function $r^{h,t}$, any policy $\pi^t$ we got 
	\begin{align*}
		&\tw_h^t=(\tphi_h(\Dc^{N_f,t}_h)^\top\tphi_h(\Dc^{N_f,t}_h))^\dagger\tphi_h(\Dc^{N_f,t}_h)^\top Q^{\pi}_{\hP,h, r^{h,t}}(\Dc^{N_f,t}_h)\\
		&\tphi_h(\Dc^{N_f,t}_h)\tw_h^t=P_{\tphi_h(\Dc^{N_f,t}_h)}Q^{\pi^t}_{\hP,h, r^{h,t}}(\Dc^{N_f,t}_h),
	\end{align*}
	where $P_{\tphi_h(\Dc^{N_f,t}_h)}=\tphi_h(\Dc^{N_f,t}_h)(\tphi_h(\Dc^{N_f,t}_h)^\top\tphi_h(\Dc^{N_f,t}_h))^\dagger\tphi_h(\Dc^{N_f,t}_h)^\top$, represents the projection operator to the column spaces of $\tphi_h(\Dc^{N_f,t}_h)$. Then 
	\begin{align}
		&\sum_{t \in [T]}\norm{P_{\tphi_h(\Dc^{N_f,t}_h)}^\perp Q^{\pi^t}_{\hP,h, r^{h,t}}(\Dc^{N_f,t}_h)}^2\nonumber\\
		& = \sum_{t \in [T]} \norm{\tphi_h(\Dc^{N_f,t}_h)\tw_h^t-Q^{\pi^t}_{\hP,h, r^{h,t}}(\Dc^{N_f,t}_h)}^2\nonumber\\
		&\overset{\RM{1}}{\leq} \sum_{t \in [T]} \norm{\Pshi_h(\Dc^{N_f,t}_h){w_h^t}^{\star}-Q^{\pi^t}_{\hP,h, r^{h,t}}(\Dc^{N_f,t}_h)}^2\nonumber\\
		&=  \sum_{t \in [T]}\sum_{n=1}^{N_f}\left(Q^{\pi^t}_{P^{\star},h, r^{h,t}}(s_h^{n,t},a_h^{n,t})-Q^{\pi^t}_{\hP,h, r^{h,t}}(s_h^{n,t},a_h^{n,t})\right)^2\nonumber\\
		&\overset{(\romannumeral2)}{\leq} N_f \sum_{t \in [T]}\Eb_{(s_h,a_h)\sim q_h}\left[\left(Q^{\pi^t}_{P^{\star},h, r^{h,t}}(s_h^{n,t},a_h^{n,t})-Q^{\pi^t}_{\hP,h, r^{h,t}}(s_h^{n,t},a_h^{n,t})\right)^2\right]+ \sqrt{\frac{TN_f\log{\frac{2}{\delta}}}{2}}\nonumber\\
		&\overset{(\romannumeral3)}{\leq} N_f \sum_{t \in [T]} \Eb_{(s_h,a_h)\sim q_h}\left[\left|Q^{\pi^t}_{P^{\star},h, r^{h,t}}(s_h^{n,t},a_h^{n,t})-Q^{\pi^t}_{\hP,h, r^{h,t}}(s_h^{n,t},a_h^{n,t})\right|\right]+\sqrt{\frac{TN_f\log{\frac{2}{\delta}}}{2}}\nonumber\\
		&\overset{(\romannumeral4)}{\leq} {\epsilon C_{\mathrm{min}}N_f}T+\sqrt{\frac{TN_f\log{\frac{2}{\delta}}}{2}} \label{ineq: FL5-ineq1},
	\end{align}
	where $\RM{1}$ follows from minimality of $\{\tphi_h^t\}_{t \in [T]}$ and $\{\tw_h^t\}_{t \in[T]}$, $(\romannumeral2)$ follows Hoeffding's inequality, $(\romannumeral3)$ follows that $\left|Q^{\pi^t}_{P^{\star},h, r^{h,t}}(s_h^{n,t},a_h^{n,t})-Q^{\pi^t}_{\hP,h, r^{h,t}}(s_h^{n,t},a_h^{n,t})\right|\leq 1$ and $(\romannumeral4)$ follows \Cref{lemma: Target Q function guarantee}.
	
	As a result:
	\begin{align*} &\sum_{t\in[T]}\norm{P_{\tphi_h(\Dc^{N_f,t}_h)}^\perp \Pshi_h(\Dc^{N_f,t}_h){w_h^t}^{\star}}^2\\
		&\quad=\sum_{t\in[T]}\norm{P_{\tphi_h(\Dc^{N_f,t}_h)}^\perp Q^{\pi^t}_{P^{\star},h, r^{h,t}}(\Dc^{N_f,t}_h)}^2\\
		&\quad \leq \sum_{t\in[T]}\left\{ \norm{P_{\tphi_h(\Dc^{N_f,t}_h)}^\perp Q^{\pi^t}_{\hP,h, r^{h,t}}(\Dc^{N_f,t}_h)}^2+\norm{P_{\tphi_h(\Dc^{N_f,t}_h)}^\perp \left(Q^{\pi^t}_{P^{\star},h, r^{h,t}}(\Dc^{N_f,t}_h)-Q^{\pi}_{\hP,h, r^{h,t}}(\Dc^{N_f,t}_h)\right)}^2\right\}\\
		&\quad \overset{(\romannumeral1)}{\leq} \sum_{t\in[T]}{\epsilon C_{\mathrm{min}}N_f}T+\sqrt{\frac{TN_f\log{\frac{2}{\delta}}}{2}}+\sum_{t \in [T]}\sigma_1^2(P_{\tphi_h(\Dc^{N_f,t}_h)}^\perp)\|Q^{\pi^t}_{P^{\star},h, r^{h,t}}(\Dc^{N_f,t}_h))-Q^{\pi^t}_{\hP,h, r^{h,t}}(\Dc^{N_f,t}_h)\|^2\\
		&\quad \overset{(\romannumeral2)}{\leq}  {2\epsilon C_{\mathrm{min}}N_f}T+\sqrt{2TN_f\log{\frac{2}{\delta}}},
	\end{align*}
	where $\RM{1}$ follows from $\norm{Av}_2 \leq \sigma_1(A)\norm{v}_2$ and $\RM{2}$ follows from that $\sigma_1(P_{\tphi_h(\Dc^{N_f,t}_h)}^\perp)\leq 1$ and the process to derive \Cref{ineq: FL5-ineq1}.
	
	We finally use the technique in \cite{DBLP:conf/iclr/DuHKLL21} to derive the super population guarantee. 
	\begin{align*}
		&{2\epsilon C_{\mathrm{min}}N_f}T+\sqrt{2TN_f\log{\frac{2}{\delta}}}\\
		&\quad\geq \sum_{t \in [T]}\norm{P_{\tphi_h(\Dc^{N_f,t}_h)}^\perp \Pshi_h(\Dc^{N_f,t}_h){w^t_h}^{\star}}^2_F\\
		&\quad=\left\|\left(I-\tphi_h(\Dc^{N_f,t}_h)\left(\tphi_h(\Dc^{N_f,t}_h)^\top\tphi_h(\Dc^{N_f,t}_h)\right)^\dagger\tphi_h(\Dc^{N_f,t}_h)^\top\right)\Pshi_h(\Dc^{N_f,t}_h){w^t_h}^{\star}\right\|_F^2\\
		&\quad=\sum_{t\in[T]}({w_h^t}^{\star})^\top\Pshi_h(\Dc^{N_f,t}_h)^\top	\left(I-\tphi_h(\Dc^{N_f,t}_h)(\tphi_h(\Dc^{N_f,t}_h)^\top\tphi_h(\Dc^{N_f,t}_h))^\dagger\tphi_h(\Dc^{N_f,t}_h)^\top\right)\Pshi_h(\Dc^{N_f,t}_h){w_h^t}^{\star}\\
		&\quad=\sum_{t\in[T]}N_f({w_h^t}^{\star})^\top D_{\Dc^{N_f,t}_h}(\Pshi_{h},\tphi_{h}){w_h^t}^{\star}\\
		&\quad\overset{(\romannumeral1)}{\geq} 0.9 \sum_{t\in[T]}N_f({w_h^t}^{\star})^\top D_{q_h}(\Pshi_{h},\tphi_{h}){w_h^t}^{\star}\\
		&\quad= 0.9N_f\sum_{t\in[T]}\left\| D_{q_h}(\Pshi_{h},\tphi_{h})^{1/2}{w_h^t}^{\star}\right\|^2\\
		&\quad\geq 0.9N_f\left\| D_{q_h}(\Pshi_{h},\tphi_{h})^{1/2}W^\star_h\right\|_F^2\\
		&\quad\overset{\RM{2}}{\geq} 0.9N_f\left\| D_{q_h}(\Pshi_{h},\tphi_{h})^{1/2}\right\|_F^2\sigma_d^2( W^\star_h)\\
		& \quad\geq 0.9N_f\left\| D_{q_h}(\Pshi_{h},\tphi_{h})^{1/2}\right\|_F^2\frac{C_D T}{d},
	\end{align*}
	where $\RM{1}$ follows from lemma B.1 in \cite{DBLP:conf/iclr/DuHKLL21} and $N_f$ can be large enough, and 
	$\RM{2}$ follows from that $\norm{AB}_{F}^2 \geq \norm{A}_{F}^2\sigma_{\mathrm{min}}^2(B)$, where $\sigma_{\mathrm{min}}(B)$ denote the smallest singular value of matrix $B$.  
	Then 
	\begin{align*}
		\left\| D_{q_h}(\Pshi_{h},\tphi_{h})^{1/2}\right\|_F^2 \leq
		\frac{2\epsilon d C_\mathrm{min}}{0.9C_D}+\frac{d}{0.9C_D}\sqrt{\frac{2\log{\frac{2}{\delta}}}{TN_f}}.
	\end{align*} 
\end{proof}

\section{More Discussion on Related Work}
In this section, we first summarize the directly comparable work in \Cref{Table: Comparison}. 
\begin{table*}
{\caption{Comparison among provably efficient RL algorithms under low-rank MDPs.}} 
\label{Table: Comparison}
\tabcolsep=2pt
\vskip 0.15in
\begin{center}
	\begin{small}
		\begin{threeparttable}
			\begin{sc}
				\begin{tabular}{lllr}
				\toprule
				Methods & Setting & Sample Complexity   \\
							\midrule
				FLAMBE~\citep{NEURIPS2020_e894d787}    & Low-rank MDP & $\tilde{O}(\frac{H^{22}K^9d^7}{\epsilon^{10}})$\\
				MOFFLE~\citep{modi2021model}    & Low-nonnegative-rank & $\tilde{O}(\frac{H^5d^3_{LV}K^5}{\epsilon^2\eta})$\\
				HOMER~\citep{misra2020kinematic} & Block MDP & $\tilde{O}(d^8H^4K^4(\frac{1}{\epsilon^2}+\frac{1}{\eta^3}))$ \\
				RAFFLE (Ours)    & Low-rank MDP& $\tilde{O}(\frac{H^3d^2K(d^2+K)}{\epsilon^2})$  \\
				\bottomrule
				\end{tabular}
				\end{sc}
				\begin{tablenotes}
				\item[1] We do not include reward-known RL under low-rank MDPs in this table and only focus reward-free RL. The detailed discussion of reward-known RL under low-rank MDPs can be found in \Cref{sec:relatedwork}.
				\end{tablenotes}
				\end{threeparttable}
			\end{small}
		\end{center}
		\vskip -0.1in
	\end{table*}
  \subsection{Low-rank MDPs in extended RL settings}
 Many studies have been developed on various extended low-rank models. \citet{DBLP:journals/corr/abs-2205-13476,uehara2022provably} studied partially observable Markov decision process (POMDP) with latent low-rank structure. \citet{zhan2022pac} studied predictive state
representations model, and 
applied their results to POMDP with latent low-rank structure. \citet{DBLP:journals/corr/abs-2206-05900,agarwal2022provable} studied benefits of multitask representation learning under low-rank MDPs. \citet{huang2022safe} proposed a general safe RL framework and instantiated it to low-rank MDPs. \citet{ren2022spectral} studied reward-known RL under low-rank MDPs and proposed a spectral method to replace the computation oracle. We note that even given those further developments, our results on lower bound and representation learning are still completely new, and our algorithm design and result on sample complexity are so far still the best-known result for standard reward-free low-rank MDPs.

\subsection{{Discussion on optimistic MLE-based approach for different settings}}
There is some concurrent work also using an optimistic MLE-based approach for different settings (POMDP)~\citep{liu2022optimistic,chen2022partially}. We elaborate the key differences between our paper and \cite{liu2022optimistic,chen2022partially} as follows. 
\begin{itemize}
    \item The two POMDP papers mentioned above study reward-known setting, whereas our focus here is on the reward-free setting. 
    \item Although optimistic MLE-based approach is used in both settings, the design of exploration policy in the two settings are different. Our algorithm identifies which estimated model is used for exploration policy design and then calculate the exploration policy based on bonus terms designed for the value function. However, the POMDP papers construct a confidence set about the true model and solve an optimization problem within this generic model set. Such an oracle may not be easy to realize computationally.
    \item Due to the hardness of POMDP, MLE approach only guarantees that the estimated distribution of the trajectories is close to the true one. In contrast, in low-rank MDP we study here, we show that the estimation error for the transition probabilities is controlled at each time step.
\end{itemize}

\section{Auxiliary Lemmas}\label{appx:auxiliary}
Recall $f_h^{(n)}(s,a)=\|\hP_h^{(n)}(\cdot|s,a) - P^\star_h(\cdot|s,a)\|_{TV}$ represents the estimation error in the $n$-th iteration at step $h$, given state $s$ and action $a$, in terms of the total variation distance. By using Theorem 21 in \citet{NEURIPS2020_e894d787}, we are able to guarantee that under all exploration policies, the estimation error can be bounded with high probability. 
\begin{lemma}\label{lemma:MLE}
	{\rm(MLE guarantee).} Given $\delta\in(0,1)$, we have the following inequality holds for any $n,h\geq 2$ with probability at least $1-\delta/2$:
	\begin{align*}
		\sum_{\tau=0}^{n-1} \mathop{\Eb}_{s_{h-1}\sim (P^\star,\pi_\tau)\atop {(a_{h-1},a_h)\sim \Uc(\Ac)\atop 
				s_h\sim P^\star(\cdot|s_{h-1},a_{h-1})
		}}\left[f_h^{n}(s_h,a_h)^2\right]\leq n\zeta_n, \quad\mbox{ where } \zeta_n : = \frac{\log\left(2|\Phi||\Psi|nH/\delta\right)}{n} .
	\end{align*}
	In addition, for $h=1$,
	\begin{align*}
		\sum_{\tau=0}^{n-1} \mathop{\Eb}_{a_1 \sim \Uc(\Ac)}\left[f_1^{n}(s_1,a_1)^2\right]\leq n\zeta_n. 
	\end{align*}
	
\end{lemma}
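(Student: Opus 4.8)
The plan is to read Lemma~\ref{lemma:MLE} as a \emph{well-specified} maximum-likelihood deviation bound under adaptively collected data, and to obtain it by instantiating the martingale-based MLE concentration inequality of \citet[Theorem~21]{NEURIPS2020_e894d787}. First I would fix a step $h\ge 2$ and pin down the data-generating mechanism of $\Dc_h^n$: the sample added at the iteration that uses the exploration policy $\pi_\tau$ (for $\tau\in\{0,\dots,n-1\}$) is produced by rolling $\pi_\tau$ up to $s_{h-1}$, drawing $a_{h-1}\sim\Uc(\Ac)$, transitioning $s_h\sim P_h^\star(\cdot\mid s_{h-1},a_{h-1})$, drawing $a_h\sim\Uc(\Ac)$, and observing the response $s_{h+1}\sim P_h^\star(\cdot\mid s_h,a_h)$. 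Thus the covariate $(s_h,a_h)$ follows the (history-measurable) law $\rho_{h,\tau}$ appearing in the statement, while $s_{h+1}$ is a fresh draw from the true conditional. By Assumption~\ref{assumption: realizability}, $P_h^\star=\langle\phi_h^\star,\mu_h^\star\rangle$ lies in the product class, so $(\hphi_h^{(n)},\hmu_h^{(n)})=\mathrm{MLE}(\Dc_h^n)$ computed in line~9 of \Cref{Algorithm: RAFFLE} is the well-specified MLE over a finite model class of cardinality $|\Phi||\Psi|$ with $n$ conditionally-generated samples.

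Second, I would invoke the sequential-data MLE bound: with probability at least $1-\delta'$, simultaneously for every $n\ge 1$, the running empirical Hellinger risk of the estimator obeys $\sum_{\tau=0}^{n-1}\Eb_{(s_h,a_h)\sim\rho_{h,\tau}}\big[d_{\mathrm H}^2\big(\hP_h^{(n)}(\cdot\mid s_h,a_h),P_h^\star(\cdot\mid s_h,a_h)\big)\big]\le c\log(|\Phi||\Psi|n/\delta')$ for an absolute constant $c$, where $d_{\mathrm H}$ is the Hellinger distance. Since $\|P-Q\|_{TV}^2\le 2\,d_{\mathrm H}^2(P,Q)$, the left side is at least $\tfrac12\sum_{\tau=0}^{n-1}\Eb[f_h^{(n)}(s_h,a_h)^2]$; choosing $\delta'=\delta/(2H)$ and union bounding over $h\in[H]$ then gives, on an event of probability at least $1-\delta/2$, the bound $\sum_{\tau=0}^{n-1}\Eb[f_h^{(n)}(s_h,a_h)^2]\le\log(2|\Phi||\Psi|nH/\delta)=n\zeta_n$, once the absolute constant is absorbed into the definition of $\zeta_n$ (equivalently, by using the sharp form of the inequality). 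The $h=1$ case is the degenerate instance of the same argument: nothing is rolled, the covariate is $(s_1,a_1)$ with $s_1$ the fixed initial state and $a_1\sim\Uc(\Ac)$, and the identical inequality yields the second display.

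The one genuinely delicate point is the adaptivity of the covariates: $\rho_{h,\tau}$ depends on $\pi_\tau$, which is itself a measurable function of every sample collected before it --- including earlier step-$h$ samples --- so the summands of $\Dc_h^n$ are neither independent nor a fixed design, and the straightforward finite-class union bound does not apply directly. The resolution, which is precisely what \citet[Theorem~21]{NEURIPS2020_e894d787} provides, is the likelihood-ratio supermartingale: for each model $m$ in the class, the running product of per-sample ratios $\sqrt{m(s_{h+1}\mid s_h,a_h)/P_h^\star(s_{h+1}\mid s_h,a_h)}$ together with an exponential Hellinger correction is a supermartingale with respect to the natural filtration generated by all data up to the current sample, so a maximal inequality and a union over the finite class give the anytime guarantee above. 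The remaining work is bookkeeping: checking that the per-step conditional expectation in their statement matches $\Eb_{(s_h,a_h)\sim\rho_{h,\tau}}[\cdot]$ here, that the filtration defining $\pi_\tau$ is the intended one, and that the logarithmic factors align so that the union over $h$ produces exactly $\zeta_n$.
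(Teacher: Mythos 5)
Your proposal is correct and follows essentially the same route as the paper: the paper proves this lemma by directly invoking Theorem 21 of \citet{NEURIPS2020_e894d787} (the anytime MLE guarantee for adaptively collected, well-specified finite model classes), combined with a union bound over $h\in[H]$ to produce the $H$ inside the logarithm defining $\zeta_n$. Your additional detail on the covariate law $\rho_{h,\tau}$, the Hellinger-to-TV conversion, and the likelihood-ratio supermartingale is exactly the content of that cited theorem, so there is nothing to flag beyond the constant-factor bookkeeping you already acknowledge.
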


Divide both sides of the result of lemma~\Cref{lemma:MLE} by $n$, and define $\Pi_n = \Uc(\pi_1,\ldots,\pi_{n-1})$, we have the following corollary which will be intensively used in the analysis.

\begin{corollary}\label{coro:MLE}
	Given $\delta\in(0,1)$, the following inequality holds for any $n,h\geq 2$ with probability at least $1-\delta/2$:
	\[ \mathop{\Eb}_{s_{h-1}\sim (P^\star,\Pi_n)\atop {a_{h-1},a_h\sim \Uc(\Ac)
			\atop s_h\sim P^\star(\cdot|s_{h-1},a_{h-1})}}\left[f_h^{n}(s_h,a_h)^2\right]\leq \zeta_n.\]
	In addition, for $h=1$,
	\begin{align*}
		\mathop{\Eb}_{a_1 \sim \Uc(\Ac)}\left[f_1^{n}(s_1,a_1)^2\right]\leq \zeta_n.
	\end{align*}
\end{corollary}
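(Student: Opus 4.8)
The plan is to obtain Corollary~\ref{coro:MLE} as an immediate rescaling of the MLE guarantee in Lemma~\ref{lemma:MLE}. Recall that $\Pi_n$ is the uniform mixture of the $n$ exploration policies $\pi_0,\pi_1,\dots,\pi_{n-1}$ produced up to iteration $n-1$ (see \Cref{lemma:high_prob_event}). Hence, for any bounded measurable function $g$ and any fixed rule for drawing the remaining randomness (here $(a_{h-1},a_h)\sim\Uc(\Ac)$ and $s_h\sim P^\star(\cdot|s_{h-1},a_{h-1})$, which is common to all mixture components), linearity of expectation over the mixture gives
\[
\mathop{\Eb}_{s_{h-1}\sim(P^\star,\Pi_n)}[g]=\frac1n\sum_{\tau=0}^{n-1}\mathop{\Eb}_{s_{h-1}\sim(P^\star,\pi_\tau)}[g].
\]

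First I would condition on the event of probability at least $1-\delta/2$ furnished by Lemma~\ref{lemma:MLE}, on which, for every $n$ and every $h\ge 2$,
\[
\sum_{\tau=0}^{n-1}\mathop{\Eb}_{s_{h-1}\sim(P^\star,\pi_\tau)}\!\big[f_h^{(n)}(s_h,a_h)^2\big]\le n\zeta_n,
\]
where the inner expectation also averages over $(a_{h-1},a_h)\sim\Uc(\Ac)$ and $s_h\sim P^\star(\cdot|s_{h-1},a_{h-1})$. Dividing both sides by $n$ and substituting the mixture identity above into the left-hand side yields exactly $\mathop{\Eb}_{s_{h-1}\sim(P^\star,\Pi_n)}[f_h^{(n)}(s_h,a_h)^2]\le\zeta_n$ on the same event, since $\zeta_n=(n\zeta_n)/n$. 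For the base case $h=1$ the summand $\mathop{\Eb}_{a_1\sim\Uc(\Ac)}[f_1^{(n)}(s_1,a_1)^2]$ does not depend on $\tau$ (the initial state is fixed), so the sum in Lemma~\ref{lemma:MLE} is just $n$ copies of it; dividing by $n$ gives $\mathop{\Eb}_{a_1\sim\Uc(\Ac)}[f_1^{(n)}(s_1,a_1)^2]\le\zeta_n$.

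There is no real obstacle here: the corollary is a bookkeeping restatement of Lemma~\ref{lemma:MLE}, and no new randomness is introduced, so the $1-\delta/2$ probability carries over verbatim. The single point that requires care is matching the normalization constant, i.e.\ confirming that $\Pi_n$ averages over the same $n$ indices ($\tau=0,\dots,n-1$) that appear in the sum on the right-hand side of Lemma~\ref{lemma:MLE}, so that the factor $1/n$ exactly cancels the $n\zeta_n$; once that indexing is pinned down the two displays are literally the same inequality.
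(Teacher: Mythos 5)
Your proposal is correct and matches the paper's own (one-line) derivation: the authors likewise obtain \Cref{coro:MLE} by dividing the bound in \Cref{lemma:MLE} by $n$ and identifying the resulting average over $\tau=0,\dots,n-1$ with the uniform mixture $\Pi_n$. Your remark about pinning down the indexing of $\Pi_n$ is well taken, since the definition in \Cref{lemma:high_prob_event} ($\Uc(\pi_0,\dots,\pi_{n-1})$, i.e.\ $n$ components) is the one consistent with the $n$-term sum in \Cref{lemma:MLE}.
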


The following lemma \citep{dann2017unifying} will be useful to measure the difference between two value functions under two MDPs and reward functions. We define $P_h V_{h+1}(s_h,a_h) = \Eb_{s\sim P_h(\cdot|s_h,a_h)}\left[V(s)\right]$ for shorthand.
\begin{lemma} \label{lemma: Simulation}{\rm(Simulation lemma).}
	Suppose $P_1$ and $P_2$ are two MDPs and $r_1$, $r_2$ are the corresponding reward functions. Given a policy $\pi$, we have,  
	\begin{align*}
		V_{h,P_1,r_1}^{\pi}(s_h) &- V_{h,P_2,r_2}^{\pi}(s_h)\\
		&= \sum_{\ph=h}^H  \mathop{\Eb}_{s_\ph \sim (P_2,\pi) \atop a_\ph \sim \pi}\left[r_1(s_\ph,a_\ph) - r_2(s_\ph,a_\ph) + (P_{1,\ph} - P_{2,\ph})V^{\pi}_{\ph+1,P_1,r}(s_\ph,a_\ph)|s_h\right]\\
		& = \sum_{\ph=h}^H  \mathop{\Eb}_{s_\ph \sim (P_1,\pi) \atop a_\ph \sim \pi}\left[r_1(s_\ph,a_\ph) - r_2(s_\ph,a_\ph) + (P_{1,\ph} - P_{2,\ph})V^{\pi}_{\ph+1,P_2,r}(s_\ph,a_\ph)|s_h\right].
	\end{align*}
\end{lemma}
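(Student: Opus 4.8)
The plan is to prove both identities by a standard one-step telescoping argument (backward induction on the step index $h$), using only the Bellman recursion. Define the pointwise value gap $\Delta_h(s_h) := V_{h,P_1,r_1}^{\pi}(s_h) - V_{h,P_2,r_2}^{\pi}(s_h)$, with base case $\Delta_{H+1}\equiv 0$. Throughout, the value function appearing inside each term carries the reward that matches its own kernel (so $V_{\ph+1,P_1,r_1}^{\pi}$ in the first identity and $V_{\ph+1,P_2,r_2}^{\pi}$ in the second; the bare ``$r$'' in the statement is understood this way).

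First I would expand $\Delta_h$ via the Bellman equation $V_{h,P,r}^{\pi}(s_h) = \Eb_{a_h\sim\pi}[r_h(s_h,a_h) + P_h V_{h+1,P,r}^{\pi}(s_h,a_h)]$ applied to both $(P_1,r_1)$ and $(P_2,r_2)$, giving
\[
\Delta_h(s_h) = \Eb_{a_h\sim\pi}\Big[ (r_{1,h}-r_{2,h})(s_h,a_h) + P_{1,h}V_{h+1,P_1,r_1}^{\pi}(s_h,a_h) - P_{2,h}V_{h+1,P_2,r_2}^{\pi}(s_h,a_h)\Big].
\]
The crux is the add-and-subtract step on the two transition terms. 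For the first identity I would write
\[
P_{1,h}V_{h+1,P_1,r_1}^{\pi} - P_{2,h}V_{h+1,P_2,r_2}^{\pi} = (P_{1,h}-P_{2,h})V_{h+1,P_1,r_1}^{\pi} + P_{2,h}\Delta_{h+1},
\]
which isolates the ``reward gap plus transition-mismatch against the $P_1$-value'' increment, plus a residual $P_{2,h}\Delta_{h+1}$ that pushes the gap one step forward along the $P_2$-dynamics. For the second identity I would instead add and subtract $P_{1,h}V_{h+1,P_2,r_2}^{\pi}$, producing the mismatch term evaluated against the $P_2$-value and a residual $P_{1,h}\Delta_{h+1}$ advancing along the $P_1$-dynamics.

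Then I would unroll the recursion $\Delta_h(s_h) = (\text{per-step increment at }h) + \Eb_{a_h\sim\pi,\, s_{h+1}\sim P_{2,h}}[\Delta_{h+1}(s_{h+1})\,|\,s_h]$ downward from $H+1$. By the inductive hypothesis $\Delta_{H+1}\equiv 0$, the residuals telescope into a single expectation over trajectories rolled out under $(P_2,\pi)$, yielding exactly $\sum_{\ph=h}^{H}\Eb_{s_\ph\sim(P_2,\pi),\,a_\ph\sim\pi}[\,\cdot\,|s_h]$; the symmetric unrolling with the $P_{1,h}$ residual gives the $(P_1,\pi)$ form.

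I expect no genuine conceptual obstacle: the boundedness of the value functions (each is in $[0,1]$ under the normalization $\sum_h r_h\le 1$) guarantees all expectations are finite, and the non-stationary transitions and rewards require only that each Bellman step be applied at the correct index. The one point demanding care is the bookkeeping of which kernel generates the rollout measure versus which value function sits inside the mismatch term, since the two lines deliberately roll out under different MDPs; keeping these straight is precisely what distinguishes the two symmetric identities, and the backward induction from $\Delta_{H+1}\equiv 0$ closes both cleanly.
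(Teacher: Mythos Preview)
Your proposal is correct and is precisely the standard telescoping/backward-induction proof of the simulation lemma. The paper itself does not supply a proof of this statement; it cites it as a known result from \citet{dann2017unifying}, so there is no paper-specific argument to compare against.
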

The following lemma is a standard inequality in regret analysis for linear models in reinforcement learning (see Lemma G.2 in \citet{NEURIPS2020_e894d787} and Lemma 10 in \citet{uehara2021representation}).
\begin{lemma} \label{lemma: Elliptical_potential} {\rm (Elliptical potential lemma).}
	Consider a sequence of $d \times d$ positive semidefinite matrices $X_1, \dots, X_N$ with ${\rm tr}(X_n) \leq 1$ for all $n \in [N]$. Define $M_0=\lambda_0 I$ and $M_n=M_{n-1}+X_n$. Then
	\begin{equation*}
		\sum_{n=1}^N {\rm tr}\left(X_nM_{n-1}^{-1}\right) \leq 2\log \det(M_N)- 2\log \det(M_0) \leq 2d\log\left(1+\frac{N}{d\lambda_0}\right).
	\end{equation*}
\end{lemma}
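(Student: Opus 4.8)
The plan is to reproduce the standard elliptical-potential argument, which runs through the matrix-determinant identity and a telescoping sum; since this is a textbook lemma (cf.\ Lemma~G.2 of \citet{NEURIPS2020_e894d787} and Lemma~10 of \citet{uehara2021representation}), I will just lay out the three steps and flag the single inequality that needs a mild side condition.

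First I would treat one update in isolation. Because $X_n\succeq 0$ we have $M_{n}\succeq M_{n-1}\succeq\cdots\succeq M_0=\lambda_0 I\succ 0$, so I can factor $M_n=M_{n-1}^{1/2}(I+A_n)M_{n-1}^{1/2}$ with $A_n:=M_{n-1}^{-1/2}X_n M_{n-1}^{-1/2}\succeq 0$. Taking determinants gives $\log\det M_n-\log\det M_{n-1}=\log\det(I+A_n)=\sum_{i=1}^d\log(1+\lambda_i(A_n))$, where $\lambda_i(A_n)\ge 0$ denote the eigenvalues of $A_n$; simultaneously, by cyclicity of the trace, $\Tr(X_n M_{n-1}^{-1})=\Tr(A_n)=\sum_{i=1}^d\lambda_i(A_n)$.

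Next I would combine and telescope. Each eigenvalue obeys $\lambda_i(A_n)\le\Tr(A_n)=\Tr(X_n M_{n-1}^{-1})\le\Tr(X_n)/\lambda_{\min}(M_{n-1})\le 1/\lambda_0$, which is at most $1$ for the regularizer used in RAFFLE (where $\lambda_0=\lambda_n=\Theta(d\log(\cdot))\ge 1$). Hence, using the elementary bound $x\le 2\log(1+x)$ valid on $[0,1]$ and summing over $i$, $\Tr(X_n M_{n-1}^{-1})\le 2\bigl(\log\det M_n-\log\det M_{n-1}\bigr)$; summing over $n\in[N]$ telescopes to the first claimed inequality $\sum_{n=1}^N\Tr(X_n M_{n-1}^{-1})\le 2\log\det M_N-2\log\det M_0$. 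For the second inequality I would bound $\Tr(M_N)=d\lambda_0+\sum_{n=1}^N\Tr(X_n)\le d\lambda_0+N$, apply AM--GM to get $\det M_N\le(\Tr(M_N)/d)^d\le(\lambda_0+N/d)^d$, note $\det M_0=\lambda_0^d$, and conclude $2(\log\det M_N-\log\det M_0)\le 2d\log(1+N/(d\lambda_0))$.

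The argument has no real obstacle---it is the classical elliptical-potential lemma---but the one point worth stating carefully is the eigenvalue bound $\lambda_i(A_n)\le 1$ needed to apply $x\le 2\log(1+x)$: it relies on $\lambda_0\ge 1$, which holds everywhere the lemma is invoked in the paper. Without that restriction one would carry the constant $1/\log(1+1/\lambda_0)$ in place of the factor $2$, which is not needed for the applications.
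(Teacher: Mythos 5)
Your proof is correct and is exactly the standard determinant--telescoping argument behind the sources the paper cites (Lemma G.2 of FLAMBE and Lemma 10 of Uehara et al.); the paper itself states the lemma without proof, so there is nothing different to compare against. You are also right to flag that the clean factor of $2$ via $x\le 2\log(1+x)$ requires $\lambda_i(A_n)\le 1$, hence $\lambda_0\ge 1$ — a hypothesis missing from the lemma statement but satisfied wherever it is invoked, since $\lambda_n=\beta_3 d\log(2nH|\Phi|/\delta)\ge 1$.
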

If we choose any subset of the set $\{X_nM_{n-1}^{-1}\}_{n=1}^{N}$, we can still get a sublinear summation as follows.

\end{document}